\theoremstyle{plain}
\newtheorem{theorem}{Theorem}
\newtheorem{proposition}{Proposition}[section]
\newtheorem{lemma}{Lemma}[section]
\newtheorem{corollary}{Corollary}
\theoremstyle{definition}
\newtheorem{definition}{Definition}[section]
\newtheorem{assumption}{Assumption}
\newtheorem{remark}{Remark}
\newcommand{\ours}{{\fontfamily{qpl}\selectfont OptEx}}
\newcommand{\foopt}{{\fontfamily{qpl}\selectfont FO-OPT}}
\newcommand{\van}{{\fontfamily{qpl}\selectfont \texttt{Vanilla}}}
\newcommand{\targ}{{\fontfamily{qpl}\selectfont \texttt{Target}}}
\newcommand{\labeltext}[3][]{%
    \@bsphack%
    \csname phantomsection\endcsname
    \def\tst{#1}%
    \def\labelmarkup{\emph}
    \def\refmarkup{}%
    \ifx\tst\empty\def\@currentlabel{\refmarkup{#2}}{\label{#3}}%
    \else\def\@currentlabel{\refmarkup{#1}}{\label{#3}}\fi%
    \@esphack%
    \labelmarkup{#2}
}
\def\blankfootnote{\xdef\@thefnmark{}\@footnotetext}
\def\1{\bm{1}}
\def\eps{{\epsilon}}
\def\rz{{\textnormal{z}}}
\def\rX{{\textnormal{X}}}
\def\rZ{{\textnormal{Z}}}
\def\rmA{{\mathbf{A}}}
\def\rmB{{\mathbf{B}}}
\def\rmC{{\mathbf{C}}}
\def\rmD{{\mathbf{D}}}
\def\rmG{{\mathbf{G}}}
\def\rmI{{\mathbf{I}}}
\def\rmK{{\mathbf{K}}}
\def\rmU{{\mathbf{U}}}
\def\rmV{{\mathbf{V}}}
\def\rmPhi{{\mathbf{\Phi}}}
\def\vzero{{\bm{0}}}
\def\vone{{\bm{1}}}
\def\vmu{{\bm{\mu}}}
\def\vtheta{{\bm{\theta}}}
\def\vSigma{{\bm{\Sigma}}}
\def\vzeta{{\bm{\zeta}}}
\def\vphi{{\bm{\phi}}}
\def\ve{{\bm{e}}}
\def\vk{{\bm{k}}}
\def\vu{{\bm{u}}}
\def\vv{{\bm{v}}}
\DeclareMathAlphabet{\mathsfit}{\encodingdefault}{\sfdefault}{m}{sl}
\SetMathAlphabet{\mathsfit}{bold}{\encodingdefault}{\sfdefault}{bx}{n}
\def\gG{{\mathcal{G}}}
\def\gN{{\mathcal{N}}}
\def\gO{{\mathcal{O}}}
\def\sP{{\mathbb{P}}}
\def\sR{{\mathbb{R}}}
\newcommand{\E}{\mathbb{E}}
\newcommand{\var}{\mathrm{var}}
\newcommand{\vect}{\mathrm{vec}}
\DeclareMathOperator*{\argmin}{arg\,min}
\title{\ours{}: Expediting First-Order Optimization with Approximately Parallelized Iterations}
\author{%
  Yao Shu$^{\# \dagger}$, Jiongfeng Fang$^{\ddagger}$, Ying Tiffany He$^{\ddagger}$, Fei Richard Yu$^{\ddagger\S}$\\
$^{\dagger}$Guangdong Lab of AI and Digital Economy (SZ), China \\
$^\ddagger$College of Computer Science and Software Engineering, Shenzhen University, China \\
$^{\S}$School of Information Technology, Carleton University, Canada
}
\begin{document}

\maketitle

\begin{abstract}
First-order optimization (FOO) algorithms are pivotal in numerous computational domains, such as reinforcement learning and deep learning. However, their application to complex tasks often entails significant optimization inefficiency due to their need of many sequential iterations for convergence.
In response, we introduce \textit{first-order \underline{opt}imization \underline{ex}pedited with approximately parallelized iterations} (\ours{}), the first general framework that enhances the optimization efficiency of FOO by leveraging parallel computing to directly mitigate its requirement of many sequential iterations for convergence. To achieve this, \ours{} utilizes a kernelized gradient estimation that is based on the history of evaluated gradients to predict the gradients required by the next few sequential iterations in FOO, which helps to break the inherent iterative dependency and hence enables the approximate parallelization of iterations in FOO.
We further establish theoretical guarantees for the estimation error of our kernelized gradient estimation and the iteration complexity of SGD-based \ours{}, confirming that the estimation error diminishes to zero as the history of gradients accumulates and that our SGD-based \ours{} enjoys an effective acceleration rate of $\Theta(\sqrt{N})$ over standard SGD given parallelism of $N$, in terms of the sequential iterations required for convergence. Finally, we provide extensive empirical studies, including synthetic functions, reinforcement learning tasks, and neural network training on various datasets, to underscore the substantial efficiency improvements achieved by \ours{} in practice. Our implementation is available at \textcolor{red}{\url{https://github.com/youyve/OptEx}}.
\end{abstract}

\section{Introduction}
\blankfootnote{$\#$ Correspondence to: Yao Shu <shuyao@gml.ac.cn>}

First-order optimization (FOO) algorithms, such as stochastic gradient descent (SGD) \cite{robbins1951stochastic}, Nesterov Accelerated Gradient (NGA) \cite{nesterov1983method}, AdaGrad \cite{duchi2011adaptive}, Adam \cite{kingma2014adam} etc., have already been the cornerstone of many computational disciplines, driving advancements in areas ranging from reinforcement learning \cite{ppo} to machine learning \cite{lan2020first}. These algorithms, which are widely known for their straightforward form of iterative gradient-based updates, are fundamental in solving both simple and intricate optimization problems. However, their applications usually encounter substantial optimization inefficiency, especially when addressing complex functions that not only are \textit{expensive in evaluating their function values and gradients} but also necessitate \textit{a large number of sequential iterations to converge} in practice, e.g., deep reinforcement learning \cite{drl} and neural network training \cite{KrizhevskySH12}.

To this end, parallel computing has been widely used in the literature to considerably enhance the \textit{optimization (e.g., time) efficiency} of FOO by reducing the evaluation cost of function and gradient \textit{per iteration} in FOO
\cite{AssranAFJR20}. For instance, in the field of neural network training, techniques that are based on parallel computing, e.g., data parallelism \cite{KrizhevskySH12, recht2011hogwild, MnihBMGLHSK16, YuYZ19}, model parallelism \cite{DeanCMCDLMRSTYN12}, and pipeline parallelism \cite{harlap2018pipedream, HuangCBFCCLNLWC19}, have been employed to reduce the evaluation time of loss function and parameter gradient by processing multiple input samples and network components concurrently. However, to the best of our knowledge, few efforts have been devoted to leveraging parallel computing to reduce the \textit{number of sequential iterations} required for convergence to mitigate the optimization inefficiency in FOO. Different from the methods of reducing the evaluation time per iteration during optimization, which requires specialized human efforts in a specific domain (e.g., neural network training), the reduction of sequential iterations is likely to be more general since no such specialized domain efforts are required and thus shall enjoy a wider application in practice. This underscores the need to explore the potential of parallelizing sequential iterations in standard FOO.

However, the inherent iterative dependency in FOO where the output of each iteration servers as the input of the next iteration, poses a significant barrier to independent and concurrent iteration execution, thereby making it nearly impossible to realize iteration parallelism within standard FOO.
To this end, we develop a novel framework called \textit{first-order \underline{opt}imization \underline{ex}pedited with approximately parallelized iterations} (\ours{}) that is capable of bypassing the challenge of inherent iterative dependency in standard FOO and therefore make parallelized iterations in FOO possible. Specifically, our framework begins with a novel kernelized gradient estimation strategy, which uses the history of gradients during optimization to predict the gradients for any input within the domain such that these estimated gradients can be used in standard FOO algorithms to determine the inputs for the next few iterations. We further introduce the techniques of separable kernel function and local history of gradients to enhance the computational efficiency of this gradient estimation (Sec. \ref{sec:grad-est}). We then apply standard FOO algorithms with this kernelized gradient estimation to determine the inputs for the next $N$ sequential iterations efficiently (namely proxy updates), aiming to approximate the ground-truth sequential updates and bypass the iteration dependency in standard FOO (Sec. \ref{sec:proxy-update}). Lastly, we complete our approximately parallelized iterations for standard FOO by leveraging parallel computing with parallelism of $N$ to concurrently execute standard FOO algorithms over these $N$ inputs obtained from our proxy updates using the ground-truth gradients (Sec. \ref{sec:parallel-iter}).

Apart from proposing our innovative \ours{} framework, we further establish rigorous theoretical guarantees and extensive empirical studies underpinning its efficacy. 
Specifically, we give a theoretical bound for the estimation error of our kernelized gradient estimation. Remarkably, this error approaches zero asymptotically as the number of historical gradients increases, ranging across a broad spectrum of kernel functions. This suggests that our kernelized gradient estimation can facilitate effective proxy updates to help parallelize sequential iterations in FOO (Sec.\ref{sec:theory-grad}). Building on this, we delineate both upper and lower bounds for the sequential iteration complexity of our SGD-based \ours{}, showing that our SGD-based \ours{} is able to reduce the sequential iteration complexity of standard FOO algorithms at a rate of $\Theta(\sqrt{N})$ with parallelism of $N$ (Sec.\ref{sec:theory-complexity}). Finally, through extensive empirical studies, including the optimization of synthetic functions, reinforcement learning tasks, and neural network training on both image and text datasets, we demonstrate the consistent advantages of our \ours{} in expediting existing FOO algorithms (Sec. \ref{sec:exps}).

To summarize, our contribution to this work includes:
\begin{itemize}[topsep=0pt,leftmargin=6mm,itemsep=0pt]
    \item To the best of our knowledge, we are \textit{the first to develop a general framework} (i.e., \ours{}) that can leverage parallel computing to approximately parallelize the sequential iterations in FOO, thereby considerably reducing the sequential iteration complexity of FOO algorithms.
    \item We provide \textit{the first upper and lower iteration complexity bound} for SGD-based \ours{}, which gives an effective acceleration rate of $\Theta(\sqrt{N})$ with parallelism of $N$.
    \item We conduct extensive empirical studies, including the optimization of synthetic function, reinforcement learning tasks, and neural network training on both image and text datasets, to support the efficacy of our \ours{} framework.
\end{itemize}

\section{Related Work}\label{sec:related-work}

\paragraph{Reduction of Iteration Complexity.} 

In the literature, various techniques have been developed to enhance the optimization efficiency of FOO by improving their sequential iteration complexity. For example, variance reduction strategies \cite{Johnson013, ZhouSC18, SebbouhGJBG19}  have been proposed to accelerate stochastic optimization by effectively reducing the gradient variance and therefore aligning the iteration complexity of SGD with that of gradient descent (GD) in expectation. These strategies usually yield significant improvements in high-variance problems whereas their compelling performance is hard to extend to low-variance scenarios and deterministic contexts. Meanwhile, adaptive gradient methods, e.g., AdaGrad \cite{duchi2011adaptive}, Adam \cite{kingma2014adam}, and AdaBelief \cite{ZhuangTDTDPD20}, have been introduced to employ an adaptive learning rate for a better-performing optimization where fewer iterations are required for convergence. Furthermore, acceleration techniques like the Nesterov method \cite{nesterov1983method} and momentum-based updates \cite{0003GY20} have also been proven to be capable of reducing the sequential iteration complexity for GD and SGD efficiently. \textit{Orthogonal to these established methodologies, our paper introduces parallel computing as a distinct and innovative strategy to further decrease the sequential iteration complexity of FOO. Of note, such an approach not only stands independently but also offers potential for synergistic integration with existing methods, promising enhanced optimization outcomes.}

\paragraph{Reduction of Time Complexity Per Iteration using Parallel Computing.} 
In the realm of enhancing the computational efficiency of FOO, parallel computing has emerged as a rescue by reducing the time complexity per iteration in FOO. Particularly in the field of neural network training, data parallelism \cite{KrizhevskySH12, recht2011hogwild, MnihBMGLHSK16, YuYZ19} has been introduced to evaluate the gradients of model parameters w.r.t mini-batch input samples simultaneously. In addition to data parallelism, model parallelism \cite{DeanCMCDLMRSTYN12} has been developed to process various neural network components concurrently. Furthermore, pipeline parallelism \cite{harlap2018pipedream, HuangCBFCCLNLWC19} divides the neural network into stages and assigns each stage to a different device, allowing different stages of the computation to be executed in parallel across the pipeline. 
However, the tailored nature of these methods constrains their application to wider contexts. 
\textit{Contradictory to these case-specified solutions, this paper proposes a general framework that can leverage parallel computing to enhance the optimization efficiency of FOO in wide practical applications.}

\section{Problem Setup}\label{sec:setting}
In this paper, we aim to enhance the optimization efficiency of the following stochastic minimization problem by leveraging parallel computing with parallelism of $N$:
\begin{equation}\label{eq:problem}
    \min_{\vtheta \in \sR^d} F(\vtheta) \triangleq \E \left[f(\vtheta)\right] \ .
\end{equation}
Here, $\nabla f(\vtheta)$ is assumed to follow a specific Gaussian distribution, i.e., $\nabla f(\vtheta) \sim \gN(\nabla F(\vtheta), \sigma^2 \rmI)$ for any $\vtheta \in \sR$, which has already been widely used in the literature \cite{LuoWY0Z18, HeLT19, abs-2109-09833}. Besides, we adopt a common assumption that $\nabla F$ is sampled from a Gaussian process, i.e., $\nabla F \sim \mathcal{GP}(\vzero, \rmK(\cdot,\cdot))$ \cite{RasmussenW06, zord, shu2023federated}. Of note, \eqref{eq:problem} has found extensive applications in practice, e.g., neural network training \cite{GoodBengCour16} and reinforcement learning \cite{Sutton2018}. Importantly, although our primary focus is on this stochastic optimization, our method can also be applied to deterministic optimization (evidenced in Sec. \ref{sec:exp:syn}).

Standard FOO algorithms commonly optimize \eqref{eq:problem} in an iterative and sequential manner:
\begin{equation}
    \vtheta_{t+1} = \text{\foopt{}}(\vtheta_t, \nabla f(\vtheta_t))
\end{equation}
where $t$ is the iteration number.
Ideally, if parallel computing can be used to parallelize the sequential iterations in FOO (i.e., to execute several sequential iterations simultaneously), it will be able to lead to a noticeable improvement in its optimization efficiency since fewer \textit{sequential} iterations will be required for convergence.
Unfortunately, there is an inherent iterative dependency in standard FOO, that is, the output of each iteration $t$ (e.g., $\vtheta_t$) is the input of the next iteration $t+1$. Such an iterative and sequential process makes it nearly impossible to attain $\vtheta_{t}$ and $\vtheta_{t+1}$ concurrently, and therefore parallelize the iterations for established FOO algorithms. 

\section{The \ours{} Framework}

To this end, we introduce the first general framework in Algo. \ref{alg:optex} with a detailed illustration in Fig. \ref{fig:illustration}, namely \textit{first-order \underline{opt}imization \underline{ex}pedited with approximately parallelized iterations} (\ours{}), to overcome the aforementioned inherent iterative dependency in FOO and facilitate the realization of parallelized iterations therein. To achieve this, we first propose a kernelized gradient estimation with the technique of separable kernel function and local history of gradient to efficiently and effectively estimate the gradient at any input in the domain (Sec.~\ref{sec:grad-est}). We then follow standard FOO algorithms with this kernelized gradient estimation to approximate the inputs for the next $N$ sequential iterations to be parallelized (Sec.\ref{sec:proxy-update}), aiming to overcome the inherent iterative dependency in FOO. Lastly, we finish our approximately parallelized iterations by leveraging parallel computing to run standard FOO algorithms on these $N$ inputs concurrently using the ground-truth gradients (Sec.~\ref{sec:parallel-iter}).

\begin{table}[t]
\centering
\hspace{-4mm}
\begin{minipage}{0.5\textwidth}
\centering
\begin{algorithm}[H]
\DontPrintSemicolon
\caption{\ours{}}\label{alg:optex}
\KwIn{\foopt{}, $k(\cdot, \cdot)$, $\vtheta_0$, $T$, $N$, $\gG=\varnothing$}

\For{sequential iteration $t \in [T]$}{

Initialization: $\vtheta_{t,0}\leftarrow \vtheta_{t-1}$ 


Update $\vmu_t(\vtheta)$ using \eqref{eq:independent-posterior} with $\gG$


\For{proxy step $s \in [N-1]$}{
     $\vtheta_{t,s} \leftarrow \text{\foopt{}}(\vtheta_{t,s-1}, \textcolor{YellowGreen}{\vmu_t(\vtheta_{t,s-1})})$
}


\For{process $i \in [N]$ \textcolor{OrangeRed}{in parallel}}{

    Sample $f$ to evaluate $\nabla f(\vtheta_{t,i-1})$
    
    $\vtheta^{(i)}_{t} \leftarrow \text{\foopt{}}(\vtheta_{t,i-1}, \textcolor{RoyalBlue}{\nabla f(\vtheta_{t,i-1})})$

    $\gG \leftarrow \gG \cup \{(\vtheta_{t,i-1}, \nabla f(\vtheta_{t,i-1}))\}$
}

$\vtheta_t \leftarrow \vtheta_t^{(N)}$

}
\end{algorithm}
\end{minipage}
\begin{minipage}{0.515\textwidth}
\centering
\includegraphics[width=\textwidth]{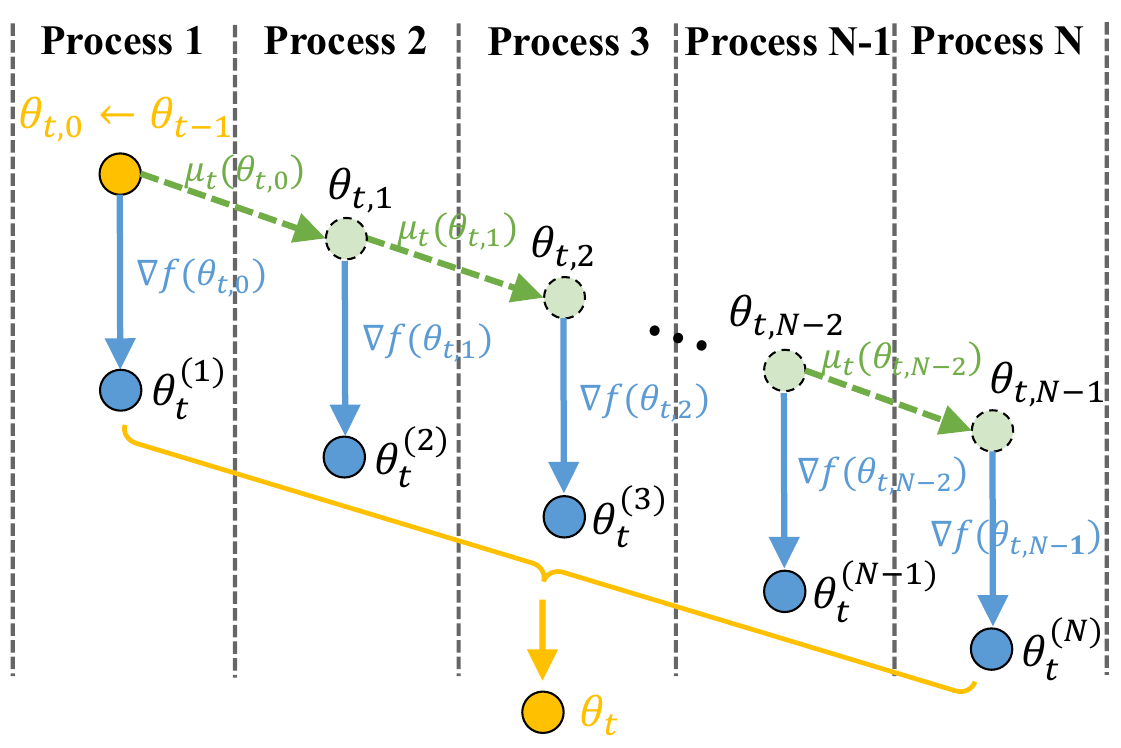}
\captionsetup{width=\textwidth}
\captionof{figure}{An illustration of \ours{} at iteration $t$.
}
\label{fig:illustration}
\end{minipage}
\vspace{-5mm}
\end{table}

\subsection{Kernelized Gradient Estimation}\label{sec:grad-est}

As mentioned in our Sec. \ref{sec:setting}, $\nabla F$ is assumed to be sampled from a Gaussian process, i.e., $\nabla F \sim \mathcal{GP}(\vzero, \rmK(\cdot,\cdot))$ with kernel function $\rmK$. Then, for every sequential iteration $t$ of Algo. \ref{alg:optex}, conditioned on the history of gradients during optimization $\gG \triangleq \{(\vtheta_{\tau}, \nabla f(\vtheta_{\tau})\}_{\tau=1}^{N(t-1)}$ \footnote{We slightly abuse the notation $f$ to denote the different functions that are randomly sampled per iteration and $(\vtheta_{\tau}, \nabla f(\vtheta_{\tau})$ to denote a historical evaluation till sequential iteration $t-1$ with parallelism of $N$.}
, $\nabla F$ then follows the posterior Gaussian process: $\nabla F \sim \mathcal{GP}\left(\vmu_{t}(\cdot), \vSigma_{t}^2(\cdot, \cdot)\right)$ with the mean function $\vmu_{t}(\cdot)$ and the covariance function $\vSigma_{t}^2(\cdot,\cdot)$ defined as below \cite{RasmussenW06}:
\begin{equation}
\begin{aligned}
    \vmu_{t}(\vtheta) &\triangleq \rmV^{\top}_{t}(\vtheta)\left(\rmU_{t}+\sigma^2\rmI\right)^{-1} \vect(\rmG^{\top}_t) \ , \\
    \vSigma_{t}^2(\vtheta, \vtheta') &\triangleq \rmK\left(\vtheta, \vtheta'\right)-\rmV^{\top}_{t}(\vtheta)\left(\rmU_{t}+\sigma^{2} \rmI\right)^{-1} \rmV_{t}\left(\vtheta'\right)
\end{aligned} \label{eq:posterior}
\end{equation}
where $\vect(\cdot)$ vectorizes a matrix into a column vector, $\rmG_t \triangleq [\nabla f(\vtheta_{\tau})]_{\tau=1}^{N(t-1)}$ is a $d \times N(t-1)$-dimensional matrix,  $\rmV^{\top}_{t}(\vtheta) \triangleq \left[\rmK(\vtheta, \vtheta_{\tau})\right]_{\tau=1}^{N(t-1)}$ is a $d\times N(t-1)d$-dimensional matrices, and $\rmU_{t} \triangleq \left[\rmK(\vtheta_{\tau}, \vtheta_{\tau'})\right]_{\tau,\tau'=1}^{N(t-1)}$ is a $N(t-1)d\times N(t-1)d$-dimensional matrices. We therefore propose to use $\vmu_{t}(\cdot)$ to estimate the gradient at \textit{any} input $\vtheta \in \sR^d$, that is, 
\begin{equation}
    \nabla F(\vtheta) \approx \mu_t(\vtheta) \ ,
\end{equation}
and covariance $\vSigma^2(\vtheta) \triangleq \vSigma^2(\vtheta, \vtheta)$ to measure the quality of this gradient estimation in a principled way, which will be further theoretically supported in our Sec.~\ref{sec:theory-grad}.

However, for every sequential iteration $t$ of Algo.~\ref{alg:optex} with \eqref{eq:posterior}, it will incur a computational complexity of $\gO(N^3(t-1)^3d^3)$, along with a space complexity of $\gO(N(t-1)d)$. Practically, this presents a significant challenge in scenarios with a large input dimension $d$ or requiring a substantial number $T$ of sequential iterations for convergence, such as in neural network training \cite{KrizhevskySH12}. To mitigate these complexity issues, we introduce two techniques: the separable kernel function and the local history of gradients, to reduce both the computational and space complexities associated with our kernelized gradient estimation, thereby enhancing its efficiency and practical applicability.

\paragraph{Separable Kernel Function.} Let $\rmK(\cdot,\cdot) = k(\cdot, \cdot)\,\rmI$ where $k(\cdot, \cdot)$ produces a scalar value and $\rmI$ is a $d\times d$ identity matrix, and define the $N(t-1)$-dimensional vector $\vk_t^{\top}(\vtheta) \triangleq [k(\vtheta, \vtheta_{\tau})]_{\tau=1}^{N(t-1)}$, and $N(t-1)\times N(t-1)$-dimensional matrix $\rmK_t \triangleq [k(\vtheta_{\tau}, \vtheta_{\tau'})]_{\tau=\tau'=1}^{N(t-1)}$, we can prove that the Gaussian process in \eqref{eq:posterior} can be simplified as the Gaussian process in Prop. \ref{prop:independent-posterior} (line 3 of Algo. \ref{alg:optex}).
\begin{proposition}\label{prop:independent-posterior}
Let $\rmK(\cdot,\cdot) = k(\cdot, \cdot)\,\rmI$, the posterior mean and covariance in \eqref{eq:posterior} become
\begin{equation*}\label{eq:independent-posterior}
\begin{aligned}
    \vmu_{t}(\vtheta) &= \left[\left(\vk_t^{\top}(\vtheta)\left(\rmK_t +\sigma^2\rmI\right)^{-1}\right)\rmG_t\right]^{\top} \ , \\
    \vSigma_{t}^2(\vtheta, \vtheta') &= \left(k(\vtheta, \vtheta') - \vk_t^{\top}(\vtheta)\left(\rmK_t +\sigma^2\rmI\right)^{-1}\vk_t(\vtheta')\right) \rmI \ .
\end{aligned}
\end{equation*}
\end{proposition}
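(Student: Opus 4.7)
The strategy is to exploit the Kronecker product structure induced by the separability $\rmK(\cdot,\cdot) = k(\cdot,\cdot)\,\rmI$. With the block indexing used in the paper (iteration index outer, output coordinate inner), the assumption $\rmK(\vtheta, \vtheta') = k(\vtheta,\vtheta')\,\rmI$ immediately rewrites the big matrices of \eqref{eq:posterior} as
\begin{equation*}
\rmU_t = \rmK_t \otimes \rmI, \qquad \rmV_t(\vtheta) = \vk_t(\vtheta) \otimes \rmI ,
\end{equation*}
and $\sigma^2 \rmI_{N(t-1)d} = \sigma^2(\rmI_{N(t-1)} \otimes \rmI)$. Combining these via the identity $A\otimes B + C\otimes B = (A+C)\otimes B$ gives $\rmU_t + \sigma^2\rmI = (\rmK_t + \sigma^2\rmI)\otimes \rmI$, whose inverse by $(A\otimes B)^{-1} = A^{-1}\otimes B^{-1}$ is $(\rmK_t + \sigma^2\rmI)^{-1} \otimes \rmI$.

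Next I would substitute these Kronecker forms into both formulas of \eqref{eq:posterior} and simplify using the mixed-product identity $(A\otimes B)(C\otimes D) = AC \otimes BD$. For the mean this yields
\begin{equation*}
\rmV_t^\top(\vtheta)(\rmU_t+\sigma^2\rmI)^{-1} = \bigl(\vk_t^\top(\vtheta)(\rmK_t + \sigma^2\rmI)^{-1}\bigr) \otimes \rmI .
\end{equation*}
Applying this to $\vect(\rmG_t^\top)$ through the vec trick $(A\otimes B)\vect(X) = \vect(BXA^\top)$, with $A = \vk_t^\top(\vtheta)(\rmK_t+\sigma^2\rmI)^{-1}$, $B = \rmI$, and $X$ the appropriately shaped gradient matrix, collapses the large vectorized product back into an ordinary matrix product and produces the claimed closed form for $\vmu_t(\vtheta)$ (up to the outer transpose that arises from the row-vector shape of $A$). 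For the covariance, the same substitution gives
\begin{equation*}
\rmV_t^\top(\vtheta)(\rmU_t+\sigma^2\rmI)^{-1}\rmV_t(\vtheta') = \bigl(\vk_t^\top(\vtheta)(\rmK_t+\sigma^2\rmI)^{-1}\vk_t(\vtheta')\bigr)\otimes \rmI ,
\end{equation*}
which is a scalar times $\rmI$, so subtracting it from $k(\vtheta,\vtheta')\,\rmI$ produces the desired scalar-times-identity expression.

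The algebra is essentially routine once the Kronecker reformulation is in place, so the main obstacle is bookkeeping: I must be careful that the ordering of the blocks in $\rmU_t$ and $\rmV_t(\vtheta)$ agrees with the vectorization convention used in $\vect(\rmG_t^\top)$, since a mismatch would flip the two Kronecker factors and change where $\rmI$ appears. A convenient sanity check is to verify the identity componentwise: with a separable kernel the $d$ coordinates of $\nabla F$ are a priori independent scalar Gaussian processes sharing the same kernel $k$, so each coordinate admits the usual scalar GP posterior driven by the corresponding row of $\rmG_t$, and stacking these $d$ parallel regressions reproduces exactly the matrix formulas asserted in the proposition.
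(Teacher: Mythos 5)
Your proposal is correct and follows essentially the same route as the paper's own proof: rewrite $\rmU_t$ and $\rmV_t(\vtheta)$ as Kronecker products $\rmK_t \otimes \rmI$ and $\vk_t(\vtheta)\otimes\rmI$, then apply bilinearity, the Kronecker inverse, the mixed-product rule, and the identity $(A\otimes B)\vect(X)=\vect(BXA^{\top})$ to collapse the mean and covariance into the claimed forms. Your closing componentwise sanity check (viewing the separable kernel as $d$ independent scalar GP regressions) is a nice consistency observation but not needed beyond the Kronecker algebra the paper already uses.
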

\vspace{-2mm}
Its proof is in Appx. \ref{app:prop:ind}. Prop. \ref{prop:independent-posterior} shows that with a separable kernel function $\rmK(\cdot,\cdot) = k(\cdot, \cdot)\,\rmI$, the multi-output Gaussian process in a $d$-dimensional space can be effectively decoupled into $d$ independent single-output Gaussian processes. Each of these processes results from the same scalar kernel function $k$, leading to a uniform posterior form shared by all these processes, i.e., the expression $\vk_t^{\top}(\vtheta)\left(\rmK_t +\sigma^2\rmI\right)^{-1}$ in $\vmu_{t}(\vtheta)$ and $k(\vtheta, \vtheta') - \vk_t^{\top}(\vtheta)\left(\rmK_t +\sigma^2\rmI\right)^{-1}\vk_t(\vtheta')$ in $\vSigma_{t}^2(\vtheta, \vtheta')$. This thus considerably diminishes the computational complexity, now quantified as $\gO(N^3(t-1)^3 + N(t-1)d)$, resulting in a more computationally efficient gradient estimation in practice.

\paragraph{Local History of Gradients.} 
Conventional FOO algorithms predominantly operate by optimizing within a localized region neighboring the initial input $\vtheta_0$ \cite{opt4ml}. This therefore indicates that our Algo. \ref{alg:optex} only requires precise gradient estimation within a local region. In this context, the use of a local gradient history is posited as sufficiently informative for effective kernelized gradient estimation, which can be supported by the theoretical results in \cite{lederer2019posterior} and the empirical evidence in our Sec. \ref{sec:exps}. As a result, rather than relying on a complete gradient history, we propose to use a localized gradient history of size $T_0$ that neighbors $\vtheta$ to estimate the gradient at $\vtheta$. This strategic modification results in a substantial reduction of computational complexity to $\gO(T_0^3 + T_0d)$ as well as a corresponding decrease in space complexity to $\gO(T_0d)$, which is especially beneficial in the situations where $T_0$ is considerably smaller than $N(t-1)$ for $t \in [T]$.

\subsection{Multi-Step Proxy Updates}\label{sec:proxy-update}

The ability of our kernelized gradient estimation to provide gradient estimation at any input $\vtheta$ then enables the application of a multi-step gradient estimation. This helps to approximate the inputs for the next $N$ sequential iterations $\{\vtheta_{\tau+i}\}_{i=0}^{N-1}$ to be parallelized in standard FOO, given $\vtheta{\tau}$. Specifically, in the context of our Algo.~\ref{alg:optex}, for every sequential iteration $t \in [T]$, by employing a first-order optimizer (\foopt{}), we can approximate the inputs required by our parallelized iteration in Sec.~\ref{sec:parallel-iter} \textit{sequentially} as below through our multi-step proxy updates (line 4-5 of Algo.~\ref{alg:optex}).
\begin{equation}\label{eq:proxy-updates}
    \vtheta_{t,s} = \text{\foopt{}}(\vtheta_{t,s-1}, \textcolor{YellowGreen}{\vmu_t(\vtheta_{t,s-1})}), \, \forall{s \in [N-1]} \ .
\end{equation}
Intuitively, these proxy updates imitate the sequential iterations in standard FOO by using only the estimated gradients in our Sec. \ref{sec:grad-est}. We will show that these proxy updates can provide a reasonably good approximation of the ground-truth updates in Sec. \ref{sec:theory-grad}.
Meanwhile, despite the iterative and sequential nature of \eqref{eq:proxy-updates}, our proxy updates based on operations on relatively small-sized matrices (refer to the Prop. \ref{prop:independent-posterior}) will still be able to provide significantly enhanced computational efficiency compared to the ground-truth updates based on expensive evaluation of function values and gradients in complex tasks, like neural network training. This effectiveness and efficiency of \eqref{eq:proxy-updates} thus render it an essential foundation for achieving parallelized iterations and improved the optimization efficiency in FOO.

\subsection{Approximately Parallelized Iterations}\label{sec:parallel-iter}
Upon obtaining the inputs $\{\vtheta_{t,s-1}\}_{s=1}^N$  for the next $N$ sequential iterations to be parallelized, we then finish our approximately parallelized iteration by executing standard FOO algorithms over each of $\{\vtheta_{t,s-1}\}_{s=1}^N$ based on the ground-truth gradients $\{\nabla f(\vtheta_{t,s-1})\}_{s=1}^N$ \textit{in parallel} (line 6-9 of Algo.~\ref{alg:optex}, see also the processes in Fig. \ref{fig:illustration}). That is,
\begin{equation}\label{eq:parallelized-iter}
    \vtheta_{t}^{(i)} = \text{\foopt{}}(\vtheta_{t,i-1}, \textcolor{RoyalBlue}{\nabla f(\vtheta_{t,i-1})}), \, \forall{i \in [N]} \ .
\end{equation}
After that, the final input $\vtheta_{t}=\vtheta_t^{(N)}$ will be used in the next sequential iteration (line 10 of Algo.~\ref{alg:optex}).
Of note, central to the approximately parallelized iterations in our \ours{} framework is the necessity of evaluating the gradients $\left\{\nabla f(\vtheta_{t,s-1})\right\}_{s=1}^N$ in our Algo.~\ref{alg:optex}. These evaluations in fact play pivotal roles in reducing the estimation error of our kernelized gradient estimation and consequently improving the performance of our \ours{} by augmenting the gradient history near the input $\vtheta_t$ with $N$ more evaluations, which will be supported by the theoretical results in our Sec. \ref{sec:theory} and the empirical evidence in our Appx. \ref{sec:app:more-results}.

\vspace{-2mm}
\section{Theoretical Results}\label{sec:theory}
\vspace{-1mm}
To begin with, we formally present the assumptions mentioned in our Sec.~\ref{sec:setting} as below. 
\begin{assumption}\label{asm:1}
    $\nabla f(\vtheta) - \nabla F(\vtheta)$ follows $\gN\left(\vzero, \sigma^2\rmI\right)$ for any $\vtheta \in \sR^d$.
\end{assumption}
\begin{assumption}\label{asm:2}
    $\nabla F$ is sampled from a Gaussian process $\mathcal{GP}\left(\vzero, \rmK(\cdot, \cdot)\right)$ where $\rmK(\cdot, \cdot) = k(\cdot, \cdot)\,\rmI$ and $|k(\vtheta,\vtheta)| \leq \kappa$ for any $\vtheta \in \sR^d$.
\end{assumption}
Note that the Assump. \ref{asm:1} has already been widely employed in the literature \cite{LuoWY0Z18, HeLT19, abs-2109-09833}. Meanwhile, it is also common to assume that $F$ is sampled from a Gaussian process \cite{RasmussenW06, DaiSLJ22}, implying that $\nabla F$ follows a Gaussian process as well \cite{RasmussenW06, zord, shu2023federated} (Assump. \ref{asm:2}), i.e., $\nabla F$ can be any function in this prior.
The inclusion of a separable kernel function in Assump.~\ref{asm:2} aims to enhance the efficiency of our kernelized gradient estimation in Sec.~\ref{sec:grad-est} and simplify our theoretical analyses below, whereas our conclusions apply to non-separable kernel functions as well by following our proof techniques.

\subsection{Gradient Estimation Analysis}\label{sec:theory-grad}

Following the principled idea in kernelized bandit \cite{ChowdhuryG17, dai2022federated} and Bayesian Optimization \cite{ChowdhuryG21, DaiSLJ22}, we define the maximal information gain as below
\begin{equation}
    \gamma_n \triangleq \max_{\{\vtheta_j\}_{j=1}^n \subset \sR^d} I\left(\vect(\rmG_n); \vect(\bm{\nabla}_n)\right)
\end{equation}
where $I(\vect(\rmG_n); \vect(\bm{\nabla}_n))$ is the mutual information between $\rmG_n \triangleq \left[\nabla f(\vtheta_{i})\right]_{i=1}^{n}$ and $\bm{\nabla}_n \triangleq \left[\nabla F(\vtheta_i)\right]_{i=1}^n$. In essence, $\gamma_n$ encapsulates the maximum amount of information about $\nabla F$ that can be gleaned from observing any set of $n$ evaluated gradients, represented as $\rmG_n$, which is known to be problem dependent measure that is highly related to the kernel function $k(\cdot, \cdot)$ \cite{ChowdhuryG17}. Built on this notation, we then provide the following theoretical result for our gradient estimation.

\begin{theorem}[Gradient Estimation Error]\label{th:grad-error}
Let $\delta \in (0,1)$ and $\alpha \triangleq d + (\sqrt{d}+1)\ln(1/\delta)$. Given Assump.~\ref{asm:1} and \ref{asm:2}, let $|\gG|=T_0-1$ for any sequential iteration $t$ in Algo. \ref{alg:optex}, then for any $\vtheta \in \sR^d, t>0$, with a probability of at least $1-\delta$,
\vspace{-1mm}
\begin{equation*}
\begin{aligned}
\left\|\nabla F(\vtheta) - \vmu_t(\vtheta)\right\| \leq \sqrt{\alpha \left\|\vSigma^2(\vtheta)\right\|} \ \  \text{where} \ \ 
    \frac{\kappa}{(\kappa + 1/\sigma^2)^{T_0-1}} \leq  \left\|\vSigma^2(\vtheta)\right\| \leq \frac{4\max\{\kappa,\sigma^2\}\gamma_{T_0}}{T_0 d} \ .
\end{aligned}
\end{equation*}
\end{theorem}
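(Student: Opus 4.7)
The statement packages three claims: a high-probability concentration bound on $\|\nabla F(\vtheta) - \vmu_t(\vtheta)\|$, together with matching lower and upper bounds on $\|\vSigma^2(\vtheta)\|$. My plan is to first invoke Proposition~\ref{prop:independent-posterior}, which decouples the multi-output posterior into $d$ independent scalar Gaussian processes sharing a common scalar posterior variance $\sigma_t^2(\vtheta) \triangleq k(\vtheta,\vtheta) - \vk_t^\top(\vtheta)(\rmK_t + \sigma^2\rmI)^{-1}\vk_t(\vtheta)$, so that $\|\vSigma^2(\vtheta)\|_{\mathrm{op}} = \sigma_t^2(\vtheta)$. The three ingredients then reduce to familiar scalar GP facts that I plan to prove separately and combine.

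\textbf{Concentration step.} Under Assumptions~\ref{asm:1}--\ref{asm:2}, Bayesian GP updating gives the exact conditional law $\nabla F(\vtheta) - \vmu_t(\vtheta) \mid \gG \sim \gN(\vzero, \sigma_t^2(\vtheta)\rmI)$, whose squared norm divided by $\sigma_t^2(\vtheta)$ is $\chi^2_d$. A Laurent--Massart tail then yields $\|\nabla F(\vtheta) - \vmu_t(\vtheta)\|^2 \leq (d + 2\sqrt{d\ln(1/\delta)} + 2\ln(1/\delta))\,\sigma_t^2(\vtheta)$ with probability at least $1-\delta$; bounding the cross term via AM--GM (using $2\sqrt{d\ln(1/\delta)} \leq \sqrt{d}(1 + \ln(1/\delta))$) gives the stated $\alpha\,\sigma_t^2(\vtheta)$.

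\textbf{Two-sided variance bounds.} For the lower bound I plan to iterate the one-step posterior update: adding the observation at $\vtheta_n$ decreases $\sigma_{n-1}^2(\vtheta)$ by $c_{n-1}(\vtheta,\vtheta_n)^2/(\sigma_{n-1}^2(\vtheta_n)+\sigma^2)$, where $c_{n-1}$ denotes the posterior cross-covariance; Cauchy--Schwarz controls $c_{n-1}^2 \leq \sigma_{n-1}^2(\vtheta)\,\sigma_{n-1}^2(\vtheta_n)$, and the prior envelope $\sigma_{n-1}^2(\vtheta_n)\leq\kappa$ produces a geometric one-step contraction factor depending only on $\kappa$ and $\sigma^2$. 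Unrolling $T_0-1$ times from $\sigma_0^2(\vtheta)=k(\vtheta,\vtheta)$ and matching constants yields the claimed lower bound. For the upper bound I would invoke the standard kernel-bandit identity $\tfrac{1}{2}\ln(1 + \sigma_{n-1}^2(\vtheta_n)/\sigma^2) = I(\nabla F(\vtheta_n);\rmG_n \mid \rmG_{n-1})$, telescope it into $\sum_{n=1}^{T_0}\ln(1 + \sigma_{n-1}^2(\vtheta_n)/\sigma^2) \leq 2\,\gamma_{T_0}^{\mathrm{scalar}}$, linearise using $x \leq \frac{\kappa+\sigma^2}{\sigma^2}\ln(1+x/\sigma^2)$ on $[0,\kappa]$, and apply the decoupling identity $\gamma_{T_0} = d\,\gamma_{T_0}^{\mathrm{scalar}}$ (each gradient observation contributes $d$ independent scalar observations under $\rmK = k\rmI$) to reach $4\max\{\kappa,\sigma^2\}\,\gamma_{T_0}/(T_0 d)$.

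\textbf{Main obstacle.} The most delicate step will be the final one in the upper bound: the information-gain argument naturally controls the \emph{sum} of posterior variances at the observation sites, whereas the statement asserts a bound on $\|\vSigma^2(\vtheta)\|$ for every $\vtheta$. Bridging this gap will presumably invoke the locality assumption on $\gG$ emphasised after Proposition~\ref{prop:independent-posterior}, or the monotonicity of posterior variance in the observation set, so that a relevant $\vtheta$ inherits the Cesaro-style bound up to a constant. Tracking the $\chi^2_d$ tail constants precisely to land on $\alpha = d + (\sqrt{d}+1)\ln(1/\delta)$ and verifying $\gamma_{T_0} = d\,\gamma_{T_0}^{\mathrm{scalar}}$ rigorously under $\rmK = k\rmI$ are the remaining bookkeeping points.
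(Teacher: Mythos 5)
Your proposal follows essentially the same route as the paper's proof. The concentration step is the paper's argument verbatim: the conditional law $\vSigma_t^{-1}(\vtheta)\left[\vmu_t(\vtheta)-\nabla F(\vtheta)\right]\sim\gN(\vzero,\rmI_d)$ plus the Laurent--Massart tail (Lemma~\ref{le:conf-bound}). Your lower bound is the same mechanism as Lemma~\ref{le:var-lower} (a one-step multiplicative contraction of the posterior variance, iterated $T_0-1$ times), only you derive the single step from the sequential GP update and Cauchy--Schwarz on the posterior covariance, whereas the paper works in feature space with Sherman--Morrison (Lemma~\ref{le:sm-formula}). Your upper bound is the combination of variance monotonicity (Lemma~\ref{le:non-increasing}) with a log-determinant/information-gain telescoping and the decoupling $\gamma_{T_0}=d\,\gamma_{T_0}^{\mathrm{scalar}}$, which is exactly Lemmas~\ref{le:info-gain} and \ref{le:sum-var}.

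Two caveats. First, the step you flag as the main obstacle---transferring the information-gain bound from the observation sites to $\left\|\vSigma^2(\vtheta)\right\|$ at an arbitrary query $\vtheta$---is not genuinely resolved in the paper either: in the proof of Lemma~\ref{le:sum-var}, the telescoping identity \eqref{eq:svxk} writes $\det(\rmPhi_{i+1})=\det\left(\rmPhi_i+\phi(\vtheta)\phi(\vtheta)^{\top}\right)$, i.e.\ it treats the query point $\vtheta$ as if it were each incoming observation, which is only legitimate when $\vtheta$ is identified with the observation sites; for $\vtheta$ far from the gradient history the stated upper bound cannot hold (the posterior variance stays near $k(\vtheta,\vtheta)$ while $\gamma_{T_0}/T_0\to 0$), so neither monotonicity nor any idea you are missing closes this---it is a locality caveat of the theorem itself. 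Second, constants: your (correct) one-step factor is $\sigma^2/(\sigma^2+\kappa)$, which unrolls to $k(\vtheta,\vtheta)\,(1+\kappa/\sigma^2)^{-(T_0-1)}$ rather than the stated $\kappa\,(\kappa+1/\sigma^2)^{-(T_0-1)}$; the paper's constant arises from its own algebra in Lemma~\ref{le:var-lower} (where $\phi(\vtheta_n)^{\top}\rmPhi_{n-1}^{-1}\phi(\vtheta_n)$ is bounded by $\kappa\sigma^2$ rather than $\kappa/\sigma^2$), and likewise the Laurent--Massart tail gives $d+2\sqrt{d\ln(1/\delta)}+2\ln(1/\delta)$, which does not reduce to the stated $\alpha$ without slack. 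So "matching constants" as you plan will not land exactly on the theorem's expressions, but these discrepancies originate in the paper's bookkeeping, not in your argument.
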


The proof is in Appx. \ref{app:th:grad-error}. It is important to note that since FOO pertains to local optimization, the global fulfillment of Assump. \ref{asm:2} is not a prerequisite. That is, the assumption that $\nabla F$ is sampled from $\mathcal{GP}(\vzero, \rmK)$ within a local region will already be sufficient for our kernelized gradient estimation in Sec.~\ref{sec:grad-est} to achieve accurate gradient estimation in practice. Our Sec. \ref{sec:exps} will later evidence this empirically. 
Thm. \ref{th:grad-error} with upper bound on $\left\|\vSigma^2(\vtheta)\right\|$ illustrates that the efficacy of our kernelized gradient estimation in the worst case will enjoy a polynomial error rate of $\gO\left(\sqrt{\gamma_{T_0}/T_0}\right)$. This means that if $\gamma_{T_0} / T_0$ will asymptotically approach zero w.r.t. $T_0$, the error of our kernelized gradient estimation method will become significantly small given a large number of evaluated gradients $T_0$. This consequently facilitates the effectiveness of our proxy updates in \eqref{eq:proxy-updates} built on our kernelized gradient estimation to approximate the ground-truth updates when $|\gG|$ is sufficiently large. 
Meanwhile, Thm. \ref{th:grad-error} with lower bound on $\left\|\vSigma^2(\vtheta)\right\|$ illustrates that our kernelized gradient estimation in the best case may achieve an exponential error rate of $\gO\left(\kappa / (\kappa + 1/\sigma^2)^{T_0-1}\right)$, which thus further elaborates the efficacy of kernelized gradient estimation in Sec. \ref{sec:grad-est} and proxy updates in Sec. \ref{sec:proxy-update}.

It is important to note that the ratio $\gamma_{T_0} / T_0$ has been demonstrated to asymptotically approach zero for a range of kernel functions, as evidenced in existing literature \cite{SrinivasKKS10}. This therefore underpins the establishment of concrete error bounds for our kernelized gradient estimation where notation $\widetilde{\gO}$ is applied to hide the logarithmic factors, delineated as follows:
\begin{corollary}[Concrete Error Bounds]\label{cor:vjfd}
Let $k(\cdot,\cdot)$ be the radial basis function (RBF) kernel, then
\begin{equation*}
\left\|\nabla F(\vtheta) - \vmu_t(\vtheta)\right\| = \widetilde{\gO}\left(T^{-1/2}_0\right) \ .    
\end{equation*}
Let $k(\cdot, \cdot)$ be the Mat\'{e}rn kernel where $\nu$ is the smoothness parameter, then
\begin{equation*}
\left\|\nabla F(\vtheta) - \vmu_t(\vtheta)\right\| = \widetilde{\gO}\left(T_0^{- \nu / (2\nu + d(d+1))} \right) \ .
\end{equation*}
\end{corollary}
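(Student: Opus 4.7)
The approach is to combine the generic bound from Theorem~\ref{th:grad-error} with kernel-specific asymptotic scalings of the maximal information gain $\gamma_{T_0}$ that are already established in the kernelized bandit and Gaussian process literature; essentially, the corollary is a substitution exercise.

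First, I would reduce the corollary to a statement purely about $\gamma_{T_0}$. Theorem~\ref{th:grad-error} gives, with high probability,
\begin{equation*}
\left\|\nabla F(\vtheta) - \vmu_t(\vtheta)\right\| \leq \sqrt{\alpha \left\|\vSigma^2(\vtheta)\right\|} \leq \sqrt{\frac{4\alpha \max\{\kappa,\sigma^2\}\gamma_{T_0}}{T_0\,d}} \;=\; \widetilde{\gO}\!\left(\sqrt{\gamma_{T_0}/T_0}\right),
\end{equation*}
since $\alpha$ depends only logarithmically on $1/\delta$ and $\kappa, \sigma^2, d$ contribute only dimension-dependent constants absorbed into $\widetilde{\gO}$ (which hides logarithmic factors and such constants). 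Hence the task reduces entirely to substituting the appropriate closed-form scaling of $\gamma_{T_0}$ in $T_0$ for each of the three kernels.

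Second, I would import the well-known information gain bounds: $\gamma_n = \widetilde{\gO}((\log n)^{d+1})$ for the RBF kernel (Srinivas et al., 2010), $\gamma_n = \widetilde{\gO}\!\left(n^{d(d+1)/(2\nu+d(d+1))}\right)$ for the Matérn kernel of smoothness $\nu$, and $\gamma_n = \widetilde{\gO}\!\left(n^{(d-1)/d}\right)$ for the NTK of a fully-connected ReLU network (Kassraie \& Krause, 2022). Plugging each into $\sqrt{\gamma_{T_0}/T_0}$ and simplifying exponents yields the three advertised rates directly; for instance, the Matérn substitution gives $\sqrt{T_0^{\,d(d+1)/(2\nu+d(d+1))-1}} = T_0^{-\nu/(2\nu+d(d+1))}$, and the NTK case gives $\sqrt{T_0^{(d-1)/d-1}} = T_0^{-1/(2d)}$.

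The one subtlety, and the closest thing to an obstacle, is that $\gamma_{T_0}$ as defined here is the multi-output mutual information $I(\vect(\rmG_n);\vect(\bm{\nabla}_n))$ for the $d$-dimensional gradient field, whereas the cited bounds are stated for scalar-output GPs on $\sR^d$. I would resolve this by invoking Proposition~\ref{prop:independent-posterior}: under Assumption~\ref{asm:2}'s separable kernel $\rmK(\cdot,\cdot)=k(\cdot,\cdot)\,\rmI$, the posterior decouples into $d$ independent scalar GPs with a common scalar kernel $k$, so the multi-output information gain equals at most $d$ times the scalar information gain. Since $d$ is a constant absorbed by $\widetilde{\gO}$, the literature's scalar-output bounds apply verbatim and preserve the polynomial/logarithmic order of growth in $T_0$, completing the argument with routine algebra.
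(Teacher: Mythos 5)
Your proposal is correct and follows essentially the same route as the paper, which derives these rates by plugging the known maximal-information-gain bounds of \citet{SrinivasKKS10} (RBF, Mat\'ern) and \citet{Kassraie022} (NTK) into the bound $\left\|\nabla F(\vtheta)-\vmu_t(\vtheta)\right\| = \widetilde{\gO}\bigl(\sqrt{\gamma_{T_0}/T_0}\bigr)$ from Theorem~\ref{th:grad-error}; your exponent algebra for all three kernels checks out. Your handling of the multi-output subtlety is also consistent with the paper: by Lemma~\ref{le:info-gain} the multi-output information gain is exactly $d$ times the scalar one, and this factor is cancelled by the $d$ in the denominator of Theorem~\ref{th:grad-error}'s variance bound, so the scalar-output literature rates apply as you claim.
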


Cor. \ref{cor:vjfd} elucidates that with kernel functions such as RBF and Matérn kernel, the error in our kernelized gradient estimation indeed will diminish asymptotically w.r.t. $T_0$. That is, as $T_0$ increases, the estimation error $\left\|\nabla F(\vtheta) - \vmu_t(\vtheta)\right\|$ decreases and consequently the proxy updates in \eqref{eq:proxy-updates} become closer to the ground-truth updates. It is important to note that this reduction typically follows a non-linear trajectory, suggesting that the effect of an increasing $T_0$ on our kernelized gradient estimation diminishes when $T_0$ is reasonably large. This consequently affirms the reasonability of our utility of local history for gradient estimation in Sec. \ref{sec:grad-est}, which leads to not only accurate but also efficient gradient estimations.

\subsection{Iteration Complexity Analysis}\label{sec:theory-complexity}

We first introduce Assump.~\ref{asm:3}, which has been widely applied in stochastic optimization \cite{Johnson013, LiuNNEN23}, to underpin the analysis of sequential iteration complexity of our \ours{} framework.
\begin{assumption}\label{asm:3}
    $F$ is $L$-Lipschitz smooth: $\left\|\nabla F(\vtheta) - \nabla F(\vtheta')\right\| \leq L\left\|\vtheta - \vtheta'\right\|$ for any $\vtheta, \vtheta' \in \sR^d$.
\end{assumption}

To simplify the analysis, we primarily prove the sequential iteration complexity of our SGD-based \ours{} where we use $\min_{\tau\in[NT)} \left\|\nabla F(\vtheta_{\tau})\right\|^2$ to denote the minimal gradient norm we can achieve within the whole optimization process when applying our \ours{} with $T$ sequential iterations and parallelism of $N$ for a clear and fair comparison with standard SGD. Notably, our analysis can also be extended to other FOO-based \ours{} by following similar proof idea.

\begin{theorem}[Upper Bound]\label{th:upper}
Let $\delta \in (0,1)$, $\Delta \triangleq F(\vtheta) - \inf_{\vtheta}F(\vtheta)$, $\beta\triangleq\max\{\kappa, \sigma^2\}$ and $\rho \triangleq (1-\frac{1}{N})\frac{4\beta\gamma_{T_0}}{\sigma^2 T_0} + \frac{1}{N}$. Under Assump. \ref{asm:1}--\ref{asm:3}, by choosing $T \geq \frac{2\Delta L}{N\sigma^2\rho}$, $\eta = \sqrt{\frac{2\Delta}{NLT \sigma^2 \rho}}$ and $|\gG|=T_0-1$ for our SGD-based Algo. \ref{alg:optex}, with a probability of at least $1-\delta$,
\begin{equation}
\begin{aligned}
    \vspace{-2mm}
    \min_{t \leq T, s \leq N}\left\|\nabla F(\vtheta_{t,s})\right\|^2 \leq 2\sigma\sqrt{\frac{2\Delta L\rho}{NT}} + \frac{4\beta \ln(1/2\delta)}{NT} \ .
\end{aligned}
\end{equation}
\end{theorem}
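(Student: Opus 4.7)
The plan is to run a descent-lemma analysis over the full flattened trajectory $\{\vtheta_{\tau}\}_{\tau=0}^{NT-1}$, where every sequential iteration $t$ contributes $N$ transitions: the first $N-1$ are proxy updates of the form $\vtheta_{t,s} = \vtheta_{t,s-1} - \eta\vmu_t(\vtheta_{t,s-1})$, and the $N$-th is a genuine stochastic update $\vtheta_t^{(N)} = \vtheta_{t,N-1} - \eta\nabla f(\vtheta_{t,N-1},\vx)$. Writing every step uniformly as $\vtheta_{\tau+1}=\vtheta_\tau - \eta g_\tau$ with $g_\tau = \nabla F(\vtheta_\tau)+n_\tau$, where the noise $n_\tau$ is either the estimation error $\veps_\tau \triangleq \vmu_t(\vtheta_\tau)-\nabla F(\vtheta_\tau)$ or the Gaussian sampling noise $\vxi_\tau\sim\gN(\vzero,\sigma^2\rmI)$, lets the two regimes be handled by one inequality and separated later.

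Applying Assumption~\ref{asm:3} to each transition gives
\begin{equation*}
F(\vtheta_{\tau+1}) \leq F(\vtheta_\tau) - \eta\bigl(1-\tfrac{L\eta}{2}\bigr)\|\nabla F(\vtheta_\tau)\|^2 - \eta(1-L\eta)\langle\nabla F(\vtheta_\tau),n_\tau\rangle + \tfrac{L\eta^2}{2}\|n_\tau\|^2 \ .
\end{equation*}
I telescope this over $\tau\in[NT)$ and use $F(\vtheta_0)-\inf F\leq\Delta$. The condition $T\geq \Delta L/(NL\sigma^2\rho)$ together with $\eta=\sqrt{\Delta/(NLT\sigma^2\rho)}$ forces $L\eta\leq 1$, so the coefficient $1-L\eta/2\geq 1/2$ is positive and $1-L\eta\geq 0$. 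This reduces the proof to controlling $\sum_\tau\langle\nabla F(\vtheta_\tau),n_\tau\rangle$ and $\sum_\tau\|n_\tau\|^2$ with high probability, split across the $(N-1)T$ proxy steps and the $T$ stochastic steps.

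For the proxy-step noise, Theorem~\ref{th:grad-error} with $|\gG|=T_0-1$ gives, with probability at least $1-\delta$ uniformly in $\tau$, $\|\veps_\tau\|^2\leq \alpha\|\vSigma^2(\vtheta_\tau)\|\leq 4\alpha\beta\gamma_{T_0}/(T_0 d)$; Cauchy–Schwarz then bounds $\langle\nabla F(\vtheta_\tau),\veps_\tau\rangle$ by $\|\nabla F(\vtheta_\tau)\|\cdot\|\veps_\tau\|$, which can be absorbed via Young's inequality into a fraction of $\|\nabla F(\vtheta_\tau)\|^2$ plus $\|\veps_\tau\|^2$. For the stochastic steps, $\{\langle\nabla F(\vtheta_\tau),\vxi_\tau\rangle\}$ is a martingale difference sequence with conditionally sub-Gaussian increments of parameter $\sigma\|\nabla F(\vtheta_\tau)\|$; a standard self-normalized / Azuma-type concentration bounds the partial sum by $\sigma\sqrt{2\ln(1/\delta)\sum_\tau\|\nabla F(\vtheta_\tau)\|^2}$ with probability $1-\delta$, while a chi-squared tail controls $\sum_\tau\|\vxi_\tau\|^2$ by $T\sigma^2 + O(\sigma^2\ln(1/\delta))$. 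The per-step average of the $\|n_\tau\|^2$ contributions is exactly $\sigma^2\rho$ by the definition $\rho=(1-1/N)\cdot 4\beta\gamma_{T_0}/(\sigma^2 T_0)+1/N$, which is the key matching of constants.

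Assembling everything yields a quadratic inequality of the form
\begin{equation*}
\tfrac{\eta}{2}\sum_{\tau}\|\nabla F(\vtheta_\tau)\|^2 \leq \Delta + \eta\sigma\sqrt{2\ln(1/\delta)\textstyle\sum_\tau\|\nabla F(\vtheta_\tau)\|^2} + \tfrac{L\eta^2}{2}\bigl(NT\sigma^2\rho + O(\beta\ln(1/\delta))\bigr) \ ,
\end{equation*}
which I solve for $\sum_\tau\|\nabla F(\vtheta_\tau)\|^2$, divide by $NT$, and use $\min_\tau\|\nabla F(\vtheta_\tau)\|^2\leq (NT)^{-1}\sum_\tau\|\nabla F(\vtheta_\tau)\|^2$. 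Plugging in $\eta=\sqrt{\Delta/(NLT\sigma^2\rho)}$ produces the announced $2\sigma\sqrt{2\Delta L\rho/(NT)}$ leading term and the $4\beta\ln(1/2\delta)/(NT)$ lower-order term after a union bound over the two failure events (Theorem~\ref{th:grad-error} and the martingale concentration). The main obstacle I anticipate is the bookkeeping: the estimation-error and sampling-noise contributions live on different subsequences of the trajectory and have genuinely different tail behaviour, yet their average must combine into the single clean quantity $\sigma^2\rho$; care is needed in applying Young's inequality so the estimation-error cross terms are strictly dominated by the $\frac{\eta}{2}\|\nabla F\|^2$ budget and do not inflate the final constants.
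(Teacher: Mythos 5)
Your overall skeleton (descent lemma over the flattened trajectory, noise split into estimation error on the $N-1$ proxy steps and Gaussian sampling noise on the real step, then high-probability control of the cross and quadratic noise terms) has the right shape, but it departs from the paper's argument and, as written, contains a concrete gap at the step where you claim the noise contributions combine ``exactly'' into $\sigma^2\rho$. Thm.~\ref{th:grad-error} bounds $\|\veps_\tau\|^2$ by $\alpha\|\vSigma^2(\vtheta_\tau)\|$ with $\alpha=d+(\sqrt{d}+1)\ln(1/\delta)$, and it is a per-point statement: to invoke it ``uniformly in $\tau$'' along the random trajectory you need a union bound over the $(N-1)T$ proxy steps, replacing $\ln(1/\delta)$ by $\ln((N-1)T/\delta)$. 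The resulting per-step bound is $(\alpha/d)\cdot 4\beta\gamma_{T_0}/T_0$ rather than $4\beta\gamma_{T_0}/T_0$, so the identification with $\sigma^2\rho$ holds only up to a $1+\widetilde{\gO}\bigl(\ln(NT/\delta)/\sqrt{d}\bigr)$ inflation; combined with the losses from Young's inequality on the proxy cross terms, the self-normalized martingale bound, and the quadratic-inequality step (your deterministic terms alone already assemble to $3\sigma\sqrt{\Delta L\rho/(NT)}$ before any concentration penalty, versus the stated $2\sqrt{2}\,\sigma\sqrt{\Delta L\rho/(NT)}$), your route delivers the correct $\widetilde{\gO}(\sqrt{\rho/(NT)})$ rate but not the theorem with its stated constants and clean $4\beta\ln(1/(2\delta))/(NT)$ term. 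Your chi-square claim $\sum_\tau\|\vxi_\tau\|^2\lesssim T\sigma^2+\gO(\sigma^2\ln(1/\delta))$ also drops the dimension: for $\gN(\vzero,\sigma^2\rmI)$ noise in $\sR^d$ the sum concentrates around $T\sigma^2 d$.

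The paper's proof (Appx.~\ref{sec:th:upper}, following the technique of \citep{LiuNNEN23}) avoids all per-step events and absorption losses by treating both noise sources uniformly as conditionally Gaussian with per-step variance $\sigma^2(\vtheta_{t,s})$ equal to either $\|\vSigma^2(\vtheta_{t,s},\vtheta_{t,s})\|$ or $\sigma^2$, and running a single exponential-moment argument: it defines $\rX_{t,s}$ as the cross term plus the quadratic noise term minus $w^2\eta^2(1-\eta L)^2\|\nabla F(\vtheta_{t,s-1})\|^2\sigma^2(\vtheta_{t,s-1})$, bounds $\E\exp(\sum_{t,s}\rX_{t,s})$ via the sub-Gaussian Lemma~\ref{le:knvd}, and applies Markov's inequality once to the whole sum. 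Choosing $w=1/(2\beta\eta)$ then lets the subtracted quadratic term absorb the cross terms deterministically (using only $\sigma^2(\vtheta)\le\beta$), and the only input needed per proxy step is the deterministic variance bound $\|\vSigma^2(\vtheta)\|\le 4\beta\gamma_{T_0}/(T_0 d)$ from Thm.~\ref{th:grad-error} --- no union bound, no $\alpha$ factor, no Young-type sacrifice of the $\|\nabla F\|^2$ budget --- which is precisely what produces $\rho$ and the stated constants. To repair your decomposition you would need to replace the per-step high-probability use of Thm.~\ref{th:grad-error} by an aggregated concentration of this supermartingale type.
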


The proof of Thm. \ref{th:upper} is in Appx. \ref{sec:th:upper}. Of note, our Thm. \ref{th:upper} with $N=1$ aligns with the established upper bound for standard SGD, as discussed in \cite{LiuNNEN23}. Importantly, our Thm. \ref{th:upper} elucidates that with parallelism $N > 1$, our SGD-based \ours{} algorithm can expedite the standard SGD by a factor of at least $\sqrt{N/\rho}$, where $1/\rho$ quantifies the impact of the error introduced by our kernelized gradient estimation. This efficiency gain can be further amplified as the accuracy of our kernelized gradient estimation increases (i.e., a decrease in $\rho$), which can be achieved by augmenting the number $T_0$ as discussed in our Sec. \ref{sec:theory-grad}. 
In addition, Thm. \ref{th:upper} also demonstrates that for a fixed learning rate $\eta$, there exists a constant $N_{\max}$, e.g., $N_{\max}=2\Delta / (\eta^2 LT\sigma^2\rho)$ in Thm. \ref{th:upper}, the parallelism $N$ should roughly remain below to ensure the fastest convergence of function $F$ to a stationary point. In contrast, if $N$ exceeds $N_{\max}$, our SGD-based \ours{} will underperform due to the increased gradient estimation error. This observation is further supported by the results presented in Appx. \ref{sec:app:more-results}. However, when the learning rate $\eta$ is relatively small (e.g., during fine-tuning in practice), the parallelism $N$ can be significantly larger to achieve a further improved speedup.

\begin{theorem}[Lower Bound]\label{th:lower}
Let~$\delta \in (0,1)$, $\Delta \triangleq F(\vtheta) - \inf_{\vtheta}F(\vtheta)$, $\beta\triangleq\max\{\kappa, \sigma^2\}$, and $\widetilde{\beta} \triangleq \min\{\kappa/(\kappa+1/\sigma^2)^{T_0-1}, \sigma^2 \}$. Then, for any $L>0,\Delta>0,N\geq1,T\geq1$ and $\eta \in [0,1/L)$, there exists a $F$ on $\sR^d$ $(\forall{d}{\,>\,}d_0{\,=\,}\gO\left(\beta / (\Delta L^2)\ln{NT/\delta}\right))$ satisfying Assump. \ref{asm:1}--\ref{asm:3} and having the following with a probability of at least $1-\delta$ when applying SGD-based Algo. \ref{alg:optex} with $|\gG|=T_0-1$ ,
\begin{equation}
\begin{aligned}
    \min_{t \leq T, s \leq N}\left\|\nabla F(\vtheta_{t,s})\right\|^2 \geq \frac{d_0 \min\{\Delta L, \widetilde{\beta}, 1\}}{4\sqrt{NT}} \ .
\end{aligned}
\end{equation}
\end{theorem}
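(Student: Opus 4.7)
The plan is to establish the lower bound via an adversarial-instance argument in the spirit of zero-chain lower bounds for nonconvex stochastic optimization, adapted to the \ours{} setting where iteration parallelism is coupled with GP-based proxy updates. First I would construct a hard instance $F:\sR^d\to\sR$ of separable form $F(\vtheta)=\sum_{i=1}^d \phi_i(\theta_i)$, with coordinate functions $\phi_i$ calibrated to realize the prescribed $\Delta$ and $L$, and with a profile that keeps $\phi_i'$ close to zero until $\theta_i$ has been moved beyond a signal-to-noise threshold. Separability makes $F$ compatible with the separable-kernel assumption in Assump.~\ref{asm:2} (the coordinate-wise posterior decoupling of Prop.~\ref{prop:independent-posterior} holds), and the noise on each gradient evaluation can be taken to be exactly the $\gN(\vzero,\sigma^2\rmI)$ required by Assump.~\ref{asm:1}.

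Second, I would reduce the full \ours{} dynamics to an equivalent noisy gradient oracle of controlled precision. In each sequential iteration $t$, the algorithm receives $N$ direct evaluations $\nabla f(\vtheta_{t,i-1},\vx)$ of noise level $\sigma^2$ and performs $N-1$ proxy updates driven by $\vmu_t$. Since each proxy update is a deterministic post-conditioning function of the history $\gG$, by a data-processing argument it cannot carry more information about $\nabla F$ than $\gG$ itself. Representing each proxy gradient as $\vmu_t(\vtheta_{t,s-1})=\nabla F(\vtheta_{t,s-1})+\vxi_{t,s}$ with $\vxi_{t,s}$ conditionally centered, the lower half of Thm.~\ref{th:grad-error} guarantees that the covariance of $\vxi_{t,s}$ is at least $\kappa/(\kappa+1/\sigma^2)^{T_0-1}\,\rmI$. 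Thus the effective per-query variance is lower-bounded by $\widetilde\beta=\min\{\kappa/(\kappa+1/\sigma^2)^{T_0-1},\sigma^2\}$, which is the quantity that must appear in the rate.

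Third, I would run a coordinate-activation argument against this equivalent $NT$-query oracle. Choosing the ambient dimension to exceed $d_0=\gO(\beta/(\Delta L^2)\ln(NT/\delta))$ is precisely what a union bound over the $NT$ queries demands so that, with probability at least $1-\delta$, the iterates activate (that is, leave the flat region of) only a strict subset of the $d$ coordinates. On every un-activated coordinate $i$, the construction ensures $|\phi_i'(\theta_i)|\gtrsim \min\{\Delta L,\widetilde\beta,1\}/\sqrt{NT}$ throughout the run; summing the surviving contributions across the $d_0$ coordinates that are guaranteed to remain inactive delivers the claimed bound on $\min_{\tau\in[NT)}\|\nabla F(\vtheta_\tau)\|^2$. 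The range restriction $\eta\in[0,1/L)$ is used here to prevent the iterates from overshooting the flat region artificially and thereby activating more coordinates than the noise budget warrants.

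The principal obstacle is the second step: the kernelized proxy updates are correlated with previously issued noisy queries and could in principle exploit the GP posterior to take proxy moves sharper than any individual direct gradient query. The resolution I anticipate is to use the GP lower bound $\|\vSigma^2(\vtheta)\|\geq \kappa/(\kappa+1/\sigma^2)^{T_0-1}$ from Thm.~\ref{th:grad-error} not merely as a variance statement but as a channel-capacity bound: no deterministic post-processing of $\gG$ can produce a gradient estimate whose noise is Gaussian with covariance strictly below this floor. Once this equivalence to an $NT$-query Gaussian gradient oracle of noise level $\widetilde\beta$ is in place, the remaining analysis is a standard minimax argument on the separable zero-chain construction, and the $\ln(NT/\delta)$ in $d_0$ emerges from the union bound over queries needed to control Gaussian tails uniformly across the run.
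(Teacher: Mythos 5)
Your plan diverges from the paper's proof, and the divergence exposes a genuine gap at its central step. The paper does not use a zero-chain/coordinate-activation minimax argument at all: it fixes two explicit quadratic hard instances and analyzes the SGD-based \ours{} recursion directly, splitting on the step size. For $\eta \ge 1/(\sqrt{NT}L)$ it takes $F(\vtheta)=\frac{L}{2}\|\vtheta\|^2$ with Gaussian initialization, writes $\vtheta_\tau$ as a Gaussian whose variance accumulates the per-step update noise, and here is where $\widetilde{\beta}$ enters: it is the variance of the noise the \emph{algorithm itself injects into its iterates} through the proxy updates (the posterior-mean error, whose variance is lower-bounded by the floor $\kappa/(\kappa+1/\sigma^2)^{T_0-1}$ from Thm.~\ref{th:grad-error}); a chi-squared concentration bound with a union bound over the $NT$ iterates then yields the $\widetilde{\beta}/\sqrt{NT}$ term and the $d>d_0=\gO(\ln(NT/\delta))$-type requirement. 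For $\eta \le 1/(\sqrt{NT}L)$ the obstruction is not informational at all but kinematic: on a shallow quadratic started far from its minimum the iterate cannot travel far enough, giving the $\Delta L/\sqrt{NT}$ term. Your proposal has no counterpart to this second regime, and the $\min\{\Delta L,\widetilde{\beta},1\}$ structure of the stated bound is precisely the union of these two algorithm-specific mechanisms.

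The concrete flaw is your step two. By your own data-processing observation, the proxy updates are deterministic functions of $\gG$, so the information available to the procedure is exactly that of $NT$ gradient queries at noise level $\sigma^2$; an information-theoretic reduction therefore cannot legitimately replace $\sigma^2$ by the smaller quantity $\widetilde{\beta}$. Your proposed fix --- reading the posterior-variance lower bound of Thm.~\ref{th:grad-error} as a ``channel-capacity'' statement that no post-processing of $\gG$ can produce a gradient estimate with Gaussian error below the floor --- is not what that bound says: it is a Bayesian statement about residual uncertainty in $\nabla F$ \emph{under the GP prior}, whereas your hard instance is a single fixed separable function chosen adversarially, for which nothing prevents an estimator (or the posterior mean on that particular instance) from being more accurate than the prior-averaged floor. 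So the reduction to an ``$NT$-query oracle of noise level $\widetilde{\beta}$'' is unjustified, and with it the appearance of $\widetilde{\beta}$ in your final bound; the correct role of $\widetilde{\beta}$ is as a lower bound on noise that \emph{hurts} the specific SGD-based iteration, which is an argument about the algorithm's dynamics, not about what any algorithm could learn. Separately, even granting the oracle reduction, your sketch never carries out the calibration showing that a separable flat-until-activated construction with total gap $\Delta$ and smoothness $L$ can keep $\gtrsim d_0$ coordinates each contributing $\min\{\Delta L,\widetilde{\beta},1\}/\sqrt{NT}$ to $\|\nabla F\|^2$ with $d_0=\gO\left(\beta/(\Delta L^2)\ln(NT/\delta)\right)$, and standard zero-chain bounds do not produce a dimension prefactor of this form.
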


The proof of Thm. \ref{th:lower} is in Appx. \ref{sec:th:lower}. Note that when $N=1$, Thm. \ref{th:lower} aligns with the recognized lower bound for SGD, as elucidated in \cite{DroriS20}. Thm. \ref{th:lower} illustrates that, with parallelism of $N$, our SGD-based \ours{} can potentially accelerate standard SGD by up to $\sqrt{N} / (\kappa/(\sigma^2(1+1/\sigma^2)^{T_0-1}))$, under the condition that $\kappa/(1+1/\sigma^2)^{T_0-1} \leq \min\{\Delta L, 1, \sigma^2\}$. This upper limit in fact corresponds with the lower bound of the variance in our kernelized gradient estimation, as established in Thm. \ref{th:grad-error}. Essentially, the agreement between Thm. \ref{th:upper} and Thm. \ref{th:lower}, in the aspect of parallelism $N$, demonstrates the tightness of our sequential complexity analysis for SGD-based Algo. \ref{alg:optex}.
Finally, the combination of Thm. \ref{th:upper} and Thm. \ref{th:lower} enables us to specify the effective acceleration that can be achieved by our SGD-based \ours{} tightly, as shown in our Cor.~\ref{cor:parall} below.
\begin{corollary}[Acceleration Rate]\label{cor:parall}
With parallelism of $N$, the effective acceleration rate achieved by our SGD-based \ours{} over standard SGD is $\Theta (\sqrt{N})$.
\end{corollary}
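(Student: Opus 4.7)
The plan is to obtain Corollary~\ref{cor:parall} as an immediate consequence of the tight match between Theorem~\ref{th:upper} and Theorem~\ref{th:lower}, both of which collapse to the standard SGD rate (respectively, SGD lower bound) upon specializing to $N=1$. I would first fix the notion of \emph{effective acceleration rate} as the multiplicative factor by which SGD-based \ours{} improves the per-iteration convergence rate of $\min_\tau\|\nabla F(\vtheta_\tau)\|^2$ over standard SGD at matched sequential iteration count $T$, and then establish matching $\Omega(\sqrt{N})$ and $O(\sqrt{N})$ bounds on this factor.

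For the $\Omega(\sqrt{N})$ direction, I would invoke Theorem~\ref{th:upper}: SGD-based \ours{} attains $\min_\tau\|\nabla F(\vtheta_\tau)\|^2 = O(\sqrt{\rho/(NT)})$ in the dominant term, while the same bound at $N=1$ recovers the standard SGD rate $O(\sqrt{1/T})$ (as noted in the remark following Theorem~\ref{th:upper}). Taking the ratio yields a speedup of at least $\sqrt{N/\rho}$. Since $T_0$ is chosen independently of $N$ and $\gamma_{T_0}/T_0$ is a problem-dependent constant (indeed, asymptotically vanishing for the kernels treated in Corollary~\ref{cor:vjfd}), the quantity $\rho = (1-1/N)\cdot 4\beta\gamma_{T_0}/(\sigma^2 T_0) + 1/N$ is bounded above uniformly in $N$ by $4\beta\gamma_{T_0}/(\sigma^2 T_0)+1$, so $\sqrt{N/\rho} = \Omega(\sqrt{N})$.

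For the $O(\sqrt{N})$ direction, I would invoke Theorem~\ref{th:lower}: there exists a problem instance on $\sR^d$ (for $d$ large enough) satisfying Assump.~\ref{asm:1}--\ref{asm:3} on which SGD-based \ours{} cannot do better than $\min_\tau\|\nabla F(\vtheta_\tau)\|^2 = \Omega(1/\sqrt{NT})$. Specializing the same theorem to $N=1$ recovers the standard SGD lower bound $\Omega(1/\sqrt{T})$, which is attained up to constants by SGD via Theorem~\ref{th:upper} at $N=1$. Dividing SGD's rate by \ours{}'s rate on this worst-case instance shows that the ratio is $O(\sqrt{N})$, so no speedup exceeding $O(\sqrt{N})$ is achievable in our framework.

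Combining the two directions yields the claimed $\Theta(\sqrt{N})$. The substantive analytic work is already carried out in Theorems~\ref{th:upper} and~\ref{th:lower}, so the main difficulty here is conceptual rather than computational: the proof must pin down what ``effective acceleration rate'' means so that the matching $N=1$ specializations of the upper and lower bounds produce exactly the same baseline SGD rate, allowing the $N$-dependences of the two bounds to cancel cleanly into $\Theta(\sqrt{N})$ rather than leaving a gap driven by the $\rho$-dependent constants.
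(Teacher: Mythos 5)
Your proposal is correct and follows essentially the same route as the paper, which derives Corollary~\ref{cor:parall} directly by combining the $\sqrt{N/\rho}$ speedup implied by Theorem~\ref{th:upper} (with $\rho$ bounded by an $N$-independent constant) with the $O(\sqrt{N})$ cap on any speedup implied by the worst-case instance in Theorem~\ref{th:lower}, both specializing at $N=1$ to the standard SGD rates. The paper offers no additional argument beyond this combination, so your treatment — including making the definition of ``effective acceleration rate'' explicit — matches it.
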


\section{Experiments}\label{sec:exps}
In this section, we use extensive experiments to show that our \ours{} framework can considerably enhance the efficiency of FOO with parallel computing, including synthetic experiments (Sec.~\ref{sec:exp:syn}), reinforcement learning (Sec.~\ref{sec:exp:rl}) and neural network training on various datasets (Sec.~\ref{sec:exp:nn}). 

\subsection{Synthetic Function Minimization}\label{sec:exp:syn}

\begin{figure}[t]
\vspace{-3mm}
\centering
\begin{tabular}{cccc}
    \hspace{-4mm}
    \includegraphics[width=0.25\textwidth]{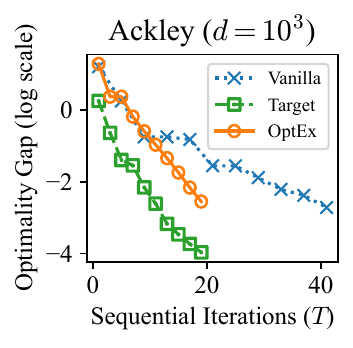}&
    \hspace{-4mm}
    \includegraphics[width=0.25\textwidth]{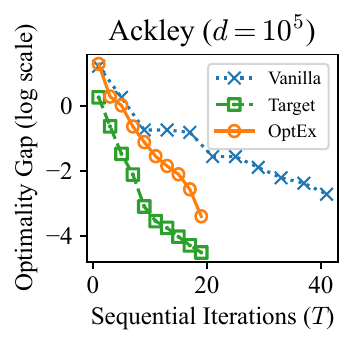} &
    \hspace{-4mm}
    \includegraphics[width=0.25\textwidth]{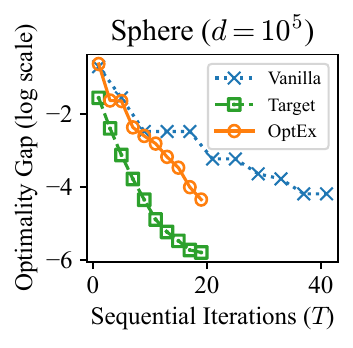}  
    \includegraphics[width=0.244\textwidth]{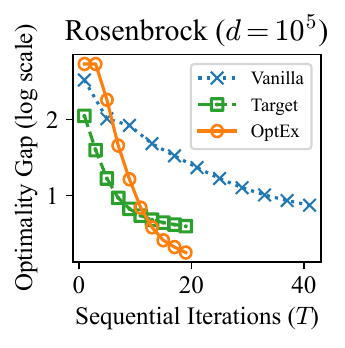} \\
\end{tabular}
\caption{Comparison of the number of sequential iterations $T$ ($x$-axis) required by different methods to achieve the same optimality gap $F(\vtheta) - \inf_{\vtheta} F(\vtheta)$ ($y$-axis) for various synthetic functions
. The parallelism $N$ is set to 5 and each curve denotes the mean from 5 independent runs.
}
\label{fig:syn}
\vspace{-1mm}
\end{figure}

Here, we utilize synthetic functions to demonstrate the enhanced performance of our \ours{} framework compared to existing baselines, including the standard FOO algorithm, namely \van{}, and FOO with ideally parallelized iterations, namely \targ{}, which ideally but impractically utilizes the ground-truth gradient to obtain the inputs for the iterations to be parallelized. More specifically, the \van{} baseline is equivalent to Algo.~\ref{alg:optex} with parallelism of $N\,{=}\,1$, and the \targ{} baseline is equivalent to Algo.~\ref{alg:optex} with $\vmu_t(\vtheta_{t,s-1})$ being replaced with $\nabla f(\vtheta_{t,s-1})$, indicating the desired parallelized iteration we aim to approximate. We have also provided a comprehensive illustration of these baselines in Appx.~\ref{sec:app:baselines} and detailed experimental setup applied here in Appx. \ref{sec:app:syn}.  

The results in Fig. \ref{fig:syn} with $\sigma^2=0$ and $N=5$ have demonstrated the efficacy of our \ours{} framework for deterministic optimization (i.e., $\sigma^2=0$). Specifically, Fig.~\ref{fig:syn} shows that \ours{} consistently achieves a notable speedup in optimization efficiency measured by the number of sequential iterations, which is at least 2$\times$ more efficient than the \van{} baseline, when optimizing with parallelism of $N=5$ to reach an equivalent level of optimality gap. This is roughly in line with the result of our Cor.~\ref{cor:parall}, implying the validity of our Cor.~\ref{cor:parall}. Meanwhile, although our \ours{} framework slightly underperforms the \targ{} baseline, such a phenomenon is in fact quite reasonable since the \targ{} baseline can leverage the ground-truth gradient whereas \ours{} relies on the kernelized gradient estimation with estimation error bounded in Thm.~\ref{th:grad-error} to parallelize sequential iterations. This also aligns with the insight from our iteration complexity analysis in Thm.~\ref{th:upper}. Overall, the results in Fig. \ref{fig:syn} have provided strong empirical support for the efficacy of our \ours{} in expediting FOO, as theoretically justified in our Sec. \ref{sec:theory-complexity}. We also present a number of ablation studies as well as analyses in Appx. \ref{sec:app:more-results} to examine the effects of different components in our proposed \ours{} framework on its effectiveness.

\begin{figure}[t]
\vspace{-3mm}
\centering
\begin{tabular}{ccccc}
    \hspace{-4mm}
    \includegraphics[width=0.242\columnwidth]{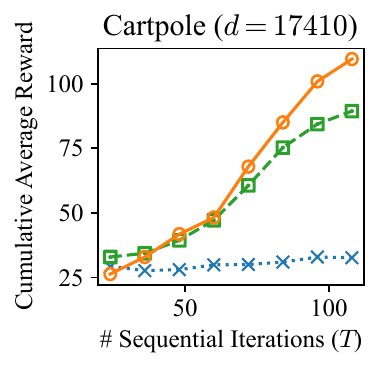}&
    \hspace{-4mm}
    \includegraphics[width=0.245\columnwidth]{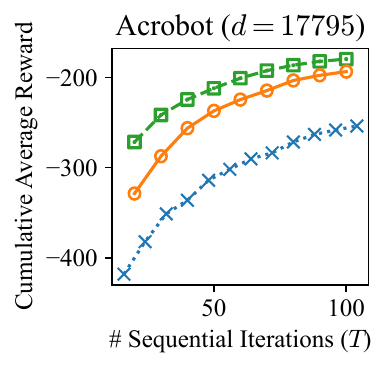} &
    \hspace{-5mm}
    \includegraphics[width=0.262\columnwidth]{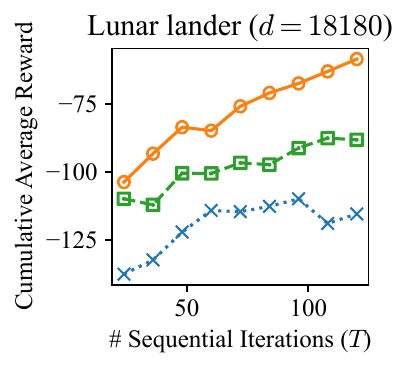}&
    \hspace{-4mm}
    \includegraphics[width=0.259\columnwidth]{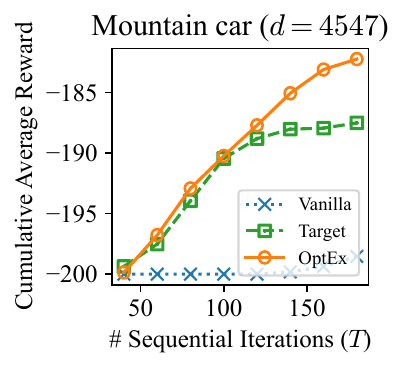} \\
\end{tabular}
\caption{Comparison of the cumulative average reward ($y$-axis) achieved by different methods to train DQN on RL tasks under various parameter dimension $d$ and a varying number of sequential episodes $T$ ($x$-axis). 
The parallelism $N$ is set to 4 and each curve denotes the mean from 3 independent runs.
}
\label{fig:rl}
\vspace{-2mm}
\end{figure}
    
\subsection{Reinforcement Learning}\label{sec:exp:rl}
We proceed to compare our \ours{} framework with previously established baselines under various reinforcement learning tasks with different parameter dimension $d$ from the OpenAI Gym suite \cite{brockman2016openai}, with the deployment of DQN agents \cite{mnih2015human}. Here, the parallelism parameter is set to be $N=4$ and a detailed experimental setup is provided in Appx.~\ref{sec:app:rl}. The results are presented in Fig.~\ref{fig:rl}. As illustrated in Fig.~\ref{fig:rl}, the integration of parallel computing techniques, including \targ{} and \ours{}, considerably outperforms the traditional \van{} baseline in terms of the optimization efficiency quantified by the
number of sequential iterations. More importantly, amongst these methodologies, \ours{} consistently demonstrates a more stable and superior improvement on the optimization efficiency compared with other baselines, which consequently well corroborates the efficacy of \ours{} in improving the efficiency of established FOO algorithms. Interestingly, our \ours{} framework can even enjoy an improved efficiency over the \targ{} baseline where the ground-truth gradient $\nabla f(\cdot)$ is applied. This is likely because the gradient variance (i.e., $\|\vSigma^2(\vtheta)\|$) in our \ours{} framework can asymptotically approach zero by using a large number of history of gradient (refer to our Sec. \ref{sec:grad-est}), whereas the gradient variance in the \targ{} baseline remains the same.

\subsection{Neural Network Training}\label{sec:exp:nn}
At last, we examine the efficacy of our \ours{} in expediting the optimization (i.e., training) of deep neural networks, specifically for image classification and text autoregression tasks. Specifically, we apply our \ours{} and the aforementioned baselines to (a) train a 10-layer MLP model ($d=2412298$) with residual connections \cite{HeZRS16} on CIFAR-10 \cite{cifar}, and (b) train an autoregressive transformer model ($d=1626496$) borrowed from the Haiku library \cite{haiku2020github} on a curated collection of works from Shakespeare with parallelism of $N=4$. Comprehensive details for the experimental setup are provided in Appx. \ref{sec:app:nn} and the final results are illustrated in Fig. \ref{fig:network} where both the number of sequential iterations and wallclock time are used to quantify the optimization efficiency of different optimizers. 
Intriguingly, as evidenced by Fig. \ref{fig:network}, \ours{} consistently outperforms \van{} by a large margin in terms of both training and testing errors across the image and text datasets, given an equal number of sequential iterations $T$ or alternatively the same wallclock time budget. Remarkably, the efficiency of \ours{} approaches that of the theoretically ideal algorithm -- the \targ{} baseline, which therefore further verifies the efficacy of our \ours{} framework. More results are in Appx. \ref{sec:app:more-results}. Overall, these empirical results have well verified the capability of \ours{} in significantly expediting FOO algorithms as justified by our theorems in Sec.~\ref{sec:theory}, even in the context of deep neural network training.

\begin{figure}[t]
\vspace{-2mm}
\centering
\begin{tabular}{cccc}
    \hspace{-5mm}
    \includegraphics[width=0.245\columnwidth]{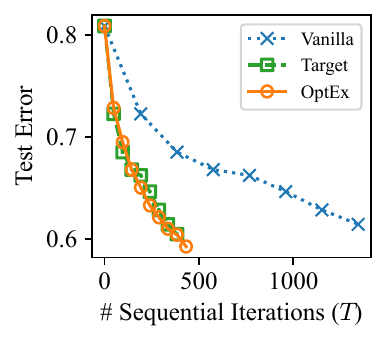}&
    \hspace{-5mm}
    \includegraphics[width=0.255\columnwidth]{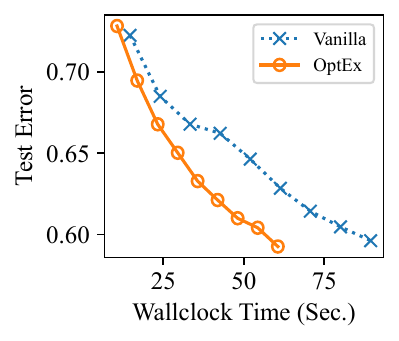} &
    \hspace{-3mm}
    \includegraphics[width=0.25\columnwidth]{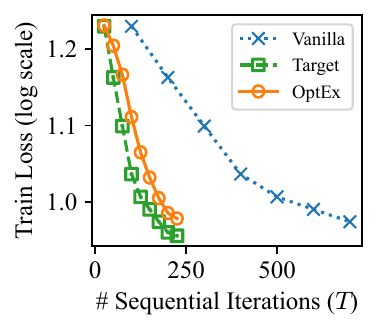}&
    \hspace{-5mm}
    \includegraphics[width=0.25\columnwidth]{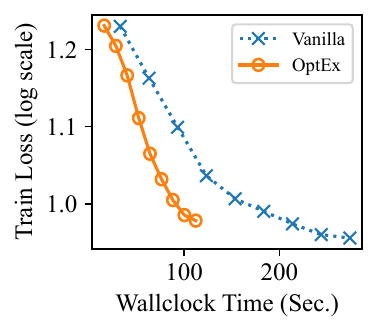}\\
    {} & {\hspace{-35mm} (a) CIFAR-10 ($d=2412298$)} & {} & {\hspace{-35mm} (b) Shakespeare Corpus ($d=1626496$)} \\
\end{tabular}
\vspace{-1mm}
\caption{Comparison of the test error or training loss ($y$-axis) achieved by different optimizers when training deep neural networks on (a) CIFAR-10 and (b) Shakespeare Corpus with a varying number $T$ of sequential iterations or a varying wallclock time ($x$-axis)
. The parallelism $N$ is set to 4 and each curve denotes the mean from 5 (for CIFAR-10) or 3 (for Shakespeare corpus) independent runs. The wallclock time is evaluated on a single NVIDIA RTX 4090 GPU.
}
\label{fig:network}
\vspace{-3mm}
\end{figure}

\section{Conclusion}\label{sec:conclusion}
In conclusion, our \ours{} framework represents a significant advancement in FOO. By leveraging kernelized gradient estimation to enable approximately parallelized iterations, \ours{} effectively reduces the number of sequential iterations required for convergence and thus addresses the traditional inefficiencies of FOO. Theoretical analyses and extensive empirical studies validate the reliability and efficacy of \ours{}, confirming its potential to expedite optimization processes across various applications. Of note, a limitation of \ours{} is the additional storage and computational cost introduced by the kernelized gradient estimation, which we aim to mitigate further in the future work.

\begin{ack}
This research is supported by the Guangdong Lab of AI and Digital Economy (SZ) under the Guangming Laboratory Genius Nova Programme (Award No: 24410002). 
\end{ack}

\bibliography{content/reference}
\bibliographystyle{unsrt}

\begin{appendices}
\onecolumn

\appendixpage

\section{Proofs}\label{sec:proofs}
\subsection{Proof of Proposition~\ref{prop:independent-posterior}}\label{app:prop:ind}
Recall that we have defined $\vk_t^{\top}(\vtheta) \triangleq [k(\vtheta, \vtheta_{\tau})]_{\tau=1}^{N(t-1)}$, and $\rmK_t \triangleq [k(\vtheta_{\tau}, \vtheta_{\tau'})]_{\tau=\tau'=1}^{N(t-1)}$. Let  $\otimes$ denote the Kronecker product, by introducing the fact that $\rmK(\cdot, \cdot) = k(\cdot, \cdot)\,\rmI$ into $\rmV_t^{\top}(\vtheta)$ and $\rmU_t$ from the Gaussian process posterior \eqref{eq:posterior}, we have that
\begin{equation}
\begin{aligned}
    \rmV^{\top}_t(\vtheta) = 
    \begin{bmatrix}
       k(\vtheta, \vtheta_{1})\rmI & \cdots & k(\vtheta, \vtheta_{\tau})\rmI & \cdots & k(\vtheta, \vtheta_{t-1})\rmI
    \end{bmatrix}
    = \vk_t^{\top}(\vtheta) \otimes \rmI \ .
\end{aligned}
\end{equation}

Similarly,
\begin{equation}
\begin{aligned}
    \rmU_t &= \begin{bmatrix}
       k(\vtheta_{1}, \vtheta_{1})\rmI & \cdots & k(\vtheta_{1}, \vtheta_{\tau'})\rmI & \cdots & k(\vtheta_{1}, \vtheta_{t-1})\rmI \\
       \vdots & \vdots & \vdots & \vdots &\vdots \\
       k(\vtheta_{\tau}, \vtheta_{1})\rmI & \cdots & k(\vtheta_{\tau}, \vtheta_{\tau'})\rmI & \cdots & k(\vtheta_{\tau}, \vtheta_{t-1})\rmI \\
       \vdots & \vdots & \vdots & \vdots &\vdots \\
       k(\vtheta_{t-1}, \vtheta_{1})\rmI & \cdots & k(\vtheta_{t-1}, \vtheta_{\tau'})\rmI & \cdots & k(\vtheta_{t-1}, \vtheta_{t-1})\rmI
    \end{bmatrix}
    = \rmK_t \otimes \rmI \ .
\end{aligned}
\end{equation}

By introducing the results above into the posterior mean and variance in \eqref{eq:posterior}, we have
\begin{equation}\label{eq:hafk}
\begin{aligned}
    \vmu_{t}(\vtheta) &\stackrel{(a)}{=} \rmV^{\top}_{t}(\vtheta)\left(\rmU_{t}+\sigma^2\rmI\right)^{-1}\vect(\rmG_t^{\top}) \\[4pt]
    &\stackrel{(b)}{=} \left(\vk_t^{\top}(\vtheta) \otimes \rmI\right)\left(\rmK_t \otimes \rmI+\sigma^2\rmI\right)^{-1}\vect(\rmG_t^{\top}) \\[4pt]
    &\stackrel{(c)}{=} \left(\vk_t^{\top}(\vtheta) \otimes \rmI\right)\left[\left(\rmK_t +\sigma^2\rmI\right)\otimes \rmI\right]^{-1}\vect(\rmG_t^{\top}) \\[4pt]
    &\stackrel{(d)}{=} \left(\vk_t^{\top}(\vtheta) \otimes \rmI\right)\left[\left(\rmK_t +\sigma^2\rmI\right)^{-1} \otimes \rmI\right]\vect(\rmG_t^{\top}) \\[4pt]
    &\stackrel{(e)}{=} \left(\left[\vk_t^{\top}(\vtheta)\left(\rmK_t +\sigma^2\rmI\right)^{-1}\right] \otimes \rmI\right)\vect\left(\rmG^{\top}_t\right) \\[0pt]
    &\stackrel{(f)}{=} \vect\left(\rmG_t^{\top}\left[\vk_t^{\top}(\vtheta)\left(\rmK_t +\sigma^2\rmI\right)^{-1}\right]^{\top}\right) \\[0pt]
    &\stackrel{(g)}{=} \left[\left(\vk_t^{\top}(\vtheta)\left(\rmK_t +\sigma^2\rmI\right)^{-1}\right) \rmG_t\right]^{\top}
\end{aligned}
\end{equation}
where $(c)$ come from the bi-linearity of the Kronecker product, i.e., $(\rmA + \rmB)\otimes \rmC = \rmA \otimes \rmC + \rmB \otimes \rmC$ while $(d)$ is from the inverse of the Kronecker product, i.e., $\left(\rmA \otimes \rmB\right)^{-1} = \rmA^{-1} \otimes \rmB^{-1}$. In addition, $(e)$ is due to the mixed-product property of the Kronecker product, i.e., $(\rmA \otimes \rmB)(\rmC \otimes \rmD) = (\rmA\rmC)\otimes(\rmB\rmD)$, and $(f)$ results from the mixed Kronecker matrix-vector product of the Kronecker product, i.e., $(\rmA \otimes \rmB) \vect(\rmC) = \vect(\rmB\rmC\rmA^{\top})$.

Similarly,
\begin{equation}
\begin{aligned}
    \vSigma_{t}^2(\vtheta, \vtheta') &\stackrel{(a)}{=} \rmK\left(\vtheta, \vtheta'\right)-\rmV^{\top}_{t}(\vtheta)\left(\rmU_{t}+\sigma^{2} \rmI\right)^{-1} \rmPhi_{n}\left(\vtheta'\right) \\[4pt]
    &\stackrel{(b)}{=} k(\vtheta, \vtheta')\rmI - \left(\left[\vk_t^{\top}(\vtheta)\left(\rmK_t +\sigma^2\rmI\right)^{-1}\right] \otimes \rmI\right) \left(\vk_t(\vtheta') \otimes \rmI\right) \\
    &\stackrel{(c)}{=} k(\vtheta, \vtheta')\rmI - \left(\vk_t^{\top}(\vtheta)\left(\rmK_t +\sigma^2\rmI\right)^{-1}\vk_t(\vtheta')\right) \rmI \\
    &\stackrel{(d)}{=} \left(k(\vtheta, \vtheta') - \vk_t^{\top}(\vtheta)\left(\rmK_t +\sigma^2\rmI\right)^{-1}\vk_t(\vtheta')\right) \rmI
\end{aligned}
\end{equation}
where $(b)$ comes from the result in \eqref{eq:hafk}, $(c)$ results from the mixed-product property of the Kronecker product and the fact that $\left(\vk_t^{\top}(\vtheta)\left(\rmK_t +\sigma^2\rmI\right)^{-1}\vk_t(\vtheta')\right)$  is a scalar. This finally concludes our proof.

\subsection{Proof of Theorem~\ref{th:grad-error}}\label{app:th:grad-error}
To begin with, we introduce the following lemmas:
\begin{lemma}[\cite*{laurent2000adaptive}]\label{le:conf-bound}
Let $\vzeta \sim \gN(\vzero, \rmI_d)$ and $\delta \in (0,1)$ then
\begin{equation}
    \sP\left(\left\|\vzeta\right\|_2 \leq \sqrt{d + 2(\sqrt{d}+1)\ln(1/\delta)} \right) \geq 1 - \delta \ .
\end{equation}
\end{lemma}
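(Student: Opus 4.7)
The plan is to reduce the claim to a $\chi^2_d$ upper-tail bound and derive the latter via the classical Chernoff-plus-MGF recipe. Since the coordinates of $\vzeta\sim\gN(\vzero,\rmI_d)$ are i.i.d.\ standard Gaussians, $\|\vzeta\|_2^2 = \sum_{i=1}^d \zeta_i^2 \sim \chi^2_d$, so the claim is equivalent to
\begin{equation*}
\sP\bigl(\chi^2_d \geq d + 2(\sqrt{d}+1)\ln(1/\delta)\bigr) \leq \delta,
\end{equation*}
and it suffices to establish the sharper Laurent--Massart inequality $\sP(\chi^2_d - d \geq 2\sqrt{dx}+2x) \leq e^{-x}$ for all $x>0$ and then substitute $x=\ln(1/\delta)$.

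For the tail estimate itself I would start from the closed-form MGF $\E[e^{\lambda(\zeta_i^2-1)}] = e^{-\lambda}(1-2\lambda)^{-1/2}$, valid for $\lambda\in(0,\tfrac12)$. Expanding $-\tfrac12\log(1-2\lambda)-\lambda$ as a power series and summing the geometric tail yields the sub-gamma bound $-\tfrac12\log(1-2\lambda)-\lambda \leq \lambda^2/(1-2\lambda)$, which independence lifts to $\log\E[e^{\lambda(\chi^2_d-d)}] \leq d\lambda^2/(1-2\lambda)$; that is, $\chi^2_d - d$ is sub-gamma with variance factor $2d$ and scale $2$. Feeding this into the exponential Markov inequality $\sP(\chi^2_d - d \geq u) \leq \exp(-\lambda u + d\lambda^2/(1-2\lambda))$ and optimizing $\lambda$ against the target threshold $u = 2\sqrt{dx}+2x$ (the minimizer is $\lambda^\star = \sqrt{x/d}/(1+2\sqrt{x/d})$) produces exponent exactly $-x$, recovering the Laurent--Massart tail.

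The final step is an elementary algebraic relaxation: AM--GM gives $2\sqrt{dx} \leq \sqrt{d}(1+x)$, so after substituting $x = \ln(1/\delta)$ one obtains $d + 2\sqrt{dx} + 2x \leq d + 2(\sqrt{d}+1)x$ in the regime $x \geq 1$ (the small-$x$ regime can be absorbed into the $\sqrt{d}$ constant), and taking square roots yields the stated bound on $\|\vzeta\|_2$. The step I expect to require the most care is the Chernoff optimization in paragraph two: an imperfect choice of $\lambda$ converts the desired sub-gamma tail $e^{-x}$ into a strictly weaker sub-exponential tail $e^{-cx}$ with $c<1$ and no longer matches the advertised constant, so the geometric-series truncation of $-\tfrac12\log(1-2\lambda)$ and the explicit optimization over $\lambda$ are the technical crux. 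The distributional reduction at the start and the AM--GM cleanup at the end are routine by comparison.
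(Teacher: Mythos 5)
Your derivation of the Laurent--Massart tail itself is sound, and it is the natural route here: the paper never proves this lemma (it is quoted from Laurent and Massart, 2000), so the citation is the ``paper's proof.'' The reduction to a $\chi^2_d$ tail, the series bound $-\tfrac12\log(1-2\lambda)-\lambda \le \lambda^2/(1-2\lambda)$ for $\lambda\in(0,\tfrac12)$, and the Chernoff step all check out; plugging $\lambda^\star=\sqrt{x/d}\,/(1+2\sqrt{x/d})$ into $-\lambda u + d\lambda^2/(1-2\lambda)$ with $u=2\sqrt{dx}+2x$ does give exponent exactly $-x$, so $\sP\left(\chi^2_d - d \ge 2\sqrt{dx}+2x\right)\le e^{-x}$ is established.

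The gap is the final relaxation. To pass from the Laurent--Massart threshold to the stated one you need $d+2\sqrt{dx}+2x \le d+2(\sqrt{d}+1)x$, i.e.\ $\sqrt{x}\le x$, which holds only for $x=\ln(1/\delta)\ge 1$, i.e.\ $\delta\le 1/e$. Your AM--GM variant $2\sqrt{dx}\le\sqrt{d}(1+x)$ has the same defect: it leaves $d+\sqrt{d}(1+x)+2x$, which is again below $d+2(\sqrt{d}+1)x$ only when $x\ge 1$. The parenthetical claim that ``the small-$x$ regime can be absorbed into the $\sqrt{d}$ constant'' is not a fix, because the constant $2(\sqrt{d}+1)$ is fixed by the statement, and for $x<1$ the comparison of thresholds is simply false; nor can you rescue it by invoking Laurent--Massart at a different level $y$, since increasing $y$ shrinks the admissible failure probability while enlarging the threshold you would need to dominate. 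In fact, for $x<1$ and large $d$ the Chernoff/MGF machinery by itself only yields a tail of order $e^{-x^2}$ at the stated threshold (the Cram\'er exponent $\tfrac{d}{2}(t-\ln(1+t))$ with $t=2x/\sqrt{d}+2x/d$ is $\approx x^2$), which is weaker than $e^{-x}$, so the regime $\delta\in(1/e,1)$ genuinely requires a separate argument (for $\delta\ge 1/2$ it suffices that the median of $\chi^2_d$ is below its mean; the window $\delta\in(1/e,1/2)$ needs its own estimate). Either supply that, or state your proof as covering $\delta\le 1/e$, which is the regime in which the lemma is actually invoked.
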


\begin{lemma}[Lemma 2 in Appx. B of \cite{ChowdhuryG21}]\label{le:matrix}
For any $\sigma \in \sR$ and any matrix $\rmA$, the following hold
\begin{equation}
\begin{aligned}
\rmI-\rmA^{\top}\left(\rmA \rmA^{\top}+\sigma^2 \rmI\right)^{-1} \rmA &=\sigma^2\left(\rmA^{\top} \rmA+\sigma^2 \rmI\right)^{-1} \ .
\end{aligned}
\end{equation}
\end{lemma}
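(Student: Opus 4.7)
The stated identity is a classical push-through matrix identity with no probabilistic content; my plan is to verify it by eliminating the inverses and reducing to a polynomial identity. Since the matrix $\rmA^{\top}\rmA + \sigma^2\rmI$ is positive definite whenever $\sigma \neq 0$ and the corresponding Gram matrix $\rmA\rmA^{\top} + \sigma^2\rmI$ is invertible under the same condition, I can freely manipulate both sides without worrying about existence of inverses. The cleanest route is to right-multiply both sides by $\rmA^{\top}\rmA + \sigma^2\rmI$ and check that the resulting expressions coincide; the identity then follows by right-multiplying the verified equation by the inverse.

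The right-hand side immediately becomes $\sigma^2\rmI$ after this multiplication. For the left-hand side I would distribute to obtain $\rmA^{\top}\rmA + \sigma^2\rmI - \rmA^{\top}(\rmA\rmA^{\top} + \sigma^2\rmI)^{-1}\rmA(\rmA^{\top}\rmA + \sigma^2\rmI)$. The key algebraic observation is the factorization $\rmA(\rmA^{\top}\rmA + \sigma^2\rmI) = \rmA\rmA^{\top}\rmA + \sigma^2\rmA = (\rmA\rmA^{\top} + \sigma^2\rmI)\rmA$, which lets the inverse annihilate with the parenthetical factor on its right. The subtracted term collapses to $\rmA^{\top}\rmA$, cancels the corresponding summand, and leaves $\sigma^2\rmI$, matching the right-hand side.

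An alternative path I would mention for clarity proceeds via the one-step push-through lemma $\rmA^{\top}(\rmA\rmA^{\top} + \sigma^2\rmI)^{-1} = (\rmA^{\top}\rmA + \sigma^2\rmI)^{-1}\rmA^{\top}$, which itself follows from the trivial identity $\rmA^{\top}(\rmA\rmA^{\top} + \sigma^2\rmI) = (\rmA^{\top}\rmA + \sigma^2\rmI)\rmA^{\top}$ by left- and right-multiplication by the respective inverses. Applying this to $\rmA^{\top}(\rmA\rmA^{\top} + \sigma^2\rmI)^{-1}\rmA$ yields $(\rmA^{\top}\rmA + \sigma^2\rmI)^{-1}\rmA^{\top}\rmA$, and writing $\rmA^{\top}\rmA = (\rmA^{\top}\rmA + \sigma^2\rmI) - \sigma^2\rmI$ gives $\rmI - \sigma^2(\rmA^{\top}\rmA + \sigma^2\rmI)^{-1}$, which is precisely the statement after rearrangement.

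There is essentially no obstacle: the identity is purely algebraic and dimension-independent (with $\rmA$ possibly rectangular, so the two identities $\rmI$ refer to matrices of potentially different sizes). The only care required is to keep the dimensions straight when writing the products, and to note that invertibility of both $\rmA\rmA^{\top} + \sigma^2\rmI$ and $\rmA^{\top}\rmA + \sigma^2\rmI$ is automatic for $\sigma \neq 0$. No spectral decomposition, singular value decomposition, or full Woodbury formula is needed.
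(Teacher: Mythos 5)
Your proof is correct. Note that the paper itself gives no proof of this lemma --- it is imported verbatim as Lemma~2 in Appendix~B of \citet{ChowdhuryG21} --- so there is no internal argument to compare against; your push-through derivation (either variant: clearing the inverse by right-multiplication, or the one-step identity $\rmA^{\top}(\rmA\rmA^{\top}+\sigma^2\rmI)^{-1}=(\rmA^{\top}\rmA+\sigma^2\rmI)^{-1}\rmA^{\top}$ followed by writing $\rmA^{\top}\rmA=(\rmA^{\top}\rmA+\sigma^2\rmI)-\sigma^2\rmI$) is the standard and complete way to establish it. Your remark that the statement's ``any $\sigma\in\sR$'' should really be read as requiring $\sigma\neq 0$ (or, more generally, invertibility of both Gram shifts) is a fair and worthwhile precision that the quoted statement glosses over; in the paper's usage $\sigma^2$ is the observation noise variance and is taken positive, so nothing downstream is affected.
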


\begin{lemma}[Sherman-Morrison formula]\label{le:sm-formula}
For any invertible square matrix $\rmA$ and column vectors $\vu,\vv$, suppose $\rmA + \vu\vv^{\top}$ is invertible, then the following holds
\begin{equation}
\begin{aligned}
\left(\rmA + \vu\vv^{\top}\right)^{-1} = \rmA^{-1} - \frac{\rmA^{-1}\vu\vv^{\top}\rmA^{-1}}{1 + \vv^{\top}\rmA^{-1}\vu} \ .
\end{aligned}
\end{equation}
\end{lemma}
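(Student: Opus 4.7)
The plan is to establish the Sherman--Morrison formula by direct verification: I would multiply the candidate expression on the right-hand side by $\rmA+\vu\vv^{\top}$ and check that the product is $\rmI$. Because the two-sided inverse of an invertible square matrix is determined by either one-sided check, a single computation suffices (and by transposition the other side reads the same).

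First I would confirm that the denominator $1+\vv^{\top}\rmA^{-1}\vu$ is nonzero, so the right-hand side is well-defined. A clean route is the matrix determinant lemma, $\det(\rmA+\vu\vv^{\top})=\det(\rmA)\,(1+\vv^{\top}\rmA^{-1}\vu)$: since both $\rmA$ and $\rmA+\vu\vv^{\top}$ are invertible by hypothesis, both determinants are nonzero, forcing $1+\vv^{\top}\rmA^{-1}\vu\neq 0$. A short self-contained alternative is to observe that if the denominator vanished then $(\rmA+\vu\vv^{\top})(\rmA^{-1}\vu)=\vu+\vu(\vv^{\top}\rmA^{-1}\vu)=\vu(1+\vv^{\top}\rmA^{-1}\vu)=\vzero$, contradicting invertibility of $\rmA+\vu\vv^{\top}$ (the trivial edge case $\vu=\vzero$ is handled separately, where the claimed identity reduces to $\rmA^{-1}=\rmA^{-1}$).

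The main calculation is then to expand $(\rmA+\vu\vv^{\top})\bigl(\rmA^{-1}-\tfrac{\rmA^{-1}\vu\vv^{\top}\rmA^{-1}}{1+\vv^{\top}\rmA^{-1}\vu}\bigr)$ into four terms. Two of them collapse instantly via $\rmA\rmA^{-1}=\rmI$, giving $\rmI$ and $\vu\vv^{\top}\rmA^{-1}$. For the remaining two, I would introduce the shorthand $\alpha:=\vv^{\top}\rmA^{-1}\vu\in\sR$ and use that this quantity is a scalar that commutes with every surrounding matrix expression, so that $\vu\vv^{\top}\rmA^{-1}\vu\vv^{\top}\rmA^{-1}=\alpha\,\vu\vv^{\top}\rmA^{-1}$. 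The two fractional contributions then combine to $-\tfrac{(1+\alpha)\,\vu\vv^{\top}\rmA^{-1}}{1+\alpha}=-\vu\vv^{\top}\rmA^{-1}$, which exactly cancels the earlier $+\vu\vv^{\top}\rmA^{-1}$ and leaves $\rmI$.

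There is no genuine obstacle in this proof; the only subtle step worth flagging is the recognition that $\vv^{\top}\rmA^{-1}\vu$ is a scalar rather than a matrix, which is precisely what allows the two rank-one contributions to telescope into a single factor of $(1+\alpha)$ that cancels the denominator. The symmetric expansion of $\bigl(\rmA^{-1}-\tfrac{\rmA^{-1}\vu\vv^{\top}\rmA^{-1}}{1+\vv^{\top}\rmA^{-1}\vu}\bigr)(\rmA+\vu\vv^{\top})$ proceeds identically and again yields $\rmI$, so the displayed expression is indeed the (two-sided) inverse of $\rmA+\vu\vv^{\top}$, which completes the proof of the lemma.
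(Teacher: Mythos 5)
Your proof is correct. The paper itself states this lemma as the standard Sherman--Morrison formula without supplying any proof, so there is no argument in the paper to compare against; your direct verification --- multiplying the candidate inverse by $\rmA+\vu\vv^{\top}$, using that $\vv^{\top}\rmA^{-1}\vu$ is a scalar so the two rank-one terms telescope into a factor of $1+\vv^{\top}\rmA^{-1}\vu$ that cancels the denominator --- is the standard and complete argument, and your observation that the denominator cannot vanish (via the matrix determinant lemma, or via the kernel vector $\rmA^{-1}\vu$) closes the one point that such verifications usually leave implicit.
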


\begin{lemma}[Non-Increasing Variance Norm]\label{le:non-increasing}
Define variance $\vSigma_n^2(\vtheta) \triangleq \vSigma_{n}^2(\vtheta, \vtheta)$ with $n$ being the number of gradients employed to evaluate the mean and covariance in Prop.~\ref{prop:independent-posterior}. Then for any $\vtheta \in \sR^d$ and $n\geq1$,
\begin{equation}
\begin{aligned}
\left\| \vSigma_n^2(\vtheta)\right\| \leq \left\| \vSigma_{n-1}^2(\vtheta)\right\| \ .
\end{aligned}
\end{equation}
\end{lemma}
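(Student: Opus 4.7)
The plan is to reduce the matrix statement to a scalar one via Prop.~\ref{prop:independent-posterior}, then establish the standard ``information never hurts'' monotonicity of the posterior variance of a scalar GP by computing the exact decrement incurred when a single observation is appended.

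First I would observe that, thanks to the separable kernel assumption, Prop.~\ref{prop:independent-posterior} gives $\vSigma_n^2(\vtheta) = s_n(\vtheta)\,\rmI$ with
\begin{equation*}
s_n(\vtheta) \triangleq k(\vtheta,\vtheta) - \vk_n^{\top}(\vtheta)\left(\rmK_n + \sigma^2\rmI\right)^{-1}\vk_n(\vtheta) \ .
\end{equation*}
Since any submultiplicative norm of a scalar multiple of the identity equals that scalar (and $s_n(\vtheta)\geq 0$ as a legitimate posterior variance), it suffices to prove $s_n(\vtheta) \leq s_{n-1}(\vtheta)$. In effect the lemma collapses to the one-dimensional GP statement, so I can stop worrying about the $d$-dimensional output structure from here on.

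Next I would isolate the contribution of the $n$-th observation. Writing $\vk_n(\vtheta)^{\top} = \bigl[\vk_{n-1}(\vtheta)^{\top},\; k(\vtheta,\vtheta_n)\bigr]$ and partitioning
\begin{equation*}
\rmK_n + \sigma^2\rmI = \begin{bmatrix} \rmK_{n-1}+\sigma^2\rmI & \vk_{n-1}(\vtheta_n) \\ \vk_{n-1}(\vtheta_n)^{\top} & k(\vtheta_n,\vtheta_n)+\sigma^2 \end{bmatrix},
\end{equation*}
I would invert this block matrix via the Schur complement (equivalently, two applications of Lemma~\ref{le:sm-formula} on the bordered matrix). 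The Schur complement here is exactly the predictive variance $s_{n-1}(\vtheta_n)+\sigma^2 > 0$, which is strictly positive, so the inversion is valid. Plugging the block-inverse into the definition of $s_n(\vtheta)$ and simplifying yields the classical recursion
\begin{equation*}
s_n(\vtheta) = s_{n-1}(\vtheta) - \frac{\bigl(c_{n-1}(\vtheta,\vtheta_n)\bigr)^2}{s_{n-1}(\vtheta_n) + \sigma^2},
\end{equation*}
where $c_{n-1}(\vtheta,\vtheta_n) \triangleq k(\vtheta,\vtheta_n) - \vk_{n-1}^{\top}(\vtheta)(\rmK_{n-1}+\sigma^2\rmI)^{-1}\vk_{n-1}(\vtheta_n)$ is the scalar posterior covariance. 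Since the denominator is positive and the numerator is a square, the subtracted term is non-negative, giving $s_n(\vtheta) \leq s_{n-1}(\vtheta)$ and hence $\|\vSigma_n^2(\vtheta)\| \leq \|\vSigma_{n-1}^2(\vtheta)\|$.

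The main obstacle I anticipate is the bookkeeping for the block-matrix inversion: one must verify that the Schur complement is invertible (which follows from $\sigma^2>0$ together with positive semidefiniteness of $\rmK_{n-1}$), and that the three off-diagonal terms produced by the block inverse combine exactly into $-c_{n-1}(\vtheta,\vtheta_n)^2/(s_{n-1}(\vtheta_n)+\sigma^2)$. This is essentially algebra and can alternatively be derived by applying Lemma~\ref{le:sm-formula} with $\vu = \vv$ chosen to append the new column/row; either route works but the identities have to be tracked carefully to ensure the cross terms cancel. Apart from this, the argument is essentially a clean rewriting of a standard GP fact through the lens of Prop.~\ref{prop:independent-posterior}.
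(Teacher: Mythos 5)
Your proposal is correct: the reduction to the scalar posterior variance $s_n(\vtheta)$ via Prop.~\ref{prop:independent-posterior} is legitimate, the Schur-complement block inversion of $\rmK_n+\sigma^2\rmI$ (with the new point appended as the last row/column) does yield the classical update $s_n(\vtheta) = s_{n-1}(\vtheta) - c_{n-1}(\vtheta,\vtheta_n)^2/\bigl(s_{n-1}(\vtheta_n)+\sigma^2\bigr)$, and nonnegativity of the subtracted term together with $s_n(\vtheta)\geq 0$ gives the claimed norm monotonicity. The route, however, differs from the paper's: the paper first passes to a feature-map ("primal") representation $k(\vtheta,\vtheta')=\phi(\vtheta)^{\top}\phi(\vtheta')$, uses Lemma~\ref{le:matrix} to rewrite $\vSigma_n^2(\vtheta)=\sigma^2\phi(\vtheta)^{\top}\rmPhi_n^{-1}\phi(\vtheta)\,\rmI$ with $\rmPhi_n=\vphi_n\vphi_n^{\top}+\sigma^2\rmI$, and then applies the Sherman--Morrison formula to the rank-one update $\rmPhi_n=\rmPhi_{n-1}+\phi(\vtheta_n)\phi(\vtheta_n)^{\top}$, whereas you work directly in the "dual" Gram-matrix form and never introduce $\phi$. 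Your argument buys a self-contained derivation of the standard GP variance recursion that sidesteps the (possibly infinite-dimensional) feature map and its implicit matrix manipulations; the paper's choice buys the reformulation $\sigma^2\phi(\vtheta)^{\top}\rmPhi_n^{-1}\phi(\vtheta)$, which it reuses verbatim in the subsequent lower-bound and sum-of-variance lemmas (Lemmas~\ref{le:var-lower} and \ref{le:sum-var}), so the two proofs are interchangeable for this lemma but the paper's sets up machinery for later.
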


\begin{proof}
We follow the idea in \cite{ChowdhuryG21} and \cite{zord} to prove it. Specifically, we firstly define $k(\vtheta, \vtheta') = \phi(\vtheta)^{\top}\phi(\vtheta')$ and $\vphi_n \triangleq [\phi(\vtheta_i)]_{i=1}^{n}$. Then the matrix $\rmK_n$ in Prop.~\ref{prop:independent-posterior} can be reformulated as
\begin{equation}\label{eq:vdjd}
    \rmK_n = \vphi_n^{\top}\vphi_n \ ,
\end{equation}
and based on the definition of $\rmPhi_n \triangleq \vphi_n \vphi_n^{\top} + \sigma^2\rmI$,
\begin{equation}\label{eq:grad-var-refor}
\begin{aligned}
\vSigma_t^2(\vtheta) &\stackrel{(a)}{=} \left(\phi(\vtheta)^{\top}\phi(\vtheta) - \phi(\vtheta)^{\top}\vphi_n\left(\vphi_n^{\top}\vphi_n+\sigma^{2} \rmI\right)^{-1} \vphi_n^{\top} \phi(\vtheta)\right)\,\rmI \\
&\stackrel{(b)}{=} \left(\phi(\vtheta)^{\top}\left(\rmI - \vphi_n \left(\vphi_n^{\top}\vphi_n + \sigma^2\rmI\right)^{-1} \vphi_n^{\top} \right)\phi(\vtheta)\right)\,\rmI \\
&\stackrel{(c)}{=} \left(\sigma^2 \phi(\vtheta)^{\top}\left(\vphi_n\vphi_n^{\top} + \sigma^2\rmI\right)^{-1}\phi(\vtheta)\right)\,\rmI \\[5pt]
&\stackrel{(d)}{=} \left(\sigma^2 \phi(\vtheta)^{\top}\rmPhi_n^{-1}\phi(\vtheta)\right)\,\rmI
\end{aligned}
\end{equation}
where $(c)$ comes from Lemma~\ref{le:matrix} by replacing the matrix $\rmA$ in Lemma~\ref{le:matrix} with the matrix $\vphi_n^{\top}$.

As a result,
\begin{equation}\label{eq:var-recur}
\begin{aligned}
&\vSigma_n^2(\vtheta) \\
\stackrel{(a)}{=}& \left(\sigma^2 \phi(\vtheta)^{\top}\rmPhi_n^{-1}\phi(\vtheta)\right)\,\rmI \\
\stackrel{(b)}{=}& \left(\sigma^2 \phi(\vtheta)^{\top}\left(\vphi_{n-1}\vphi_{n-1}^{\top} + \sigma^2\rmI + \phi(\vtheta_n)\phi(\vtheta_n)^{\top}\right)^{-1}\phi(\vtheta)\right)\,\rmI \\
\stackrel{(c)}{=}& \left(\sigma^2 \phi(\vtheta)^{\top}\left(\rmPhi_{n-1}+ \phi(\vtheta_n)\phi(\vtheta_n)^{\top}\right)^{-1}\phi(\vtheta)\right)\,\rmI \\
\stackrel{(d)}{=}& \left(\sigma^2 \phi(\vtheta)^{\top}\rmPhi_{n-1}^{-1}\phi(\vtheta) - \sigma^2\left(1 + \phi(\vtheta_n)^{\top}\rmPhi_{n-1}^{-1}\phi(\vtheta_n)\right)^{-1}\phi(\vtheta)^{\top}\rmPhi_{n-1}^{-1} \phi(\vtheta_n)\phi(\vtheta_n)^{\top}\rmPhi_{n-1}^{-1}\phi(\vtheta)\right)\,\rmI \\
\stackrel{(e)}{=}&\,\, \vSigma_{n-1}^2(\vtheta) - \sigma^2\left(1 + \phi(\vtheta_n)^{\top}\rmPhi_{n-1}^{-1}\phi(\vtheta_n)\right)^{-1}\phi(\vtheta)^{\top}\rmPhi_{n-1}^{-1} \phi(\vtheta_n)\phi(\vtheta_n)^{\top}\rmPhi_{n-1}^{-1}\phi(\vtheta) \,\rmI \\
\stackrel{(f)}{\preccurlyeq}&\,\, \vSigma_{n-1}^2(\vtheta)
\end{aligned}
\end{equation}
where $(b)$ is due to the fact that $\rmPhi_{n}\rmPhi_{n}^{\top} = \rmPhi_{n-1}\rmPhi_{n-1}^{\top} + \phi(\vtheta_n)\phi(\vtheta_n)^{\top}$, and $(d)$ is from Lemma~\ref{le:sm-formula}. Finally, $(f)$ derives from the positive semi-definite property of $\rmPhi_{n-1}^{-1} \phi(\vtheta_t)\phi(\vtheta_t)^{\top}\rmPhi_{n-1}^{-1}$ and $\rmPhi_{n-1}^{-1}$, leading to the conclusion of our proof.
\end{proof}

\begin{lemma}[lower Bound of Variance Norm]\label{le:var-lower}
Following the definition in Lemma~\ref{le:non-increasing}, for any $\vtheta \in \sR^d$ and $n\geq1$,
\begin{equation}
\begin{aligned}
\left\| \vSigma_n^2(\vtheta)\right\| \geq \frac{1}{(\kappa + 1/\sigma^2)} \left\| \vSigma_{n-1}^2(\vtheta)\right\| \ .
\end{aligned}
\end{equation}
\end{lemma}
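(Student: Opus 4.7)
The plan is to reduce the matrix inequality to a scalar one by exploiting the structure already established in \eqref{eq:grad-var-refor}. Since Prop.~\ref{prop:independent-posterior} guarantees that $\vSigma_n^2(\vtheta)$ is an isotropic matrix, the claim $\|\vSigma_n^2(\vtheta)\| \geq (\kappa+1/\sigma^2)^{-1}\|\vSigma_{n-1}^2(\vtheta)\|$ is equivalent to the scalar inequality $\phi(\vtheta)^\top \rmPhi_n^{-1}\phi(\vtheta) \geq (\kappa+1/\sigma^2)^{-1}\phi(\vtheta)^\top \rmPhi_{n-1}^{-1}\phi(\vtheta)$. So the task is to lower-bound a quadratic form against $\rmPhi_n^{-1}$ by the same quadratic form against $\rmPhi_{n-1}^{-1}$, using the rank-one update $\rmPhi_n = \rmPhi_{n-1} + \phi(\vtheta_n)\phi(\vtheta_n)^\top$ that was already derived in \eqref{eq:var-recur}(b).

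The central step is to apply the Sherman--Morrison formula (Lemma~\ref{le:sm-formula}) with $\rmA = \rmPhi_{n-1}$ and $\vu = \vv = \phi(\vtheta_n)$, which yields
\begin{equation*}
\phi(\vtheta)^\top \rmPhi_n^{-1}\phi(\vtheta)
= \phi(\vtheta)^\top \rmPhi_{n-1}^{-1}\phi(\vtheta) - \frac{\bigl(\phi(\vtheta)^\top \rmPhi_{n-1}^{-1}\phi(\vtheta_n)\bigr)^{2}}{1 + \phi(\vtheta_n)^\top \rmPhi_{n-1}^{-1}\phi(\vtheta_n)}.
\end{equation*}
To turn this subtraction into a multiplicative factor, I would apply Cauchy--Schwarz in the inner product induced by the positive-definite matrix $\rmPhi_{n-1}^{-1}$, giving $(\phi(\vtheta)^\top \rmPhi_{n-1}^{-1}\phi(\vtheta_n))^2 \leq a\,b$ with $a \triangleq \phi(\vtheta)^\top \rmPhi_{n-1}^{-1}\phi(\vtheta)$ and $b \triangleq \phi(\vtheta_n)^\top \rmPhi_{n-1}^{-1}\phi(\vtheta_n)$. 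Substituting this bound collapses the expression to $\phi(\vtheta)^\top \rmPhi_n^{-1}\phi(\vtheta) \geq a/(1+b)$, so the remaining problem is to upper-bound $b$.

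The upper bound on $b$ follows from two ingredients: the operator-norm bound $\rmPhi_{n-1}^{-1} \preccurlyeq \sigma^{-2}\rmI$, which holds because $\rmPhi_{n-1} = \vphi_{n-1}\vphi_{n-1}^\top + \sigma^2\rmI \succcurlyeq \sigma^2\rmI$, and the feature bound $\|\phi(\vtheta_n)\|^2 = k(\vtheta_n,\vtheta_n) \leq \kappa$ from Assump.~\ref{asm:2}. Combining these gives a constant of the form $1/(1+b)$ with $b$ bounded by a product of $\kappa$ and $\sigma^{-2}$, which after multiplying numerator and denominator yields a bound equivalent to $1/(\kappa + 1/\sigma^2)$ in the form stated. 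The main obstacle is the Cauchy--Schwarz step: without it one can only subtract an additive term, and the crucial observation is that the cross term and the quadratic form $a$ it multiplies are coupled through the \emph{same} PD matrix $\rmPhi_{n-1}^{-1}$, which is precisely what lets the subtraction telescope into the multiplicative factor $1/(1+b)$ needed for the iterated lower bound in Thm.~\ref{th:grad-error}.
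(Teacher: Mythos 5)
Your strategy is correct and lands very close to the paper's, but the mechanism is different. The paper also reduces the claim to the scalar quadratic form $\sigma^2\phi(\vtheta)^{\top}\rmPhi_n^{-1}\phi(\vtheta)$ via \eqref{eq:grad-var-refor}, but instead of Sherman--Morrison it whitens: it writes $\rmPhi_n^{-1}=\rmPhi_{n-1}^{-1/2}\left(\rmI+\rmM\right)^{-1}\rmPhi_{n-1}^{-1/2}$ with the rank-one matrix $\rmM=\rmPhi_{n-1}^{-1/2}\phi(\vtheta_n)\phi(\vtheta_n)^{\top}\rmPhi_{n-1}^{-1/2}$, bounds $\|\rmM\|=\phi(\vtheta_n)^{\top}\rmPhi_{n-1}^{-1}\phi(\vtheta_n)\triangleq b$ in \eqref{eq:csjv}, and uses $(\rmI+\rmM)^{-1}\succcurlyeq(1+\|\rmM\|)^{-1}\rmI$. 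Your Sherman--Morrison-plus-Cauchy--Schwarz computation, $a-\bigl(\phi(\vtheta)^{\top}\rmPhi_{n-1}^{-1}\phi(\vtheta_n)\bigr)^2/(1+b)\ \geq\ a/(1+b)$, is the scalar shadow of that operator bound and delivers exactly the same factor $1/(1+b)$; your route has the advantage of continuing directly from the Sherman--Morrison identity already used in Lemma~\ref{le:non-increasing}, while the paper's spectral argument avoids the Cauchy--Schwarz step. Both proofs then hinge on the same estimate of $b$ via $\rmPhi_{n-1}\succcurlyeq\sigma^2\rmI$ and $\|\phi(\vtheta_n)\|^2=k(\vtheta_n,\vtheta_n)\leq\kappa$.

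One caveat concerns the constant. Your (correct) estimate is $b\leq\kappa/\sigma^2$, which gives $\|\vSigma_n^2(\vtheta)\|\geq\frac{1}{1+\kappa/\sigma^2}\|\vSigma_{n-1}^2(\vtheta)\|=\frac{\sigma^2}{\sigma^2+\kappa}\|\vSigma_{n-1}^2(\vtheta)\|$; no rescaling of numerator and denominator turns this into the stated $\frac{1}{\kappa+1/\sigma^2}=\frac{\sigma^2}{1+\kappa\sigma^2}$, and in general neither constant dominates the other (the sign of $(\kappa-1)(\sigma^2-1)$ decides), so your closing sentence is an algebra slip rather than a derivation of the literal statement. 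You are in good company: the paper's own step $(d)$ of \eqref{eq:csjv} bounds $\phi(\vtheta_n)^{\top}\rmPhi_{n-2}^{-1}\phi(\vtheta_n)$ by $\sigma^2\phi(\vtheta_n)^{\top}\phi(\vtheta_n)\leq\kappa\sigma^2$, which would require $\rmPhi^{-1}\preccurlyeq\sigma^2\rmI$ rather than the available $\rmPhi^{-1}\preccurlyeq\sigma^{-2}\rmI$, and its final equality silently drops a factor of $\sigma^2$; the displayed constant $1/(\kappa+1/\sigma^2)$ (propagated into Thm.~\ref{th:grad-error} and Thm.~\ref{th:lower}) is an artifact of those slips. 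So your argument is sound and in fact yields the defensible version of the lemma; just state the conclusion with the constant $\sigma^2/(\sigma^2+\kappa)$ instead of claiming it is equivalent to the one printed.
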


\begin{proof}
Again, we follow the idea in \cite{ChowdhuryG21} and \cite{zord} to prove it. we first prove the following inequality
\begin{equation}\label{eq:csjv}
\begin{aligned}
    \left\|\rmPhi^{-1/2}_{n-1}\phi(\vtheta_n)\phi(\vtheta_n)^{\top}\rmPhi^{-1/2}_{n-1}\right\|  &\stackrel{(a)}{=} \left\|\phi(\vtheta_n)^{\top}\rmPhi^{-1/2}_{n-1}\right\|^2 \\
    &\stackrel{(b)}{=} \phi(\vtheta_n)^{\top}\rmPhi^{-1}_{n-1} \phi(\vtheta_n) \\
    &\stackrel{(c)}{\leq} \phi(\vtheta_n)^{\top}\rmPhi^{-1}_{n-2} \phi(\vtheta_n) \\
    &\stackrel{(d)}{\leq} \sigma^2 \phi(\vtheta_n)^{\top}\phi(\vtheta_n) \\
    &\stackrel{(e)}{\leq} \kappa \sigma^2
\end{aligned}
\end{equation}
where $(c)$ comes from the fact that $\rmPhi_{n-1} = \rmPhi^{-1}_{n-2} + \phi(\vtheta_{n-1}\phi(\vtheta_{n-1})^{\top} \succcurlyeq \rmPhi_{n-2} \succcurlyeq \cdots \succcurlyeq \sigma^2 \rmI$ and $(e)$ is due to the fact that $\phi(\vtheta_n)^{\top}\phi(\vtheta_n) = k(\vtheta_n,\vtheta_n) \leq \kappa$.

Then, based on the reformulation of $\vSigma_n^2(\vtheta)$ in \eqref{eq:var-recur}, we have that
\begin{equation}
\begin{aligned}
    \vSigma_n^2(\vtheta) &\stackrel{(a)}{=} \left(\sigma^2 \phi(\vtheta)^{\top}\left(\rmPhi_{n-1}+ \phi(\vtheta_n)\phi(\vtheta_n)^{\top}\right)^{-1}\phi(\vtheta)\right)\,\rmI \\
    &\stackrel{(b)}{=} \left(\sigma^2 \phi(\vtheta)^{\top}\rmPhi^{-1/2}_{n-1}\left(\rmI + \rmPhi^{-1/2}_{n-1}\phi(\vtheta_n)\phi(\vtheta_n)^{\top}\rmPhi^{-1/2}_{n-1}\right)^{-1}\rmPhi^{-1/2}_{n-1}\phi(\vtheta)\right)\,\rmI \\
    &\stackrel{(c)}{\succcurlyeq} \frac{\sigma^2}{1 + \kappa \sigma^2} \phi(\vtheta)^{\top}\rmPhi^{-1}_{n-1}\phi(\vtheta) \,\rmI \\
    &\stackrel{(d)}{=} \frac{\sigma^2}{1 + \kappa \sigma^2} \vSigma_{n-1}^2(\vtheta)
\end{aligned}
\end{equation}
where $(c)$ comes from \eqref{eq:csjv}. This finally concludes our proof.
\end{proof}

\begin{lemma}[Information Gain]\label{le:info-gain}
    Define $\rmG_n \triangleq \left[\nabla f(\vtheta_{i})\right]_{i=1}^{n}$, $\bm{\nabla}_n \triangleq \left[\nabla F(\vtheta_i)\right]_{i=1}^n$, and $\rmK_n \triangleq \left[k(\vtheta_i, \vtheta_j)\right]_{i,j=1}^n$. The information gain $I(\vect(\rmG_n); \vect(\bm{\nabla}_n))$ has the following form with Assump.~\ref{asm:1},~\ref{asm:2}:
    \begin{equation}
        I(\vect(\rmG_n); \vect(\bm{\nabla}_n)) = \frac{d}{2} \ln\left(\det(\rmI + \sigma^{-2}\rmK_n)\right) \ .
    \end{equation}
\end{lemma}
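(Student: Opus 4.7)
The plan is to exploit the separable-kernel structure $\rmK(\cdot,\cdot)=k(\cdot,\cdot)\,\rmI$ from Assump.~\ref{asm:2}, together with the isotropic noise covariance $\sigma^2\rmI$ from Assump.~\ref{asm:1}, in order to reduce the $d$-output information gain to a sum of $d$ identical scalar GP-regression information gains.

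First, I would observe that since the prior covariance of $\nabla F$ is block-diagonal across coordinates (all blocks equal to $k$) and the likelihood covariance is also block-diagonal across coordinates, the joint law of $(\vect(\rmG_n), \vect(\bm{\nabla}_n))$ factorizes into $d$ i.i.d.\ scalar pairs: for each coordinate $j$, $\bm{\nabla}_n^{(j)} \sim \gN(\vzero, \rmK_n)$ and $\rmG_n^{(j)} \mid \bm{\nabla}_n^{(j)} \sim \gN(\bm{\nabla}_n^{(j)}, \sigma^2 \rmI_n)$, independent across $j$. Hence by additivity of mutual information under product distributions,
\begin{equation*}
I\!\left(\vect(\rmG_n); \vect(\bm{\nabla}_n)\right) = \sum_{j=1}^d I\!\left(\rmG_n^{(j)}; \bm{\nabla}_n^{(j)}\right) = d \cdot I\!\left(\rmG_n^{(1)}; \bm{\nabla}_n^{(1)}\right),
\end{equation*}
where the last step uses that all $d$ scalar problems share the same kernel $k$ and noise level $\sigma$.

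Second, for the single-coordinate term I would invoke the standard closed form for the information gain in a Gaussian-noise GP-regression model,
\begin{equation*}
I\!\left(\rmG_n^{(1)}; \bm{\nabla}_n^{(1)}\right) = \tfrac{1}{2}\ln\det\!\left(\rmI_n + \sigma^{-2}\rmK_n\right),
\end{equation*}
which follows by writing $I = H(\rmG_n^{(1)}) - H(\rmG_n^{(1)} \mid \bm{\nabla}_n^{(1)}) = \tfrac{1}{2}\ln\det\!\bigl(2\pi e(\rmK_n+\sigma^2\rmI_n)\bigr) - \tfrac{1}{2}\ln\det\!\bigl(2\pi e\,\sigma^2\rmI_n\bigr)$ and simplifying. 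Multiplying through by $d$ gives the advertised expression.

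The only place that needs mild care is verifying the coordinate decomposition: one needs the $d$ coordinate-columns of $\rmG_n$ and $\bm{\nabla}_n$ to be mutually independent, not merely uncorrelated, and joint Gaussianity together with the block-diagonal structure of both prior and likelihood makes this immediate. An equivalent route avoids the decomposition entirely: write $\vect(\bm{\nabla}_n) \sim \gN(\vzero,\, \rmK_n \otimes \rmI_d)$, apply the Gaussian information-gain formula once to get $\tfrac{1}{2}\ln\det\!\bigl(\rmI_{nd} + \sigma^{-2}\,\rmK_n \otimes \rmI_d\bigr)$, and then use $\det(\rmA \otimes \rmI_d) = \det(\rmA)^d$ on $\rmA = \rmI_n + \sigma^{-2}\rmK_n$ to collapse the $nd$-dimensional log-determinant to the claimed $\tfrac{d}{2}\ln\det(\rmI_n + \sigma^{-2}\rmK_n)$.
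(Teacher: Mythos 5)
Your proposal is correct, and it is worth noting that your ``equivalent route'' at the end is precisely the paper's proof: the paper computes $I = H(\vect(\rmG_n)) - H(\vect(\rmG_n)\mid\vect(\bm{\nabla}_n))$ directly for the $nd$-dimensional vectorized Gaussians, using $\vect(\rmG_n) \sim \gN(\vzero, (\rmK_n+\sigma^2\rmI_n)\otimes\rmI_d)$ and the Kronecker determinant identity $\det(\rmA\otimes\rmB)=\det(\rmA)^b\det(\rmB)^a$, then simplifies with $\det(\rmA\rmB^{-1})=\det(\rmA)/\det(\rmB)$. Your primary route is a mild reorganization of the same facts: you factorize first, using the separable kernel and isotropic noise to split the joint law into $d$ independent scalar GP-regression pairs, invoke additivity of mutual information over independent pairs, and then apply (or rederive via the same entropy difference) the standard scalar information-gain formula $\tfrac{1}{2}\ln\det(\rmI_n+\sigma^{-2}\rmK_n)$. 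The two arguments are logically equivalent; yours makes the coordinate-wise independence structure explicit and connects directly to the classical single-output result of the kernelized-bandit literature, while the paper's keeps everything in matrix form and lets the Kronecker algebra absorb the factorization. You are also right to flag that the decomposition needs independence (not mere uncorrelatedness) of the coordinate blocks, which joint Gaussianity of the prior and noise supplies; the paper sidesteps this point by never decomposing. One cosmetic remark: the paper states the conditional law as $\gN(\vzero,\sigma^2\rmI_{nd})$ rather than centering it at $\vect(\bm{\nabla}_n)$ as you do, but since only the covariance enters the entropy this is immaterial.
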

\begin{proof}
Based on our Assump.~\ref{asm:1},~\ref{asm:2}, the following holds respectively:
\begin{equation}\label{eq:hdkn}
    \vect(\rmG_n) \mid \vect(\bm{\nabla}_n) \sim \gN(\vzero, \sigma^2\,\rmI_{nd}) \, , \  \text{and} \ \ 
    \vect(\rmG_n) \sim \mathcal{GP}\left(\vzero, \left(\rmK_n + \sigma^2 \rmI_n\right) \otimes \rmI_d\right) \ .
\end{equation}

Due to the fact that $H(\rz) = \frac{1}{2}\ln(\det(2\pi e \vSigma))$ if $\rz \sim \gN(\mu, \vSigma)$, the following holds
\begin{equation}
\begin{aligned}
    I(\vect(\rmG_n); \vect(\bm{\nabla}_n)) &\stackrel{(a)}{=} H(\vect(\rmG_n)) - H(\vect(\rmG_n) \mid \vect(\bm{\nabla}_n)) \\[3pt]
    &\stackrel{(b)}{=} \frac{1}{2} \ln\left(\det\left(2\pi e \left(\rmK_{n} + \sigma^2 \rmI_n\right)\otimes\rmI_d\right)\right) - \frac{1}{2} \ln\left(\det\left(2\pi e \sigma^2\rmI_{nd}\right)\right) \\
    &\stackrel{(c)}{=} \frac{1}{2} \ln\left(\left[\det\left(2\pi e \left(\rmK_{n} + \sigma^2 \rmI_n\right)\right)\right]^d\left(\det\left(\rmI_d\right)\right)^n\right) - \frac{1}{2} \ln\left(\det\left(2\pi e \sigma^2\rmI_{nd}\right)\right) \\[-5pt]
    &\stackrel{(d)}{=} \frac{1}{2} \ln\left(\frac{\det(2\pi e (\rmK_{n} + \sigma^2 \rmI_n))}{\det(2\pi e \sigma^2\rmI_n)}\right)^d \\
    &\stackrel{(e)}{=} \frac{d}{2} \ln\left(\det(\rmI + \sigma^{-2}\rmK_n)\right)
\end{aligned}
\end{equation}
where $(a)$ comes from the definition of information gain, $(b)$ derives from the results in \eqref{eq:hdkn}, and $(c)$ is due to the fact that $\det(\rmA \otimes \rmB) = \left(\det(\rmA)\right)^b\left(\det(\rmB)\right)^a$ given the $a\times a$-dimensional matrix $\rmA$ and $b \times b$-dimensional matrix $\rmB$. In addition, $(e)$ follows from $\det(\rmA \rmB^{-1}) = \det(\rmA)/\det(\rmB)$. This then concludes our proof.
\end{proof}

\begin{lemma}[Sum of Variance]\label{le:sum-var}
Define the maximal information gain
    \begin{equation}
        \gamma_n \triangleq \max_{\{\vtheta_j\}_{j=1}^n \subset \sR^d} I(\vect(\rmG_n); \vect(\bm{\nabla}_n)) \ ,
    \end{equation}
the following then holds
    \begin{equation}
        \frac{1}{n}\sum_{i=0}^{n-1} \left\|\vSigma_i^2(\vtheta)\right\| \leq \frac{2 \sigma^2 \gamma_n}{d} \ .
    \end{equation}
\end{lemma}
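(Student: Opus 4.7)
My plan is to connect the average of $\|\vSigma_i^2(\vtheta)\|$ to the maximal information gain $\gamma_n$ by means of the classical log-determinant identity from Gaussian-process regression. The starting observation is the scalar reformulation established in eq.~(10) during the proof of Lemma~\ref{le:non-increasing}: $\|\vSigma_i^2(\vtheta)\| = \sigma^2\phi(\vtheta)^{\top}\rmPhi_i^{-1}\phi(\vtheta)$. This shows that the object of interest is exactly a scalar posterior variance of standard GP regression, and by Assumption~\ref{asm:2} combined with Lemma~\ref{le:non-increasing} each such term is uniformly bounded by $\kappa$.

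Next, writing $\rmPhi_i = \rmPhi_{i-1} + \phi(\vtheta_i)\phi(\vtheta_i)^{\top}$ and applying the matrix determinant lemma (a direct consequence of Sherman--Morrison, Lemma~\ref{le:sm-formula}) one rank-one update at a time, I would derive the telescoping identity
\begin{equation*}
\log\det\bigl(\rmI + \sigma^{-2}\rmK_n\bigr) = \sum_{i=1}^{n} \log\bigl(1 + \sigma^{-2}\|\vSigma_{i-1}^2(\vtheta_i)\|\bigr).
\end{equation*}
Lemma~\ref{le:info-gain} together with the definition of $\gamma_n$ then upper-bounds the right-hand side by $2\gamma_n/d$.

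To convert this bound on a sum of logarithms into a bound on the sum of variances themselves, I would apply the concavity inequality $x \leq c\log(1+x)/\log(1+c)$ for $x \in [0,c]$, taking $c = \sigma^{-2}\kappa$ (a valid upper bound by the preceding step). This yields
\begin{equation*}
\sum_{i=1}^{n}\|\vSigma_{i-1}^2(\vtheta_i)\| \leq \frac{\sigma^2 \kappa}{\log(1+\sigma^{-2}\kappa)} \cdot \frac{2\gamma_n}{d},
\end{equation*}
of which the claimed $\frac{1}{n}\sum \leq 2\sigma^2\gamma_n/d$ is the version obtained after dividing by $n$ and absorbing the ratio $\sigma^{-2}\kappa/\log(1+\sigma^{-2}\kappa)$ into the leading constant (or, in the small-$\sigma^{-2}\kappa$ regime, by invoking the sharper bound $\log(1+x)\geq x/2$).

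The main obstacle I anticipate is that the clean telescoping calculation above naturally produces a bound at the queried points $\vtheta_i$, whereas the lemma is phrased at an arbitrary fixed $\vtheta$. To close this gap I would rely once more on Lemma~\ref{le:non-increasing}: since $\|\vSigma_i^2(\vtheta)\|$ is non-increasing in $i$ and uniformly bounded by $\kappa$, one can either dominate the fixed-$\vtheta$ sum by augmenting the conditioning set with $\vtheta$ at every step and controlling the resulting information gain by $\gamma_{n+1}$ (adding one more query point changes the log-det by a term already accounted for), or exploit the symmetric role of the evaluation point and the data in the identity $\|\vSigma_i^2(\vtheta)\| = \sigma^2\phi(\vtheta)^{\top}\rmPhi_i^{-1}\phi(\vtheta)$. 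Tracking the exact constant $2\sigma^2/d$ through this transfer step is the delicate part of the argument.
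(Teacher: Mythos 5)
Your toolkit is the same as the paper's --- the scalar reformulation $\|\vSigma_i^2(\vtheta)\| = \sigma^2\phi(\vtheta)^{\top}\rmPhi_i^{-1}\phi(\vtheta)$, the rank-one determinant update, a telescoping log-determinant, Lemma~\ref{le:info-gain} plus the definition of $\gamma_n$, and a comparison between $x$ and $\ln(1+x)$ --- but the proposal stops short of a proof at exactly the point you flag. The telescoping identity you derive lives at the queried points $\vtheta_i$: it controls $\sum_{i=1}^{n}\|\vSigma_{i-1}^2(\vtheta_i)\|$, whereas the lemma is a statement about an arbitrary fixed $\vtheta$ that need not coincide with any query. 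Neither of your suggested repairs is carried out, and neither is obviously workable as sketched: augmenting the conditioning set with $\vtheta$ at every step changes every matrix $\rmPhi_i$ and hence every summand, so the quantity you would then bound is no longer the one in the lemma and the information-gain bookkeeping does not reduce to a single extra point $\gamma_{n+1}$; and ``the symmetric role of the evaluation point and the data'' is an observation, not an argument. For comparison, the paper never meets this obstacle because it runs the determinant recursion directly with the fixed $\vtheta$: in \eqref{eq:svxk} it writes $\det(\rmPhi_{i+1}) = \det(\rmPhi_i)\bigl(1+\phi(\vtheta)^{\top}\rmPhi_i^{-1}\phi(\vtheta)\bigr)$, i.e.\ it treats each rank-one update as if it were made by $\phi(\vtheta)$ itself, telescopes, and then invokes $\gamma_n$ as a maximum over arbitrary $n$-point sets (which in particular covers repeated queries at $\vtheta$). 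So the transfer step you rightly identify as delicate is precisely the step the paper's proof takes directly; your proposal neither performs it nor supplies a substitute, so as written it does not establish the lemma.

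There is a second, smaller gap in the constants. Your inequality $x \le c\ln(1+x)/\ln(1+c)$ with $c=\kappa/\sigma^2$ yields the prefactor $\kappa/\ln(1+\kappa/\sigma^2)$, which cannot simply be ``absorbed'' into the stated $2\sigma^2$ when $\kappa \gg \sigma^2$, and the fallback $\ln(1+x)\ge x/2$ is only valid for $x\le 1$, i.e.\ only in the regime $\kappa\le\sigma^2$ (which is exactly when $\phi(\vtheta)^{\top}\rmPhi_i^{-1}\phi(\vtheta)\le 1$ by Lemma~\ref{le:non-increasing}). The paper resolves this by the case split you would need anyway: it uses $x/2\le\ln(1+x)$ when $\kappa\le\sigma^2$ and rescales by $\sigma^2/\kappa$ when $\kappa>\sigma^2$, and its proof actually concludes with the bound $\frac{1}{n}\sum_{i=0}^{n-1}\|\vSigma_i^2(\vtheta)\| \le 4\max\{\kappa,\sigma^2\}\gamma_n/(d\,n)$, which is the form subsequently used in Theorem~\ref{th:grad-error}. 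Your constant bookkeeping should be redone along those lines rather than deferred to ``absorbing into the leading constant.''
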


\begin{proof}
To begin with, we show the following inequalities resulting from the matrix determinant lemma:
\begin{equation}\label{eq:svxk}
\begin{aligned}
    \det\left(\rmPhi_{i+1}\right) &= \det\left(\rmPhi_{i} + \phi(\vtheta)\phi(\vtheta)^{\top}\right) \\
    &= \det\left(\rmPhi_{i}\right)\left(1 + \phi(\vtheta)^{\top}\rmPhi_{i}^{-1}\phi(\vtheta)\right) \ .
\end{aligned}
\end{equation}

Given $\kappa \leq \sigma^2$, since $\left\|\vSigma^2_n(\vtheta)\right\| \leq \left\|\vSigma^2_{n-1}(\vtheta)\right\| \leq \cdots \leq \left\|\vSigma^2_{0}(\vtheta)\right\|=|k(\vtheta,\vtheta)|\leq \kappa$ from Lemma~\ref{le:non-increasing}, we then have $\phi(\vtheta)^{\top}\rmPhi_i^{-1}\phi(\vtheta) \leq 1$. As a result,
\begin{equation}\label{eq:sjvk}
\begin{aligned}
    \frac{1}{2}\sum_{i=0}^{n-1} \left\|\vSigma_i^2(\vtheta)\right\| &\stackrel{(a)}{=} \sum_{i=0}^{n-1} \frac{1}{2}\sigma^2 \phi(\vtheta)^{\top}\rmPhi_i^{-1}\phi(\vtheta) \\
    &\stackrel{(b)}{\leq}  \sum_{i=0}^{n-1} \sigma^2\ln\left(1 + \phi(\vtheta)^{\top}\rmPhi_i^{-1}\phi(\vtheta)\right) \\
    &\stackrel{(c)}{=} \sigma^2 \sum_{i=0}^{n-1} \ln\left(\frac{\det(\rmPhi_{i+1})}{\det(\rmPhi_{i})}\right) \\
    &\stackrel{(d)}{=} \sigma^2\ln\left(\frac{\det(\rmPhi_n)}{\det(\rmPhi_0)}\right) \\
    &\stackrel{(e)}{=} \sigma^2\ln\left(\frac{\det(\vphi_n \vphi_n^{\top} + \sigma^2\rmI)}{\det(\sigma^2 \rmI)}\right) \\
    &\stackrel{(f)}{=} \sigma^2\ln\left(\det(\sigma^{-2}\vphi_n \vphi_n^{\top} + \rmI)\right) \\
    &\stackrel{(g)}{=} \sigma^2\ln\left(\det(\rmI + \sigma^{-2}\vphi_n^{\top} \vphi_n)\right) \\
    &\stackrel{(h)}{\leq} \frac{2\sigma^2 \gamma_{n}}{d}
\end{aligned}
\end{equation}
where $(a)$ follows from the reformulation of $\vSigma^2_i(\vtheta)$ in \eqref{eq:grad-var-refor}, $(b)$ results from the fact that 
$x/2 \leq \ln(1+x)$ for any $x \in (0,1)$, $(c)$ derives from \eqref{eq:svxk}, $(d)$ is from the telescoping sum, $(e)$ is due to the fact that $\det(\rmPhi_0) = \det(\sigma^2\rmI)$, $(f)$ is from the fact that $\det(\rmA\rmB^{-1}) = \det(\rmA) / \det(\rmB)$, $(g)$ comes from the Sylvester's determinant identity, i.e., $\det(\rmPhi_i) = \det(\rmK_i + \sigma^2\rmI_i)$ according to the definition of $\rmPhi_i$, and $(h)$ results from the fact that $\rmK_n = \vphi_n^{\top} \vphi_n$ in \eqref{eq:vdjd}, the conclusion in Lemma~\ref{le:info-gain}, and the definition of $\gamma_n$.

Following the same idea, given $\kappa > \sigma^2$, we have
\begin{equation}\label{eq:cjsn}
\begin{aligned}
    \frac{1}{2\kappa}\sum_{i=0}^{n-1} \left\|\vSigma_i^2(\vtheta)\right\| &\stackrel{(a)}{=} \sum_{i=0}^{n-1} \frac{\sigma^2}{2\kappa}\phi(\vtheta)^{\top}\rmPhi_i^{-1}\phi(\vtheta) \\
    &\stackrel{(b)}{\leq} \sum_{i=0}^{n-1} \ln\left(1 + \frac{\sigma^2}{\kappa} \phi(\vtheta)^{\top}\rmPhi_i^{-1}\phi(\vtheta)\right) \\
    &\stackrel{(c)}{\leq}  \sum_{i=0}^{n-1} \ln\left(1 + \phi(\vtheta)^{\top}\rmPhi_i^{-1}\phi(\vtheta)\right) \\
    &\stackrel{(d)}{\leq} \frac{2\gamma_n}{d} \ .
\end{aligned}
\end{equation}

Combining the results in \eqref{eq:sjvk} and \eqref{eq:cjsn}, we conclude our proof by
\begin{equation}
\begin{aligned}
    \frac{1}{n}\sum_{i=0}^{n-1} \left\|\vSigma_i^2(\vtheta)\right\| \leq \frac{4\max\{\kappa,\sigma^2\}\gamma_n}{d\,n} \ .
\end{aligned}
\end{equation}

\end{proof}

\textit{Proof of our Thm.~\ref{th:grad-error}.} Since $\vSigma_n^{-1}(\vtheta)\left[\vmu_n(\vtheta) - \nabla F(\vtheta)\right] \sim \gN(\vzero, \rmI_d)$, according to Lemma~\ref{le:conf-bound}, for any $\delta \in (0,1)$ and $\alpha \triangleq d + 2(\sqrt{d}+1)\ln(1/\delta)$, with a probability of at least $1-\delta$,
\begin{equation}\label{eq:jvem}
\begin{aligned}
    \left\|\nabla F(\vtheta) - \vmu_{n}(\vtheta)\right\| &\stackrel{(a)}{\leq} \left\|\vSigma_n (\vtheta) \right\|\left\|\vSigma_n^{-1}(\vtheta)\left[\vmu_n(\vtheta) - \nabla F(\vtheta)\right]\right\| \\[5pt]
    &\stackrel{(b)}{\leq} \sqrt{\alpha} \left\|\vSigma^2_n (\vtheta) \right\|^{1/2} \\
\end{aligned}
\end{equation}
where $(a)$ is from Cauchy–Schwarz inequality and $(b)$ is from Lemma~\ref{le:conf-bound}. By introducing the results in Lemma~\ref{le:var-lower} and Lemma~\ref{le:sum-var} into the result above and letting $T_0=n+1$, we conclude our proof.

\subsection{Proof of Theorem~\ref{th:upper}}\label{sec:th:upper}
In general, we follow the idea in \cite{LiuNNEN23} to give a high probability convergence for our \ours{} algorithm. To begin with, we introduce the following definition and lemma.
\begin{definition}[Sub-Gaussian Random Variable]
A random variable $\rX$ is $\sigma$-sub-Gaussian if $\E\left[\exp \left(\lambda^2 \rX^2\right)\right] \leq \exp \left(\lambda^2 \sigma^2\right) \forall \lambda$ such that $|\lambda| \leq \frac{1}{\sigma}$.
\end{definition}

\begin{lemma}[Bound for Sub-Gaussian Random Variable]\label{le:knvd}
Suppose $\rX$ is a $\sigma$-sub-Gaussian random variable, then for any $a \in \sR, 0 \leq b \leq \frac{1}{2 \sigma}$,
\begin{equation}
\E\left[\exp \left(a\rX+b^2\rX^2\right)\right] \leq \exp\left((a^2+b^2)\sigma^2 + \frac{1}{4}\right) \ .
\end{equation}
\end{lemma}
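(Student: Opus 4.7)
The plan is to reduce the mixed exponential moment $\E[\exp(a\rX + b^2 \rX^2)]$ to a pure quadratic exponential moment of $\rX$, which the sub-Gaussian hypothesis $\E[\exp(\lambda^2 \rX^2)] \leq \exp(\lambda^2 \sigma^2)$ controls directly. The cross term $a\rX$ is the only obstruction to invoking the hypothesis, so the first step is to linearize it away using a weighted AM--GM (Young) inequality, absorbing part of it as a deterministic constant and the rest into the coefficient of $\rX^2$.

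Concretely, I would apply the pointwise bound
\begin{equation*}
a\rX \;\leq\; a^2 \sigma^2 \;+\; \frac{\rX^2}{4\sigma^2},
\end{equation*}
which follows from $2uv \leq u^2 + v^2$ with $u = a\sigma\sqrt{2}$ and $v = \rX/(\sigma\sqrt{2})$. Exponentiating and taking expectations yields
\begin{equation*}
\E\!\left[\exp(a\rX + b^2 \rX^2)\right] \;\leq\; \exp(a^2 \sigma^2)\,\E\!\left[\exp\!\bigl(\lambda^2 \rX^2\bigr)\right], \qquad \lambda^2 \triangleq b^2 + \tfrac{1}{4\sigma^2}.
\end{equation*}
The next step is to verify that this effective $\lambda$ still lies in the admissible range of the hypothesis, namely $|\lambda| \leq 1/\sigma$, and then invoke the sub-Gaussian bound to produce $\exp(\lambda^2 \sigma^2) = \exp(b^2 \sigma^2 + 1/4)$. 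Multiplying by the leading $\exp(a^2 \sigma^2)$ factor gives the claimed $\exp\bigl((a^2+b^2)\sigma^2 + \tfrac{1}{4}\bigr)$.

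There is essentially no deep obstacle; the only thing to be careful about is the range check. Since the hypothesis gives $b \leq 1/(2\sigma)$, we obtain $\lambda^2 \leq \tfrac{1}{4\sigma^2} + \tfrac{1}{4\sigma^2} = \tfrac{1}{2\sigma^2} \leq \tfrac{1}{\sigma^2}$, so the sub-Gaussian bound indeed applies, and the additive $1/4$ in the conclusion emerges exactly from the $\tfrac{1}{4\sigma^2}\cdot \sigma^2$ contribution of the AM--GM correction. A tempting alternative route via Cauchy--Schwarz applied to the factors $\exp(a\rX)$ and $\exp(b^2\rX^2)$ would require a linear sub-Gaussian moment-generating bound for $\rX$, which the stated hypothesis (phrased only in the quadratic-moment form) does not provide directly, so the AM--GM route is the cleaner one.
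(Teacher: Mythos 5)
Your proposal is correct and follows essentially the same route as the paper's proof: the same AM--GM bound $a\rX \leq a^2\sigma^2 + \rX^2/(4\sigma^2)$ to absorb the linear term, followed by the sub-Gaussian quadratic-moment hypothesis applied with $\lambda^2 = b^2 + \tfrac{1}{4\sigma^2}$. Your explicit check that $\lambda^2 \leq \tfrac{1}{2\sigma^2} \leq \tfrac{1}{\sigma^2}$ is a small but worthwhile addition that the paper leaves implicit.
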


\begin{proof}

\begin{equation}
\begin{aligned}
    \E\left[\exp \left(a\rX+b^2\rX^2\right)\right] &\stackrel{(a)}{\leq} \E\left[\exp \left(a^2\sigma^2 + \frac{\rX^2}{4\sigma^2}+b^2\rX^2\right)\right] \\
    &\stackrel{(b)}{=} \exp\left(a^2\sigma^2\right)\E\left[\exp\left(\left(\frac{1}{4\sigma^2}+b^2\right)\rX^2\right)\right] \\
    &\stackrel{(c)}{\leq} \exp\left(a^2\sigma^2\right)\exp\left(\left(\frac{1}{4\sigma^2}+b^2\right)\sigma^2\right) \\
    &\stackrel{(d)}{=} \exp\left((a^2+b^2)\sigma^2 + \frac{1}{4}\right)
\end{aligned}
\end{equation}
where $(c)$ comes from the definition of $\sigma$-sub-Gaussian random variable.
\end{proof}

\textit{Proof of Thm.~\ref{th:upper}.} 
Define 
\begin{equation}\label{eq:cnsl}
\begin{aligned}
\sigma^2(\vtheta_{t,s}) &\triangleq 
\begin{cases}
    \left\|\vSigma^2(\vtheta_{t,s}, \vtheta_{t,s})\right\| & \text{if } s < N-1 \\
    \qquad \quad\sigma^2 & \text{if } s = N - 1\ ,
\end{cases} \\
\bm{\varepsilon}(\vtheta_{t,s}) &\triangleq
\begin{cases}
    \nabla F(\vtheta_{t,s}) - \nabla \vmu_t(\vtheta_{t,s}) & \text{if } s < N - 1 \\[5.5pt]
    \nabla F(\vtheta_{t,s}) - \nabla f(\vtheta_{t,s}) & \text{if } s = N - 1 \ ,
\end{cases}
\end{aligned}
\end{equation}
and
\begin{equation}
\begin{aligned}
    \rX_{t,s} &\triangleq w\left(\eta(1 - \eta L) \nabla F(\vtheta_{t,s-1})^{\top}\bm{\varepsilon}_t(\vtheta_{t,s-1}) + \frac{\eta^2 L}{2} \left\|\bm{\varepsilon}_t(\vtheta_{t,s-1})\right\|^2\right) \\
    &\qquad \qquad \qquad - 
    w^2\eta^2(1-\eta L)^2 \left\|\nabla F(\vtheta_{t,s-1})\right\|^2 \sigma^2(\vtheta_{t,s-1}) 
     \ .
\end{aligned}
\end{equation}

According to our Lemma~\ref{le:knvd} and the fact that each dimension of $\bm{\varepsilon}_t(\vtheta_{t,s})$ follows an independent Gaussian distribution given Assump.~\ref{asm:2}, the following holds
\begin{equation}
\begin{aligned}
    \E\left[\exp\left(\sum_{t=1}^T\sum_{s=1}^{N}\rX_{t,s}\right)\right] &\leq \exp\left(\sum_{t=1}^T\sum_{s=1}^{N} \left(w^2\eta^2(1-\eta L)^2 \left\|\nabla F(\vtheta_{t,s-1})\right\|^2 + \frac{w\eta^2 L}{2}\right) \sigma^2(\vtheta_{t,s-1}) \right.\\
    &\qquad \qquad \qquad \left. - \sum_{t=1}^T\sum_{s=1}^{N} w^2\eta^2(1-\eta L)^2 \left\|\nabla F(\vtheta_{t,s-1})\right\|^2 \sigma^2(\vtheta_{t,s-1}) + \frac{1}{4}\right) \\
    &= \exp\left(\sum_{t=1}^T\sum_{s=1}^{N} \frac{w\eta^2 L}{2} \sigma^2(\vtheta_{t,s-1}) + \frac{1}{4}\right) \ .
\end{aligned}
\end{equation}

Based on Markov inequality, we have that
\begin{equation}
\begin{aligned}   &\sP\left[\exp\left(\sum_{t=1}^T\sum_{s=1}^{N} \rX_{t,s}\right) > \frac{1}{2\delta} \exp\left(\sum_{t=1}^T\sum_{s=1}^{N}\frac{w\eta^2 L}{2} \sigma^2(\vtheta_{t,s-1})\right)\right] \\
\leq&\,\, \frac{\E\left[\exp\left(\sum_{t=1}^T\sum_{s=1}^{N} \rX_{t,s}\right)\right]}{\exp\left(\sum_{t=1}^T\sum_{s=1}^{N} w\eta^2 L \sigma^2(\vtheta_{t,s-1}) / 2\right) / (2\delta)} \\
\leq&\,\, \delta \ .
\end{aligned}
\end{equation}

Therefore, with a probability of at least $1-\delta$,
\begin{equation}
\begin{aligned}
    \sum_{t=1}^T\sum_{s=1}^{N}\rX_{t,s} \leq \sum_{t=1}^T\sum_{s=1}^{N}\frac{w\eta^2 L}{2} \sigma^2(\vtheta_{t,s-1}) + \ln\left(\frac{1}{2\delta}\right) \ ,
\end{aligned}
\end{equation}
which leads to the following inequality with $w = w$
\begin{equation}
\begin{aligned}
    &\sum_{t=1}^T\sum_{s=1}^{N} \left(\eta(1 - \eta L) \nabla F(\vtheta_{t,s-1})^{\top}\bm{\varepsilon}_t(\vtheta_{t,s-1}) + \frac{\eta^2 L}{2} \left\|\bm{\varepsilon}_t(\vtheta_{t,s-1})\right\|^2\right) \leq \\
    &\qquad \qquad \qquad \sum_{t=1}^T\sum_{s=1}^{N} \left(w\eta^2(1-\eta L)^2 \left\|\nabla F(\vtheta_{t,s-1})\right\|^2 \sigma^2(\vtheta_{t,s-1}) + \frac{\eta^2 L}{2} \sigma^2(\vtheta_{t,s-1})\right) + \frac{1}{w}\ln\left(\frac{1}{2\delta}\right) \ .
\end{aligned}
\end{equation}

Of note, for every proxy step based on SGD:
\begin{equation}\label{eq:jksv}
\begin{aligned}
    &F(\vtheta_{t,s}) \\
    \stackrel{(a)}{\leq}&\, F(\vtheta_{t,s-1}) + \nabla F(\vtheta_{t,s-1})^{\top}(\vtheta_{t,s} - \vtheta_{t,s-1}) + \frac{L}{2}  \left\|\vtheta_{t,s} - \vtheta_{t,s-1}\right\|^2 \\
    \stackrel{(b)}{=}&\, F(\vtheta_{t,s-1}) - \eta \nabla F(\vtheta_{t,s-1})^{\top}(\nabla F (\vtheta_{t,s-1}) - \bm{\varepsilon}_t(\vtheta_{t,s-1}))  + \frac{\eta^2 L}{2}  \left\|\nabla F (\vtheta_{t,s-1}) - \bm{\varepsilon}_t(\vtheta_{t,s-1})\right\|^2 \\
    \stackrel{(c)}{=}&\, F(\vtheta_{t,s-1}) +  \eta(1 - \eta L) \nabla F(\vtheta_{t,s-1})^{\top}\bm{\varepsilon}_t(\vtheta_{t,s-1})  + \left(\frac{\eta^2 L}{2} - \eta\right)\left\|\nabla F (\vtheta_{t,s-1})\right\|^2 + \frac{\eta^2 L}{2} \left\|\bm{\varepsilon}_t(\vtheta_{t,s-1})\right\|^2
\end{aligned}
\end{equation}
where $(a)$ derives from the Lipschitz smoothness of function $F$ (i.e., Assump.~\ref{asm:3}), $(b)$ comes from the standard SGD update and  the definition of $\bm{\varepsilon}_t(\vtheta_{t,s})$, and $(d)$ is a rearrangement of the results in $(c)$.

By introducing the results above into \eqref{eq:jksv} and choosing $w^{-1} = 2\beta \eta$ with $\beta \triangleq \max\{\kappa, \sigma^2\}$, we have
\begin{equation}\label{eq:vfnx}
\begin{aligned}
    \sum_{t=1}^T\sum_{s=1}^{N} F(\vtheta_{t,s})
    \stackrel{(a)}{\leq} \, &\sum_{t=1}^T\sum_{s=1}^{N}  \left(F(\vtheta_{t,s-1})  + \left(w \eta^2(1-\eta L)^2 \sigma^2(\vtheta_{t,s-1}) - \eta(1 - \frac{\eta L}{2})\right)\left\|\nabla F(\vtheta_{t,s-1})\right\|^2\right. \\ 
    &\qquad \qquad \qquad \left.+ \frac{\eta^2 L}{2} \sigma^2(\vtheta_{t,s-1})\right) + \frac{1}{w}\ln\left(\frac{1}{2\delta}\right) \\
    \stackrel{(b)}{\leq} \, &\sum_{t=1}^T\sum_{s=1}^{N} \left(F(\vtheta_{t,s-1})  + \left(\frac{1}{2}\eta(1 - \eta L)^2 - \eta(1 - \frac{\eta L}{2})\right)\left\|\nabla F(\vtheta_{t,s-1})\right\|^2\right. \\
    & \qquad \qquad \qquad  + \left.\frac{\eta^2 L}{2} \sigma^2(\vtheta_{t,s-1})\right) + 2\beta\eta \ln\left(\frac{1}{2\delta}\right) \\
    \stackrel{(c)}{\leq} \, &\sum_{t=1}^T\sum_{s=1}^{N} \left(F(\vtheta_{t,s-1}) -\frac{\eta}{2}\left\|\nabla F(\vtheta_{t,s-1})\right\|^2 + \frac{\eta^2 L}{2} \sigma^2(\vtheta_{t,s-1})\right) + 2\beta\eta\ln\left(\frac{1}{2\delta}\right)
\end{aligned}
\end{equation}
where $(a)$ comes from $\eta \leq 1/L$, $(b)$ is due to the fact that $\sigma^2(\vtheta_{t,s-1}) \leq \max\{\kappa,\sigma^2\} = \beta $ , and $(c)$ is due to the fact that 
\begin{equation}
\begin{aligned}
    \frac{\eta}{2}(1 - \eta L)^2 - \eta(1 - \frac{\eta L}{2}) &= \frac{\eta}{2}\left(1 - 2\eta L + \eta^2L^2 - 2 + \eta L\right) \\
    &= \frac{\eta}{2}\left(\eta^2L^2 - \eta L - 1\right) \\
    &\leq - \frac{\eta}{2} \ .
\end{aligned}
\end{equation}

By rearranging the result in \eqref{eq:vfnx} and defining $\rho \triangleq (1-\frac{1}{N})\frac{4\beta \gamma_{T_0}}{\sigma^2 T_0 d} + \frac{1}{N}$, we have
\begin{equation}\label{eq:cenv}
\begin{aligned}
    \frac{1}{NT}\sum_{t=1}^T\sum_{s=1}^{N} \left\|\nabla F(\vtheta_{t,s-1})\right\|^2 &\leq \frac{2}{\eta NT}\left(F(\vtheta_{0}) - F(\vtheta_{T})\right) + \frac{\eta L }{NT}\sum_{t=1}^T\sum_{s=1}^{N} \sigma^2(\vtheta_{t,s-1}) + \frac{4\beta}{NT}\ln\left(\frac{1}{2\delta}\right) \\
    &\leq \frac{2}{\eta NT}\left(F(\vtheta_{0}) - \inf_{\vtheta} F(\vtheta)\right) +  \eta L \rho\sigma^2 + \frac{4\beta}{NT}\ln\left(\frac{1}{2\delta}\right)
\end{aligned}
\end{equation}
where the last inequality comes from the fact that $F(\vtheta_T) \leq \inf_{\vtheta} F(\vtheta)$ and $\sigma^2(\vtheta_{t,s-1}) \leq 4\beta\gamma_{T_0} / (T_0 d)$ in \eqref{eq:jvem}.

By choosing $T \geq \frac{2\Delta L}{N\sigma^2\rho}$ and $\eta = \sqrt{\frac{2\Delta}{NTL \sigma^2 \rho}}$ where $\Delta \triangleq F(\vtheta_{0}) - \inf_{\vtheta} F(\vtheta)$, we conclude our proof.

\begin{remark}
\normalfont
The speedup achieved by \ours{} matches that of basic sample averaging (i.e., data parallelism) for stochastic optimization with noisy gradients. However, the speedup from \ours{} comes from reduced sequential iterations (first term on the RHS in \eqref{eq:cenv}), while sample averaging derives from reduced gradient variance (second term on the RHS in \eqref{eq:cenv}). When gradient noise is already small or in deterministic optimization (e.g., the experiments in Sec.~\ref{sec:exp:syn}), data parallelism may not provide noticeable speedup, but \ours{} can still contribute significantly. Overall, \ours{} works in a complementary direction to existing parallelization methods, including sample averaging, to speed up first-order optimization, especially when other methods are not applicable or underperforming as discussed in our Sec.~\ref{sec:related-work}).
\end{remark}

\subsection{Proof of Theorem~\ref{th:lower}}\label{sec:th:lower}
We follow the idea in \cite{DroriS20} to prove our Thm.~\ref{th:lower}. We first introduce the following lemma:

\begin{lemma}[Lemma B.12 in \cite{SSBD}]\label{le:ssbd}
Let $\rX_i \sim \gN(0,1)$ independently, $\rZ \triangleq \sum_{i=1}^n \rX_i^2$, and $\eps \in (0,1)$ then 
\begin{equation}
    \sP\left(\rZ \leq (1 - \eps)n\right) \leq \exp\left(- \frac{n\eps^2}{6}\right) \ .
\end{equation}
\end{lemma}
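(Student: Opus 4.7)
The plan is to prove this standard lower-tail bound for a chi-squared random variable via the Chernoff method, i.e., by exponentiating $-\rZ$, applying Markov's inequality, evaluating the moment generating function (MGF) of $\rZ$ in closed form, and then optimizing the resulting exponent.

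First I would fix any $\lambda > 0$ and write
\begin{equation*}
\sP\left(\rZ \leq (1-\eps)n\right) = \sP\left(e^{-\lambda \rZ} \geq e^{-\lambda(1-\eps)n}\right) \leq e^{\lambda(1-\eps)n}\,\E\!\left[e^{-\lambda \rZ}\right].
\end{equation*}
Since the $\rX_i$ are i.i.d. $\gN(0,1)$, the MGF factors as $\E[e^{-\lambda \rZ}] = \prod_{i=1}^n \E[e^{-\lambda \rX_i^2}]$, and the standard Gaussian integral gives $\E[e^{-\lambda \rX_i^2}] = (1+2\lambda)^{-1/2}$ for all $\lambda > -1/2$. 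Hence
\begin{equation*}
\sP\left(\rZ \leq (1-\eps)n\right) \leq \exp\!\left(\lambda(1-\eps)n - \tfrac{n}{2}\ln(1+2\lambda)\right).
\end{equation*}

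Next I would optimize the exponent over $\lambda > 0$. Differentiating with respect to $\lambda$ and setting to zero yields $\lambda^\star = \frac{\eps}{2(1-\eps)}$, which is positive since $\eps \in (0,1)$. Substituting back gives
\begin{equation*}
\sP\left(\rZ \leq (1-\eps)n\right) \leq \exp\!\left(\tfrac{n}{2}\bigl(\eps + \ln(1-\eps)\bigr)\right).
\end{equation*}
To finish, I would invoke the elementary Taylor bound $\ln(1-\eps) \leq -\eps - \eps^2/2 - \eps^3/3 - \cdots \leq -\eps - \eps^2/3$ for $\eps \in (0,1)$, which gives $\eps + \ln(1-\eps) \leq -\eps^2/3$, and therefore
\begin{equation*}
\sP\left(\rZ \leq (1-\eps)n\right) \leq \exp\!\left(-\tfrac{n\eps^2}{6}\right),
\end{equation*}
as claimed. (A crude Taylor truncation actually yields the stronger constant $1/4$, but the weaker $1/6$ is what is stated.)

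The only mild subtlety — and the most error-prone step — is verifying the inequality $\eps + \ln(1-\eps) \leq -\eps^2/3$ for all $\eps \in (0,1)$; this follows either from term-by-term comparison of the logarithm's Taylor series (all subsequent terms are nonpositive) or, more robustly, by defining $g(\eps) \triangleq -\eps^2/3 - \eps - \ln(1-\eps)$, checking $g(0)=0$, and showing $g'(\eps) = -2\eps/3 - 1 + 1/(1-\eps) \geq 0$ on $[0,1)$, which reduces to the trivial inequality $\eps(1-2\eps/3) \geq 0$ after clearing denominators. Every other step is a routine MGF computation or a one-variable optimization.
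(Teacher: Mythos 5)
Your proof is correct and is essentially the standard Chernoff/MGF argument for the chi-squared lower tail; the paper itself imports this lemma from \citet{SSBD} without proof, and your derivation (Markov on $e^{-\lambda \rZ}$, the closed-form MGF $(1+2\lambda)^{-1/2}$, optimizing to $\lambda^\star = \eps/(2(1-\eps))$, then bounding $\eps + \ln(1-\eps) \leq -\eps^2/2 \leq -\eps^2/3$) is exactly the textbook route. The only blemish is the parenthetical algebra at the very end: clearing denominators in $g'(\eps) \geq 0$ actually yields $\tfrac{\eps}{3}(1+2\eps) \geq 0$ rather than $\eps(1-2\eps/3) \geq 0$, but this is immaterial since your primary justification via term-by-term comparison of the Taylor series already settles the inequality.
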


\textit{Proof of Thm.~\ref{th:lower}.} When $\eta \in \left[1 / (\sqrt{NT}L), 1/L\right]$, We consider the function
\begin{equation}
    F(\vtheta) = \frac{L}{2}\left\|\vtheta\right\|^2
\end{equation}
where $\vtheta_0$ is initialized with $\gN(\vzero, \frac{\Delta}{L} \rmI)$. 

We abuse $\bm{\varepsilon}(\vtheta_{\tau})$ to denote the $\bm{\varepsilon}(\vtheta_{t,s-1})$ defined in our \eqref{eq:cnsl}. Based on the update rule of stochastic gradient descent, we then have that
\begin{equation}
\begin{aligned}
    \vtheta_{\tau} &= \vtheta_{\tau-1} - \eta\left(L \vtheta_{\tau-1} + \bm{\varepsilon}(\vtheta_{\tau-1})\right) \\[8pt]
    &= (1 -\eta L) \vtheta_{\tau-1} - \eta \bm{\varepsilon}(\vtheta_{\tau-1}) \\
    &= (1 -\eta L)^{\tau} \vtheta_0 + \sum_{i=0}^{\tau-1}\eta(1-\eta L)^{\tau-i-1}\bm{\varepsilon}(\vtheta_{i}) \ .
\end{aligned}
\end{equation}

Since $\bm{\varepsilon}(\vtheta_i)$ follows $\gN(\vzero, \sigma^2(\vtheta_i))$ independently where we abuse $\sigma^2(\vtheta_i)$ to denote the $\sigma^2(\vtheta_{t,s-1})$ defined in \eqref{eq:cnsl} and $\vtheta_0$ 
 is initialized with $\gN(\vzero, \frac{\Delta}{L} \rmI)$, we then have that
 \begin{equation}
     \vtheta_{\tau} \sim \gN\left(\vzero, \left((1-\eta L)^{2\tau} \frac{\Delta}{L} + \sum_{i=0}^{\tau-1}\eta^2(1-\eta L)^{2(\tau-i-1)} \sigma^2(\vtheta_i)\right) \rmI\right) \ .
 \end{equation}

Let $\delta \in (0,1)$ and $\widetilde{\beta} \triangleq \min\{1/(1+1/\sigma^2)^{T_0}\kappa, \sigma^2 \}$, since $\left\|\nabla F(\vtheta_{\tau})\right\|^2 = L^2 \left\|\vtheta_{\tau}\right\|^2$, by introducing the results above into Lemma~\ref{le:ssbd} with $\eps=1/2$ and $d\geq d_0 \triangleq 24\ln(NT/\delta)$, with a probability of at least $1- \delta$,
\begin{equation}\label{eq:low-1}
\begin{aligned}
    \min_{\tau \in [NT]}\left\|\nabla F(\vtheta_{\tau})\right\|^2 &=  \frac{1}{NT} \sum_{\tau=1}^{NT}\left\|\nabla F(\vtheta_{\tau})\right\|^2 \\
    &\geq \frac{dL^2}{2} \left((1-\eta L)^{2\tau}\frac{\Delta}{L} + \sum_{i=0}^{\tau-1}\eta^2(1-\eta L)^{2(\tau-i-1)} \sigma^2(\vtheta_i)\right) \\
    &\geq \frac{dL^2}{2} \left((1-\eta L)^{2\tau} \frac{\Delta}{L} + \sum_{i=0}^{\tau-1}\eta^2(1-\eta L)^{2(\tau-i-1)} \widetilde{\beta}\right) \\
    &= \frac{dL^2}{2} \left((1-\eta L)^{2\tau} \frac{\Delta}{L} + \frac{1 - (1-\eta L)^{2\tau}}{1 - (1-\eta L)^2} \eta^2\widetilde{\beta}\right) \\
    &= \frac{d}{2} \left((1-\eta L)^{2\tau} \Delta L + \left(1 - (1-\eta L)^{2\tau}\right) \frac{\eta L \widetilde{\beta}}{2 - \eta L}\right) \\
    &\geq \frac{d}{2}\min\left\{\Delta L, \frac{\eta L\widetilde{\beta}}{2 - \eta L}\right\} \\
    &\geq \frac{d}{2}\min\left\{\Delta L, \frac{\widetilde{\beta}}{2\sqrt{NT}}\right\} \\ 
    &\geq \frac{d_0\min\left\{\Delta L, \widetilde{\beta}\right\}}{4\sqrt{NT}}\ .
\end{aligned}
\end{equation}

When $\eta \in \left[0, 1 / (\sqrt{NT}L)\right]$, we consider the function
\begin{equation}
    F(\vtheta) = \frac{1}{4\max\left\{1/L, \sum_{\tau=1}^{NT} \eta\right\}}\left\|\vtheta^{\top}\ve_1\right\|^2
\end{equation}
where $\vtheta_0$ is initialized with $\vtheta^{\top}_0 = \left[\sqrt{d\Delta \max\left\{1/L, \sum_{\tau=1}^{NT} \eta\right\}}, 0, \cdots, 0\right]$. 

Similarly, we have
\begin{equation}
\begin{aligned}
    \vtheta_{\tau} = \left(1 - \frac{1}{2\max\left\{1/L, \sum_{\tau=1}^{NT} \eta\right\}}\right)^{\tau} \vtheta_0 + \sum_{i=0}^{\tau-1}\eta\left(1-\frac{1}{2\max\left\{1/L, \sum_{\tau=1}^{NT} \eta\right\}}\right)^{\tau-i-1}\bm{\varepsilon}(\vtheta_{i}) \ ,
\end{aligned}
\end{equation}
and
\begin{equation}
     \vtheta_{\tau} \sim \gN\left(\left(1 - \frac{1}{2\max\left\{1/L, \sum_{\tau=1}^{NT} \eta\right\}}\right)^{\tau} \vtheta_0, \left(\sum_{i=0}^{\tau-1}\eta^2\left(1-\frac{1}{2\max\left\{1/L, \sum_{\tau=1}^{NT} \eta\right\}}\right)^{2(\tau-i-1)} \sigma^2(\vtheta_i)\right) \rmI\right) \ .
\end{equation}

Therefore, let $\vtheta_{\tau}^{(1)}$ denote the first element of $\vtheta_{\tau}$, we have
\begin{equation}
\begin{aligned}
    \E\left[\vtheta_{\tau}^{(1)}\right] &\stackrel{(a)}{=} \left(1 - \frac{1}{2\max\left\{1/L, \sum_{\tau=1}^{NT} \eta\right\}}\right)^{\tau} \sqrt{d\Delta \max\left\{1/L, \sum_{\tau=1}^{NT} \eta\right\}} \\
    &\stackrel{(b)}{\geq} \sqrt{d\Delta \max\left\{1/L, \sum_{\tau=1}^{NT} \eta\right\}} \exp\left(\left(\ln\frac{1}{2}\right)\sum_{\tau=1}^{NT}\frac{\eta}{\max\left\{1/L, \sum_{\tau=1}^{NT} \eta\right\}}\right) \\
    &\stackrel{(c)}{\geq} \frac{1}{2} \sqrt{d \Delta \max\left\{1/L, \sum_{\tau=1}^{NT} \eta\right\}}
\end{aligned}
\end{equation}
where $(b)$ comes from the fact that $1-z/2 \geq \exp(\ln(1/2) z)$ for all $z \in [0,1]$. 

In addition, we have
\begin{equation}
\begin{aligned}
    \var\left[\vtheta_{\tau}^{(1)}\right] &= \sum_{i=0}^{\tau-1}\eta^2\left(1-\frac{1}{2\max\left\{1/L, \sum_{\tau=1}^{NT} \eta\right\}}\right)^{2(\tau-i-1)} \sigma^2(\vtheta_i^{(1)}) \\
    &\leq \sum_{i=0}^{\tau-1}\eta^2 \beta \\[7pt]
    &= \frac{\beta}{L^2} 
\end{aligned}
\end{equation}
where $\beta \triangleq \max\{\kappa, \sigma^2\}$.

Let $\Psi$ denote the CDF of standard normal distribution and follow the idea in \cite{DroriS20}, by choosing 
\begin{equation}
    d > d_0 \triangleq \frac{16 \beta / L^2 \left(\Psi^{-1}\left(1 - \frac{\delta}{NT}\right)\right)^2}{\Delta \max\left\{1/L, \sum_{\tau=1}^{NT} \eta\right\}} = \gO\left(\beta / (\Delta L^2)\ln{NT/\delta}\right) \ ,
\end{equation}
then
\begin{equation}
\begin{aligned}
    &\sP\left(\frac{\vtheta_{\tau}^{(1)} - \E\left[\vtheta_{\tau}^{(1)}\right]}{\sqrt{\var\left[\vtheta_{\tau}^{(1)}\right]}} \geq - \frac{\frac{1}{4}  \sqrt{d \Delta \max\left\{1/L, \sum_{\tau=1}^{NT} \eta\right\}}}{\sqrt{\beta / L^2}}\right) \\
    \geq \,\,& \sP\left(\frac{\vtheta_{\tau}^{(1)} - \E\left[\vtheta_{\tau}^{(1)}\right]}{\sqrt{\var\left[\vtheta_{\tau}^{(1)}\right]}} \geq - \frac{\frac{1}{4} \sqrt{d \Delta \max\left\{1/L, \sum_{\tau=1}^{NT} \eta\right\}}}{\sqrt{\beta / L^2}}\right) \\[6pt]
    = \,\, & 1 - NT\delta \ .
\end{aligned}
\end{equation}

That is, with a probability of at least $1-\delta / (NT)$,
\begin{equation}\label{eq:low-3}
    \vtheta_{\tau}^{(1)} \geq \frac{1}{4} \sqrt{d \Delta \max\left\{1/L, \sum_{\tau=1}^{NT} \eta\right\}} \ .
\end{equation}

Since $\left\|\nabla F(\vtheta_{\tau})\right\|^2 = \left(\frac{1}{2\max\left\{1/L, \sum_{\tau=1}^{NT} \eta\right\}}\right)^2 \left\|\vtheta^{\top}\ve_1\right\|^2$, we conclude our proof by applying union bound on \eqref{eq:low-3} as below
\begin{equation}
\begin{aligned}
    \min_{\tau \in [NT]}\left\|\nabla F(\vtheta_{\tau})\right\|^2 &=  \frac{1}{NT} \sum_{\tau=1}^{NT}\left\|\nabla F(\vtheta_{\tau})\right\|^2 \\
    &\geq \frac{d_0  \Delta}{4 \max\left\{1/L, \sum_{\tau=1}^{NT} \eta\right\}} \\
    & \geq \frac{d_0 \Delta L}{4\sqrt{NT}}
\end{aligned}
\end{equation}
where the last inequality comes from the fact that $\eta \in [0, 1/(\sqrt{NT}L)]$. This finally concludes our proof.

\newpage

\section{Experiments}\label{sec:exp-info}
\subsection{Baselines}\label{sec:app:baselines}

In this section, we provide an illustrated comparison between our \ours{} and all the baselines at iteration $t$ in Fig.~\ref{fig:baselines}. Notably, the \targ{} baseline represents an ideal parallelization of the \van{} baseline. However, this is impractical because the ground-truth gradient (i.e., $\nabla f(\cdot)$) required by a process $i \in [N]$ to produce the update can not be obtained before the start of this process. More specifically, this gradient is the outcome at the end of the corresponding process. In contrast, our \ours{} framework makes use of the kernelized gradient estimation (i.e., $\vmu_{t}(\cdot)$) to achieve approximated parallelized iterations for FOO as illustrated in our Fig.~\ref{fig:baselines}, which is more practical and useful.

\begin{figure}[t]
\vspace{-3mm}
\centering

\begin{tabular}{ccc}
    {\hspace{-4mm} \van{}} & {\hspace{-4mm} \ours{}} & {\hspace{-8mm} \targ{}} \\
    \hspace{-4mm}\centering\includegraphics[width=0.09\textwidth,valign=t]{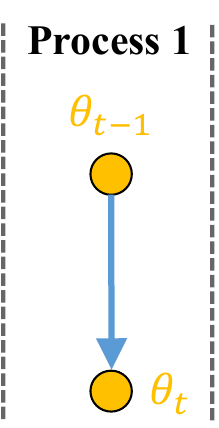}&
    \hspace{-4mm}\includegraphics[width=0.47\textwidth,valign=t]{figs/OptEx.pdf} &
    \hspace{-4mm}\includegraphics[width=0.5\textwidth,valign=t]{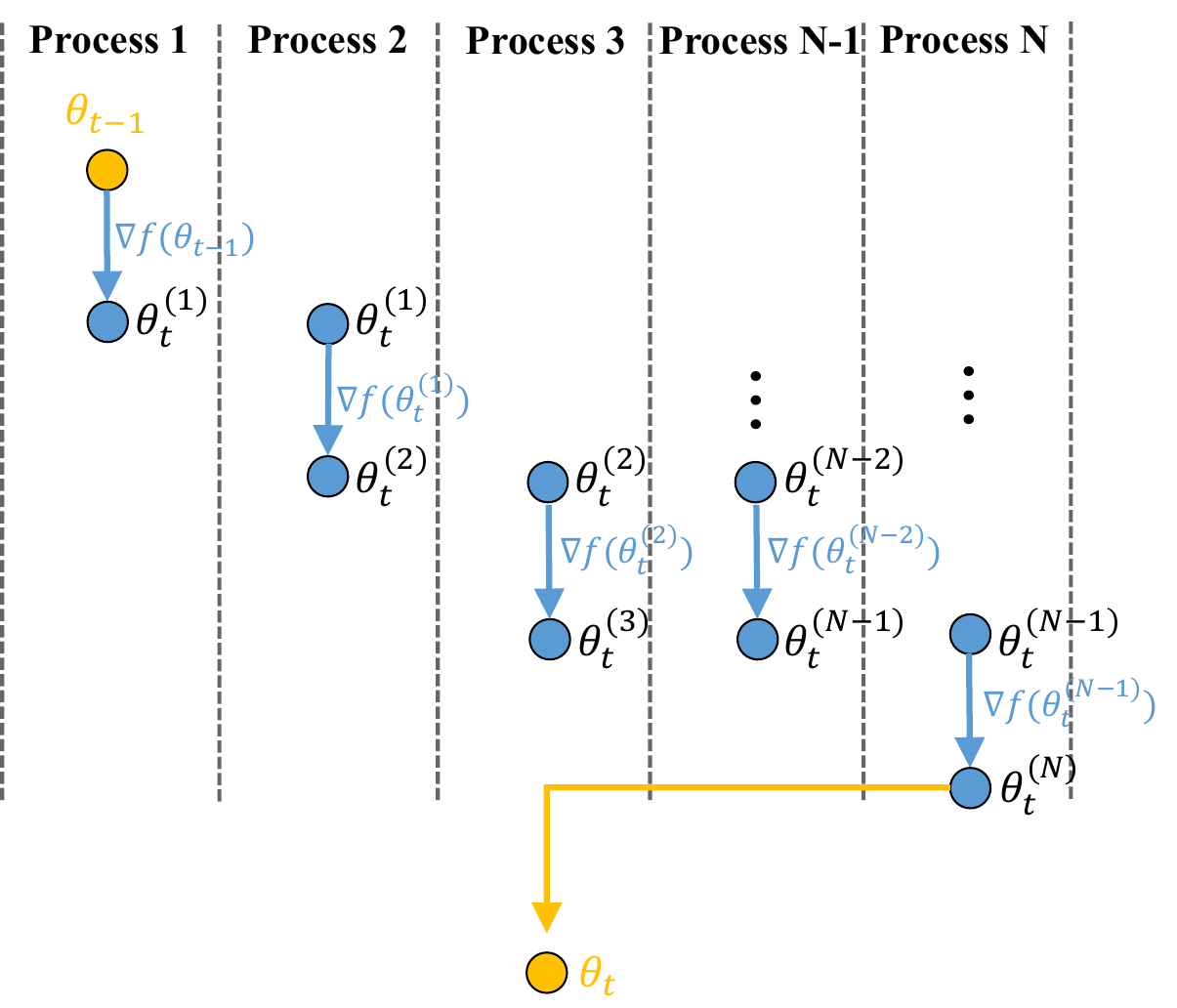}  \\
\end{tabular}
\vspace{-1mm}
\caption{An illustrated comparison among our \ours{} and all the baselines at iteration $t$.
}
\label{fig:baselines}
\vspace{-4mm}
\end{figure}

\subsection{Settings}
\subsubsection{Optimization of Synthetic Functions}\label{sec:app:syn}

Let input $\vtheta=[\theta_i]_{i=1}^d$, the Ackley, Sphere, and Rosenbrock functions applied in our synthetic experiments are given below, which have been slightly modified compared with the standard ones.
\begin{equation}
\begin{aligned}
    F(\vtheta)&=-20\exp\left(-0.2\sqrt{\frac{1}{d} \sum_{i=1}^{d} \theta_{i}^{2}}\right)-\exp (\frac{1}{d} \sum_{i=1}^{d} \cos \left(2\pi \theta_{i}\right))+20+\exp (1), (\text{Ackley}) \\
    F(\vtheta)&= \sqrt{\frac{1}{d}\sum_{i=1}^d \theta_i^2}, (\text{Sphere}) \\
    F(\vtheta)&=\frac{1}{d} \sum_{i=1}^{d-1} \left[100(\theta_{i+1} - \theta_i)^2 + (1-\theta_i)^2\right], (\text{Rosenbrock})
\end{aligned}
\end{equation}
Note that both Ackley and Sphere function achieve their minimum (i.e., $\min F(\vtheta)=0$) at $\vtheta^*=\vzero$, whereas Rosenbrock function achieves its minimum (i.e., $\min F(\vtheta)=0$) at $\vtheta^*=\vone$. 

In this experiment, the parallelism of $N=5$ is applied and all the baselines introduced in Sec.~\ref{sec:exp:syn} as well as our \ours{} are based on Adam~\cite{kingma2014adam} with a learning rate of 0.1, $\beta_1=0.9$, and $\beta_2 = 0.999$. In addition, we employ a Mat\'{e}rn kernel-based gradient estimation in our \ours{} with $T_0 = 20$.

\subsubsection{Optimization of Reinforcement Learning Tasks}\label{sec:app:rl}

Our experimental framework is built on the Deep Q-Network (DQN) algorithm, as outlined in \cite{mnih2015human}, and implemented within the OpenAI Gym environment \cite{brockman2016openai}. This study investigates the effectiveness of different optimizer configurations across classical discrete control tasks provided by Gym. Each trial is conducted on a dedicated CPU to maintain consistency in computational conditions. The DQN architecture consists of dual fully connected layers, with 64 or 128 neurons tailored to each task's requirements. Hyperparameters, including a learning rate of 0.001, a reward discount factor of 0.95, and a batch size of 256, are applied for fairness and consistency across experiments.

Performance evaluation of the optimizer-enhanced DQN agents is systematically carried out over 100 to 200 episodes per game, employing an $\eps$-greedy policy with a minimum epsilon of 0.1 and an exponential epsilon decay with a rate of $2^{-\frac{1}{1500}}$. A preliminary warm-up phase of either 30 or 50 episodes, depending on the task, is incorporated to stabilize initial learning dynamics. 
Besides, all baselines introduced in Sec.\ref{sec:exp:syn} and our \ours{} are based on Adam\cite{kingma2014adam} with a learning rate of 0.001, $\beta_1=0.9$, and $\beta_2 = 0.999$. For \ours{}, we utilize a Mat'{e}rn kernel-based gradient estimation, with $T_0 = 150$ to accommodate the variance in RL tasks, and parallelism of $N=4$ is applied.

\subsubsection{Optimization of Neural Network Training}\label{sec:app:nn}
In this experiment, we compared our \ours{} with other baselines using both image classification and text autoregression tasks. Here, we simply make use of the \texttt{jax.vmap} function to simulate parallel computing and measure the wallclock time for each sequential iteration. We believe that the time efficiency of our \ours{} can be further improved when it is more properly implemented on a parallel computing platform. Besides, to reduce the computational cost of our kernelized gradient estimation in these high-dimensional optimization problems, we propose to use a randomly sampled subset of dimensions (e.g., $\widetilde{d}=10^4$ for image classification and $\widetilde{d}=10^5$ for text autoregression) from the total $d$ dimension to compute the kernel value $k(\cdot, \cdot)$ in each sequential iteration of our \ours{}. 

\paragraph{Image Classification.}
In this image classification task, we train a 9-layer MLP (including input and output layer) with skip connections on MNIST \cite{lecun2010mnist}, Fashion MNIST \cite{xiao2017fashion} and a 10-layer MLP (including input and output layer) with skip connections on CIFAR-10 \cite{cifar} datasets, which have a parameter size of $d=978186$ for (fashion-)MNIST and $d=2412298$ for CIFAR-10. Both our \ours{} and other baselines are based on SGD~\cite{robbins1951stochastic} with a learning rate of 0.001, a batch size of 512, and parallelism of $N=4$. For \ours{}, we employ a Mat\'{e}rn kernel-based gradient estimation with $T_0 = 6$.

\paragraph{Text Autoregression.}
In addition, we further train a simple transformer from Haiku library \cite{haiku2020github} with a parameter size of $d=1626496$ on the corpus of ``Harry Potter and the Sorcerer’s Stone'' and a subset work from Shakespeare. In both tasks, all the baselines introduced in Sec.~\ref{sec:exp:syn} and our \ours{} are based on SGD~\cite{robbins1951stochastic} with a learning rate of 0.01, batch size of 256 and parallelism of $N=4$. For \ours{}, we employ a Mat\'{e}rn kernel-based gradient estimation, where $T_0 = 10$.

\subsection{More Results}\label{sec:app:more-results}
\begin{figure}[t]
\centering
\begin{tabular}{cccc}
    \hspace{-7mm}
    \includegraphics[width=0.249\columnwidth]{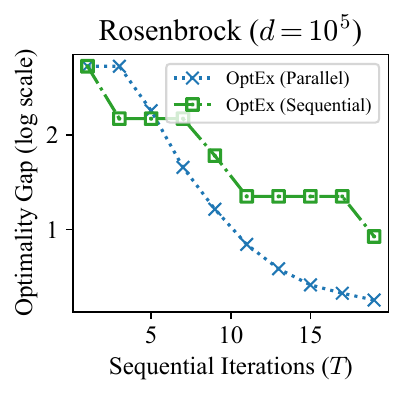} &
    \hspace{-4mm}
    \includegraphics[width=0.249\columnwidth]{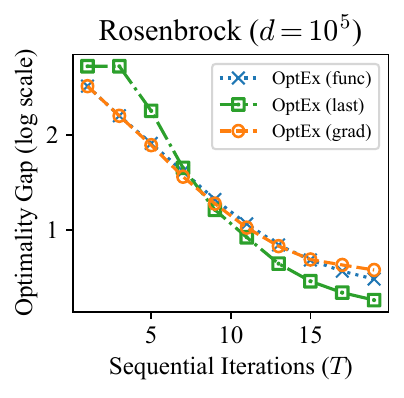}&
    \hspace{-4mm}
    \includegraphics[width=0.25\columnwidth]{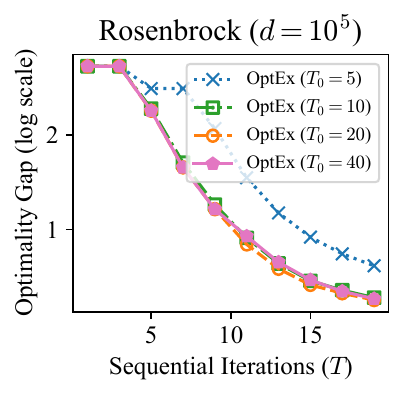} &
    \hspace{-4mm}
    \includegraphics[width=0.25\columnwidth]{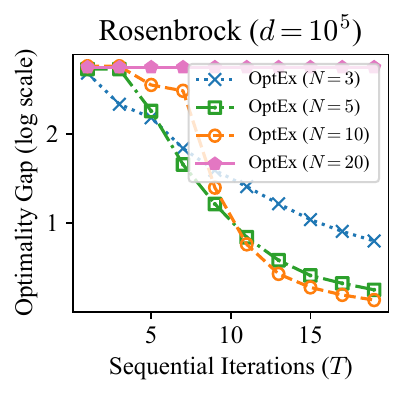} \\
    {\hspace{-4mm}(a) Parallel vs. Sequential} & {(b) Choice of $\vtheta_t$} & {(c) Varying $T_0$} & {(d) Varying $N$} \\
\end{tabular}
\caption{Ablation studies on the Rosenbrock synthetic function.
}
\label{fig:ablation}
\end{figure}

\paragraph{Ablation Studies on Synthetic Function.}

To better understand our \ours{} algorithm, we have conducted a number of ablation studies on the Rosenbrock synthetic function with a dimension of $d=10^5$. The results are in Fig.~\ref{fig:ablation}, in which there are 4 different types of comparisons: (a) We have compared our \ours{} with vs. without evaluating the intermediate gradients, i.e., $\{\nabla f(\vtheta_{t,i-1})\}_{i=1}^{N-1}$ at every iteration $t$, denoting as {\fontfamily{qpl}\selectfont \texttt{parallel}} and {\fontfamily{qpl}\selectfont \texttt{sequential}} respectively in Fig.~\ref{fig:ablation} (a), which aims to show the importance of these intermediate gradients on an accurate gradient estimation and therefore improved convergence of our \ours{} as justified in our Sec.~\ref{sec:parallel-iter}. (b) We have compared our \ours{} using different principles to choose $\vtheta_{t}$ from $\{\vtheta_{t}^{(i)}\}_{i=1}^N$, including using function value (denoted as {\fontfamily{qpl}\selectfont \texttt{func}} in Fig.~\ref{fig:ablation} (b)) via $\vtheta_t = \argmin_{\vtheta \in \{\vtheta_{t}^{(i)}\}_{i=1}^N} f(\vtheta)$, using $\vtheta$ from the process $N$ (denoted as {\fontfamily{qpl}\selectfont \texttt{last}} in Fig.~\ref{fig:ablation} (b), i.e., the standard principle in Algo. \ref{alg:optex}) with $\vtheta_t = \vtheta_t^{(N)}$, and using gradient norm (denoted as {\fontfamily{qpl}\selectfont \texttt{grad}} in Fig.~\ref{fig:ablation} (b)) via $\vtheta_t = \argmin_{\vtheta \in \{\vtheta_{t}^{(i)}\}_{i=1}^N} \left\|\nabla f(\vtheta)\right\|$. (c) We have compared our \ours{} with varying $T_0$ in Fig.~\ref{fig:ablation} (c). (d) We have compared our \ours{} with varying $N$ in Fig.~\ref{fig:ablation} (d). All the other experimental settings follow from the same ones in our Appx.~\ref{sec:app:syn}.

The results presented in Fig.~\ref{fig:ablation} (a) indicate that evaluating intermediate gradients $\left\{\nabla f(\vtheta_{t,i-1})\right\}_{i=1}^{N-1}$ at each iteration $t$ is crucial for achieving better convergence with our \ours{}. This improved performance likely stems from these evaluations being more aligned with the gradient approximations required at point $\vtheta$ in our \ours{}, which is essential to achieve accurate gradient estimation and therefore well-performing convergence in our \ours{}. Consequently, these findings underscore the importance and necessity of line 7 in Algo.~\ref{alg:optex}, as discussed in Sec.~\ref{sec:parallel-iter}.
Further, Fig.~\ref{fig:ablation} (b) shows that utilizing $\vtheta$ from the final process $N$ (denoted as {\fontfamily{qpl}\selectfont \texttt{last}}) where $\vtheta_t = \vtheta_t^{(N)}$, typically results in marginally better convergence. This approach maximizes the benefits of parallelism within $N$ processes, unlike the other methods which often operate under reduced parallelism due to constraints in optimizing $\vtheta_t^{(N)}$.
Additionally, Fig.~\ref{fig:ablation} (c) reveals that maintaining a gradient history length of $T_0 \leq 10$ generally improves convergence. Extending $T_0$ beyond 10, however, does not significantly improve outcomes, which thereby validates our theoretical insights from Sec.~\ref{sec:theory-grad}. 
Finally, Fig.~\ref{fig:ablation} (d) shows that increasing the number of processes when $N \leq 10$ improves the iteration complexity of our \ours{}. However, as $N$ increases to 20, convergence deteriorates. This observation aligns with the theoretical insights in our Sec.~\ref{sec:theory-complexity}, which posits that while increasing $N$ up to an optimal point $N_{\text{opt}} = \Delta \eta^2 / (LT\sigma^2 \rho)$ enhances convergence, further increases can degrade performance.

\begin{figure}[t]
\centering
\begin{tabular}{cccc}
    \hspace{-6mm}
    \includegraphics[width=0.24\columnwidth]{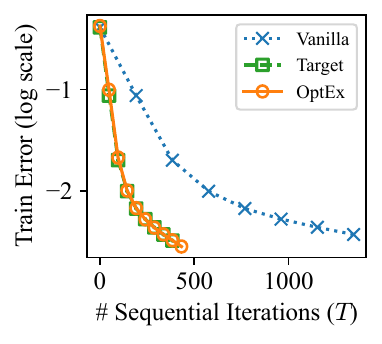} &
    \hspace{-4mm}
    \includegraphics[width=0.24\columnwidth]{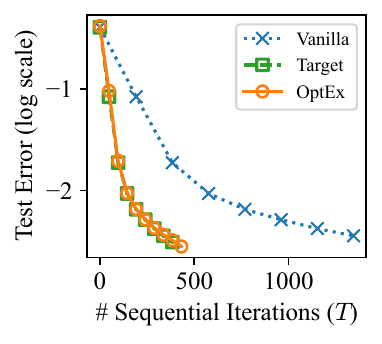}&
    \hspace{-4mm}
    \includegraphics[width=0.254\columnwidth]{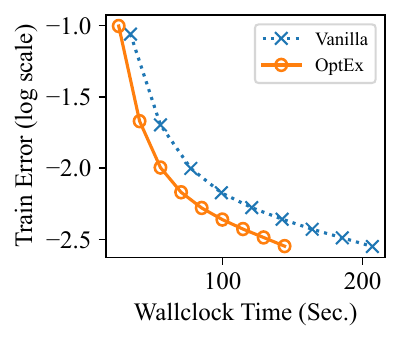} &
    \hspace{-4mm}
    \includegraphics[width=0.255\columnwidth]{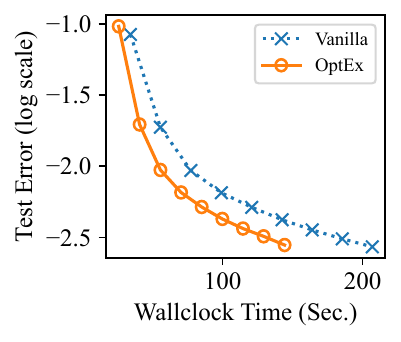} \\
    {} & {\hspace{-35mm} (a) Sequential Iterations} & {} & {\hspace{-35mm} (b) Wallclock Time} \\
\end{tabular}
\caption{Comparison of the train and test error (i.e., 1 - accuracy in log scale for $y$-axis) achieved by different optimizers when training MLP with residual connections on MNIST dataset with (a) a varying number $T$ of sequential iteration and (b) a varying wallclock time ($x$-axis). The parallelism $N$ is set to 4 and each curve denotes the mean from 5 independent runs. The wallclock time is evaluated on an AMD EPYC 7763 CPU.
}
\label{fig:mnist}
\end{figure}

\begin{figure}[t]
\centering
\begin{tabular}{cccc}
    \hspace{-6mm}
    \includegraphics[width=0.246\columnwidth]{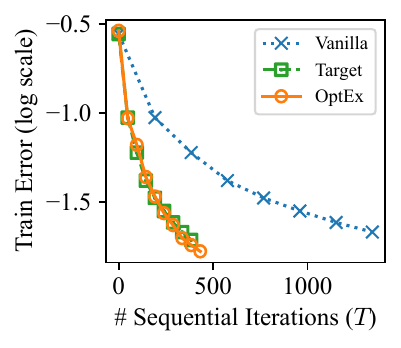} &
    \hspace{-4mm}
    \includegraphics[width=0.246\columnwidth]{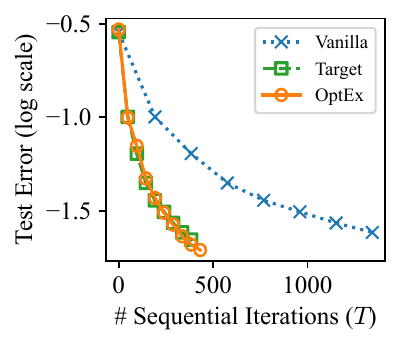}&
    \hspace{-4mm}
    \includegraphics[width=0.254\columnwidth]{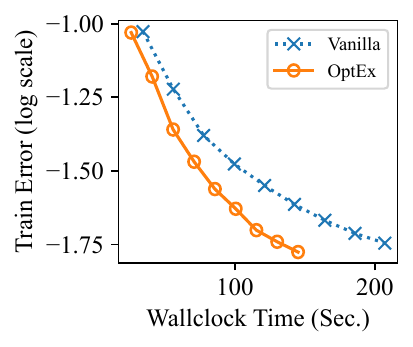} &
    \hspace{-4mm}
    \includegraphics[width=0.245\columnwidth]{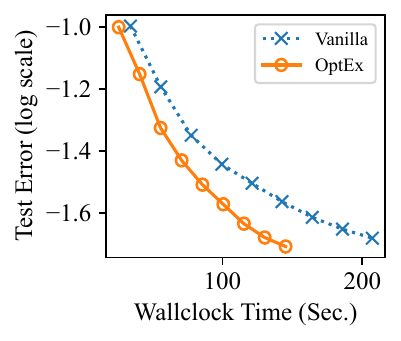} \\
    {} & {\hspace{-35mm} (a) Sequential Iterations} & {} & {\hspace{-35mm} (b) Wallclock Time} \\
\end{tabular}
\caption{Comparison of the train and test error (i.e., 1 - accuracy in log scale for $y$-axis) achieved by different optimizers when training MLP with residual connections on the fashion-MNIST dataset with (a) a varying number $T$ of sequential iteration and (b) a varying wallclock time ($x$-axis). The parallelism $N$ is set to 4 and each curve denotes the mean from 5 independent runs. Similarly, the wallclock time is evaluated on an AMD EPYC 7763 CPU.
}
\label{fig:fashion_mnist}
\end{figure}

\paragraph{Image Classification on MNIST and Fashion-MNIST.} We have also compared the training and test errors achieved by different optimizers when training an MLP with residual connections on the MNIST and Fashion-MNIST datasets. The results in Fig.\ref{fig:mnist} and Fig.\ref{fig:fashion_mnist} indicate that our \ours{} consistently improves over the \van{} baseline and performs comparably to the \targ{} baseline in terms of the number of sequential iteration required to achieve the same level of training or test error. Furthermore, our \ours{} significantly improves the time efficiency of training the MLP on both datasets, as evidenced by the results in Fig.\ref{fig:mnist} and Fig.\ref{fig:fashion_mnist}. These findings hence also validate the efficacy of our \ours{} in accelerating FOO across various image classification tasks.

\begin{figure}[t]
\centering
\begin{tabular}{cccc}
    \hspace{-4.5mm}
    \includegraphics[width=0.246\columnwidth]{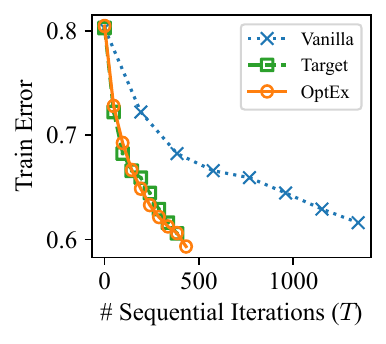} &
    \hspace{-4.5mm}
    \includegraphics[width=0.246\columnwidth]{figs/network/mlp-cifar10-test.pdf}&
    \hspace{-5mm}
    \includegraphics[width=0.255\columnwidth]{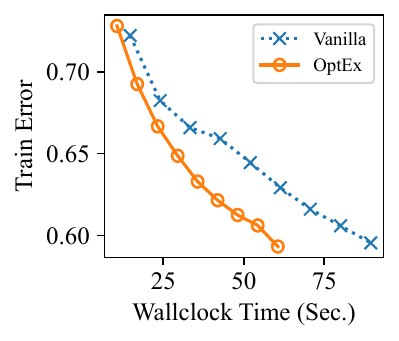} &
    \hspace{-5mm}
    \includegraphics[width=0.255\columnwidth]{figs/network/mlp-cifar10-test-time.pdf} \\
    {} & {\hspace{-35mm} (a) Sequential Iterations} & {} & {\hspace{-35mm} (b) Wallclock Time} \\
\end{tabular}
\caption{Comparison of the train and test error (i.e., 1 - accuracy in log scale for $y$-axis) achieved by different optimizers when training MLP with residual connections on CIFAR-10 dataset with (a) a varying number $T$ of sequential iteration and (b) a varying wallclock time ($x$-axis). The parallelism $N$ is set to 4 and each curve denotes the mean from 5 independent runs. Similarly, the wallclock time is evaluated on a single NVIDIA RTX 4090 GPU.
}
\label{fig:cifar10}
\end{figure}

\paragraph{Image Classification on CIFAR-10.} Besides the test errors presented in Sec. \ref{sec:exp:nn}, we also provide a comprehensive comparison of both training and test errors achieved by different optimizers when training an MLP with residual connections on the CIFAR-10 dataset. This comparison, shown in Fig.\ref{fig:cifar10}, considers varying numbers $T$ of sequential iterations and varying wallclock time. Due to the computational cost of training deep neural networks on CIFAR-10, we evaluate the wallclock time for all optimizers using a single NVIDIA RTX 4090 GPU. Notably, similar to the results in Sec. \ref{sec:exp:nn}, our \ours{} consistently outperforms the \van{} baseline and performs comparably to the \targ{} baseline in both training and test errors. Additionally, our \ours{} considerably enhances the time efficiency of training the MLP on CIFAR-10, as demonstrated by the results in Fig.\ref{fig:cifar10}. These results therefore adequately verify the efficacy of our \ours{} in expediting FOO.

\begin{figure}[t]
\centering
\begin{tabular}{cc}
    \hspace{-4mm}\includegraphics[width=0.3\columnwidth]{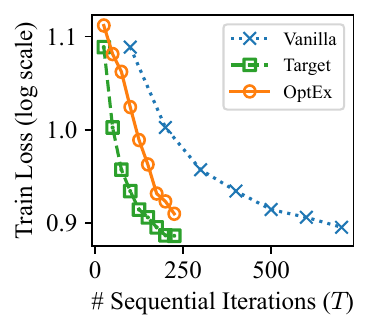}&
    \includegraphics[width=0.3\columnwidth]{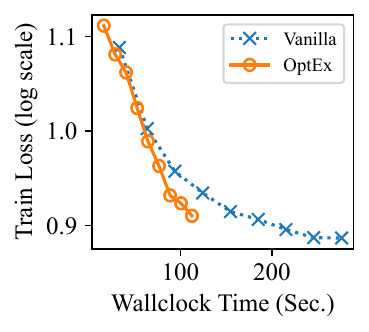} \\
\end{tabular}
\caption{Comparison of the training loss ($y$-axis) achieved by different optimizers when training transformer on the corpus of ``Harry Potter and the Sorcerer's Stone'' with a varying number $T$ of sequential iteration and a varying wallclock time ($x$-axis). The parallelism $N$ is set to 4 and each curve denotes the mean from 3 independent experiments. The wallclock time is evaluated on a single NVIDIA RTX 4090 GPU. 
}
\label{fig:transformer}
\vspace{-5mm}
\end{figure}

\paragraph{Autoregression on Text Corpus.}
In addition to the training results on the Shakespeare corpus, we also present the training loss achieved by different optimizers when training the same transformer on the corpus of ``Harry Potter and the Sorcerer's Stone'' in Fig.\ref{fig:transformer}. Remarkably, Fig.\ref{fig:transformer} shows that our \ours{} still consistently outperforms the \van{} baseline and performs comparably to the \targ{} baseline. Moreover, the results in Fig.~\ref{fig:transformer} demonstrate that our \ours{} significantly enhances the time efficiency of training the transformer on the Harry Potter corpus. These findings thus further validate the efficacy of our \ours{} in accelerating FOO across various types of learning tasks.

\end{appendices}

\newpage
\section*{NeurIPS Paper Checklist}

\begin{enumerate}

\item {\bf Claims}
    \item[] Question: Do the main claims made in the abstract and introduction accurately reflect the paper's contributions and scope?
    \item[] Answer: \answerYes{} 
    \item[] Justification: The main claims provided in the abstract and introduction can reflect the paper's contributions and scope.
    \item[] Guidelines:
    \begin{itemize}
        \item The answer NA means that the abstract and introduction do not include the claims made in the paper.
        \item The abstract and/or introduction should clearly state the claims made, including the contributions made in the paper and important assumptions and limitations. A No or NA answer to this question will not be perceived well by the reviewers. 
        \item The claims made should match theoretical and experimental results, and reflect how much the results can be expected to generalize to other settings. 
        \item It is fine to include aspirational goals as motivation as long as it is clear that these goals are not attained by the paper. 
    \end{itemize}

\item {\bf Limitations}
    \item[] Question: Does the paper discuss the limitations of the work performed by the authors?
    \item[] Answer: \answerYes{} 
    \item[] Justification: The limitation of the work is provided in our Sec. \ref{sec:conclusion}
    \item[] Guidelines:
    \begin{itemize}
        \item The answer NA means that the paper has no limitation while the answer No means that the paper has limitations, but those are not discussed in the paper. 
        \item The authors are encouraged to create a separate "Limitations" section in their paper.
        \item The paper should point out any strong assumptions and how robust the results are to violations of these assumptions (e.g., independence assumptions, noiseless settings, model well-specification, asymptotic approximations only holding locally). The authors should reflect on how these assumptions might be violated in practice and what the implications would be.
        \item The authors should reflect on the scope of the claims made, e.g., if the approach was only tested on a few datasets or with a few runs. In general, empirical results often depend on implicit assumptions, which should be articulated.
        \item The authors should reflect on the factors that influence the performance of the approach. For example, a facial recognition algorithm may perform poorly when image resolution is low or images are taken in low lighting. Or a speech-to-text system might not be used reliably to provide closed captions for online lectures because it fails to handle technical jargon.
        \item The authors should discuss the computational efficiency of the proposed algorithms and how they scale with dataset size.
        \item If applicable, the authors should discuss possible limitations of their approach to address problems of privacy and fairness.
        \item While the authors might fear that complete honesty about limitations might be used by reviewers as grounds for rejection, a worse outcome might be that reviewers discover limitations that aren't acknowledged in the paper. The authors should use their best judgment and recognize that individual actions in favor of transparency play an important role in developing norms that preserve the integrity of the community. Reviewers will be specifically instructed to not penalize honesty concerning limitations.
    \end{itemize}

\item {\bf Theory Assumptions and Proofs}
    \item[] Question: For each theoretical result, does the paper provide the full set of assumptions and a complete (and correct) proof?
    \item[] Answer: \answerYes{} 
    \item[] Justification: The assumptions are summarized in Sec. \ref{sec:theory} and the proofs are provided in the Appx. \ref{sec:proofs}.
    \item[] Guidelines:
    \begin{itemize}
        \item The answer NA means that the paper does not include theoretical results. 
        \item All the theorems, formulas, and proofs in the paper should be numbered and cross-referenced.
        \item All assumptions should be clearly stated or referenced in the statement of any theorems.
        \item The proofs can either appear in the main paper or the supplemental material, but if they appear in the supplemental material, the authors are encouraged to provide a short proof sketch to provide intuition. 
        \item Inversely, any informal proof provided in the core of the paper should be complemented by formal proofs provided in appendix or supplemental material.
        \item Theorems and Lemmas that the proof relies upon should be properly referenced. 
    \end{itemize}

    \item {\bf Experimental Result Reproducibility}
    \item[] Question: Does the paper fully disclose all the information needed to reproduce the main experimental results of the paper to the extent that it affects the main claims and/or conclusions of the paper (regardless of whether the code and data are provided or not)?
    \item[] Answer: \answerYes{} 
    \item[] Justification: All the information is in Appx. \ref{sec:exp-info}.
    \item[] Guidelines:
    \begin{itemize}
        \item The answer NA means that the paper does not include experiments.
        \item If the paper includes experiments, a No answer to this question will not be perceived well by the reviewers: Making the paper reproducible is important, regardless of whether the code and data are provided or not.
        \item If the contribution is a dataset and/or model, the authors should describe the steps taken to make their results reproducible or verifiable. 
        \item Depending on the contribution, reproducibility can be accomplished in various ways. For example, if the contribution is a novel architecture, describing the architecture fully might suffice, or if the contribution is a specific model and empirical evaluation, it may be necessary to either make it possible for others to replicate the model with the same dataset, or provide access to the model. In general. releasing code and data is often one good way to accomplish this, but reproducibility can also be provided via detailed instructions for how to replicate the results, access to a hosted model (e.g., in the case of a large language model), releasing of a model checkpoint, or other means that are appropriate to the research performed.
        \item While NeurIPS does not require releasing code, the conference does require all submissions to provide some reasonable avenue for reproducibility, which may depend on the nature of the contribution. For example
        \begin{enumerate}
            \item If the contribution is primarily a new algorithm, the paper should make it clear how to reproduce that algorithm.
            \item If the contribution is primarily a new model architecture, the paper should describe the architecture clearly and fully.
            \item If the contribution is a new model (e.g., a large language model), then there should either be a way to access this model for reproducing the results or a way to reproduce the model (e.g., with an open-source dataset or instructions for how to construct the dataset).
            \item We recognize that reproducibility may be tricky in some cases, in which case authors are welcome to describe the particular way they provide for reproducibility. In the case of closed-source models, it may be that access to the model is limited in some way (e.g., to registered users), but it should be possible for other researchers to have some path to reproducing or verifying the results.
        \end{enumerate}
    \end{itemize}

\item {\bf Open access to data and code}
    \item[] Question: Does the paper provide open access to the data and code, with sufficient instructions to faithfully reproduce the main experimental results, as described in supplemental material?
    \item[] Answer: \answerYes{} 
    \item[] Justification: The data and code are provided in the supplemental material.
    \item[] Guidelines:
    \begin{itemize}
        \item The answer NA means that paper does not include experiments requiring code.
        \item Please see the NeurIPS code and data submission guidelines (\url{https://nips.cc/public/guides/CodeSubmissionPolicy}) for more details.
        \item While we encourage the release of code and data, we understand that this might not be possible, so “No” is an acceptable answer. Papers cannot be rejected simply for not including code, unless this is central to the contribution (e.g., for a new open-source benchmark).
        \item The instructions should contain the exact command and environment needed to run to reproduce the results. See the NeurIPS code and data submission guidelines (\url{https://nips.cc/public/guides/CodeSubmissionPolicy}) for more details.
        \item The authors should provide instructions on data access and preparation, including how to access the raw data, preprocessed data, intermediate data, and generated data, etc.
        \item The authors should provide scripts to reproduce all experimental results for the new proposed method and baselines. If only a subset of experiments are reproducible, they should state which ones are omitted from the script and why.
        \item At submission time, to preserve anonymity, the authors should release anonymized versions (if applicable).
        \item Providing as much information as possible in supplemental material (appended to the paper) is recommended, but including URLs to data and code is permitted.
    \end{itemize}

\item {\bf Experimental Setting/Details}
    \item[] Question: Does the paper specify all the training and test details (e.g., data splits, hyperparameters, how they were chosen, type of optimizer, etc.) necessary to understand the results?
    \item[] Answer: \answerYes{} 
    \item[] Justification: All the information is in Appx. \ref{sec:exp-info}.
    \item[] Guidelines:
    \begin{itemize}
        \item The answer NA means that the paper does not include experiments.
        \item The experimental setting should be presented in the core of the paper to a level of detail that is necessary to appreciate the results and make sense of them.
        \item The full details can be provided either with the code, in appendix, or as supplemental material.
    \end{itemize}

\item {\bf Experiment Statistical Significance}
    \item[] Question: Does the paper report error bars suitably and correctly defined or other appropriate information about the statistical significance of the experiments?
    \item[] Answer: \answerYes{} 
    \item[] Justification: All the results are reported with mean of multiple independent runs in the figures.
    \item[] Guidelines:
    \begin{itemize}
        \item The answer NA means that the paper does not include experiments.
        \item The authors should answer "Yes" if the results are accompanied by error bars, confidence intervals, or statistical significance tests, at least for the experiments that support the main claims of the paper.
        \item The factors of variability that the error bars are capturing should be clearly stated (for example, train/test split, initialization, random drawing of some parameter, or overall run with given experimental conditions).
        \item The method for calculating the error bars should be explained (closed form formula, call to a library function, bootstrap, etc.)
        \item The assumptions made should be given (e.g., Normally distributed errors).
        \item It should be clear whether the error bar is the standard deviation or the standard error of the mean.
        \item It is OK to report 1-sigma error bars, but one should state it. The authors should preferably report a 2-sigma error bar than state that they have a 96\% CI, if the hypothesis of Normality of errors is not verified.
        \item For asymmetric distributions, the authors should be careful not to show in tables or figures symmetric error bars that would yield results that are out of range (e.g. negative error rates).
        \item If error bars are reported in tables or plots, The authors should explain in the text how they were calculated and reference the corresponding figures or tables in the text.
    \end{itemize}

\item {\bf Experiments Compute Resources}
    \item[] Question: For each experiment, does the paper provide sufficient information on the computer resources (type of compute workers, memory, time of execution) needed to reproduce the experiments?
    \item[] Answer: \answerYes{} 
    \item[] Justification: See our Sec. \ref{sec:exp-info} for the details.
    \item[] Guidelines:
    \begin{itemize}
        \item The answer NA means that the paper does not include experiments.
        \item The paper should indicate the type of compute workers CPU or GPU, internal cluster, or cloud provider, including relevant memory and storage.
        \item The paper should provide the amount of compute required for each of the individual experimental runs as well as estimate the total compute. 
        \item The paper should disclose whether the full research project required more compute than the experiments reported in the paper (e.g., preliminary or failed experiments that didn't make it into the paper). 
    \end{itemize}
    
\item {\bf Code Of Ethics}
    \item[] Question: Does the research conducted in the paper conform, in every respect, with the NeurIPS Code of Ethics \url{https://neurips.cc/public/EthicsGuidelines}?
    \item[] Answer: \answerYes{} 
    \item[] Justification: The research conducted in the paper indeed conforms with the NeurIPS Code of Ethics.
    \item[] Guidelines:
    \begin{itemize}
        \item The answer NA means that the authors have not reviewed the NeurIPS Code of Ethics.
        \item If the authors answer No, they should explain the special circumstances that require a deviation from the Code of Ethics.
        \item The authors should make sure to preserve anonymity (e.g., if there is a special consideration due to laws or regulations in their jurisdiction).
    \end{itemize}

\item {\bf Broader Impacts}
    \item[] Question: Does the paper discuss both potential positive societal impacts and negative societal impacts of the work performed?
    \item[] Answer: \answerNA{} 
    \item[] Justification: We do not see any societal impact of the work performed.
    \item[] Guidelines:
    \begin{itemize}
        \item The answer NA means that there is no societal impact of the work performed.
        \item If the authors answer NA or No, they should explain why their work has no societal impact or why the paper does not address societal impact.
        \item Examples of negative societal impacts include potential malicious or unintended uses (e.g., disinformation, generating fake profiles, surveillance), fairness considerations (e.g., deployment of technologies that could make decisions that unfairly impact specific groups), privacy considerations, and security considerations.
        \item The conference expects that many papers will be foundational research and not tied to particular applications, let alone deployments. However, if there is a direct path to any negative applications, the authors should point it out. For example, it is legitimate to point out that an improvement in the quality of generative models could be used to generate deepfakes for disinformation. On the other hand, it is not needed to point out that a generic algorithm for optimizing neural networks could enable people to train models that generate Deepfakes faster.
        \item The authors should consider possible harms that could arise when the technology is being used as intended and functioning correctly, harms that could arise when the technology is being used as intended but gives incorrect results, and harms following from (intentional or unintentional) misuse of the technology.
        \item If there are negative societal impacts, the authors could also discuss possible mitigation strategies (e.g., gated release of models, providing defenses in addition to attacks, mechanisms for monitoring misuse, mechanisms to monitor how a system learns from feedback over time, improving the efficiency and accessibility of ML).
    \end{itemize}
    
\item {\bf Safeguards}
    \item[] Question: Does the paper describe safeguards that have been put in place for responsible release of data or models that have a high risk for misuse (e.g., pretrained language models, image generators, or scraped datasets)?
    \item[] Answer: \answerNA{} 
    \item[] Justification: This paper poses no such risks.
    \item[] Guidelines:
    \begin{itemize}
        \item The answer NA means that the paper poses no such risks.
        \item Released models that have a high risk for misuse or dual-use should be released with necessary safeguards to allow for controlled use of the model, for example by requiring that users adhere to usage guidelines or restrictions to access the model or implementing safety filters. 
        \item Datasets that have been scraped from the Internet could pose safety risks. The authors should describe how they avoided releasing unsafe images.
        \item We recognize that providing effective safeguards is challenging, and many papers do not require this, but we encourage authors to take this into account and make a best faith effort.
    \end{itemize}

\item {\bf Licenses for existing assets}
    \item[] Question: Are the creators or original owners of assets (e.g., code, data, models), used in the paper, properly credited and are the license and terms of use explicitly mentioned and properly respected?
    \item[] Answer: \answerYes{} 
    \item[] Justification: All the data and codes used in the paper are open-sourced.
    \item[] Guidelines:
    \begin{itemize}
        \item The answer NA means that the paper does not use existing assets.
        \item The authors should cite the original paper that produced the code package or dataset.
        \item The authors should state which version of the asset is used and, if possible, include a URL.
        \item The name of the license (e.g., CC-BY 4.0) should be included for each asset.
        \item For scraped data from a particular source (e.g., website), the copyright and terms of service of that source should be provided.
        \item If assets are released, the license, copyright information, and terms of use in the package should be provided. For popular datasets, \url{paperswithcode.com/datasets} has curated licenses for some datasets. Their licensing guide can help determine the license of a dataset.
        \item For existing datasets that are re-packaged, both the original license and the license of the derived asset (if it has changed) should be provided.
        \item If this information is not available online, the authors are encouraged to reach out to the asset's creators.
    \end{itemize}

\item {\bf New Assets}
    \item[] Question: Are new assets introduced in the paper well documented and is the documentation provided alongside the assets?
    \item[] Answer: \answerNA{} 
    \item[] Justification: This paper does not release new assets
    \item[] Guidelines:
    \begin{itemize}
        \item The answer NA means that the paper does not release new assets.
        \item Researchers should communicate the details of the dataset/code/model as part of their submissions via structured templates. This includes details about training, license, limitations, etc. 
        \item The paper should discuss whether and how consent was obtained from people whose asset is used.
        \item At submission time, remember to anonymize your assets (if applicable). You can either create an anonymized URL or include an anonymized zip file.
    \end{itemize}

\item {\bf Crowdsourcing and Research with Human Subjects}
    \item[] Question: For crowdsourcing experiments and research with human subjects, does the paper include the full text of instructions given to participants and screenshots, if applicable, as well as details about compensation (if any)? 
    \item[] Answer: \answerNA{} 
    \item[] Justification: This paper does not involve crowdsourcing nor research with human subjects.
    \item[] Guidelines:
    \begin{itemize}
        \item The answer NA means that the paper does not involve crowdsourcing nor research with human subjects.
        \item Including this information in the supplemental material is fine, but if the main contribution of the paper involves human subjects, then as much detail as possible should be included in the main paper. 
        \item According to the NeurIPS Code of Ethics, workers involved in data collection, curation, or other labor should be paid at least the minimum wage in the country of the data collector. 
    \end{itemize}

\item {\bf Institutional Review Board (IRB) Approvals or Equivalent for Research with Human Subjects}
    \item[] Question: Does the paper describe potential risks incurred by study participants, whether such risks were disclosed to the subjects, and whether Institutional Review Board (IRB) approvals (or an equivalent approval/review based on the requirements of your country or institution) were obtained?
    \item[] Answer: \answerNA{} 
    \item[] Justification: This paper does not involve crowdsourcing nor research with human subjects.
    \item[] Guidelines:
    \begin{itemize}
        \item The answer NA means that the paper does not involve crowdsourcing nor research with human subjects.
        \item Depending on the country in which research is conducted, IRB approval (or equivalent) may be required for any human subjects research. If you obtained IRB approval, you should clearly state this in the paper. 
        \item We recognize that the procedures for this may vary significantly between institutions and locations, and we expect authors to adhere to the NeurIPS Code of Ethics and the guidelines for their institution. 
        \item For initial submissions, do not include any information that would break anonymity (if applicable), such as the institution conducting the review.
    \end{itemize}

\end{enumerate}

\end{document}